\documentclass[12pt,onecolumn]{article}
\usepackage[margin=0.75in]{geometry}

%
%


\usepackage[round]{natbib}

\bibliographystyle{unsrtnat}

\usepackage{booktabs}
\usepackage{graphicx}
\usepackage{float}
\usepackage{subcaption}
\usepackage{enumitem}
\usepackage[toc,page]{appendix}
\usepackage{bbm}
\usepackage{color}
\usepackage{comment}
\usepackage{amssymb}
\usepackage{amsmath}
\usepackage{amsthm}
\usepackage{bbm,bm}

\makeatletter
\newtheorem*{rep@theorem}{\rep@title}
\newcommand{\newreptheorem}[2]{%
\newenvironment{rep#1}[1]{%
 \def\rep@title{#2 \ref{##1}}%
 \begin{rep@theorem}}%
 {\end{rep@theorem}}}
\makeatother
\newtheorem{theorem}{Theorem}
\newreptheorem{theorem}{Theorem}
\newtheorem{lemma}{Lemma}
\newreptheorem{lemma}{Lemma}
\newtheorem{prop}{Proposition}
\newreptheorem{prop}{Proposition}

\newreptheorem{corollary}{Corollary}
\newtheorem{definition}{Definition}

\DeclareMathOperator*{\argmax}{arg\,max}

\usepackage{stfloats}
\usepackage{multirow}
\usepackage{multicol} 
\usepackage{algorithmic}
\usepackage{algorithm}
\DeclareUnicodeCharacter{00A0}{ }
\usepackage{soul}

\usepackage[skins,theorems]{tcolorbox}
\tcbset{highlight math style={enhanced,
  colframe=red,colback=white,arc=0pt,boxrule=1pt}}

\newcommand{\negspace}{\vspace{-0.05cm}}
\newcommand{\bignegspace}{\vspace{-0.3cm}}

\usepackage{mathtools}

\begin{document}

%

%


\title{Smoothly Giving up: Robustness for Simple Models}

\author{%
Tyler Sypherd$^{1}$, Nathan Stromberg$^{1}$,  Richard Nock$^{2}$, \\ Visar Berisha$^{1}$, and Lalitha Sankar$^{1}$ \\
$^{1}$Arizona State University \quad $^{2}$Google Research \\
}

\date{}
\maketitle

\begin{abstract}
There is a growing need for models that are interpretable and have reduced energy and computational cost (e.g., in health care analytics and federated learning). 
Examples of algorithms to train such models include logistic regression and boosting. 
However, one challenge facing these algorithms is that they provably suffer from label noise; this has been attributed to the joint interaction between oft-used convex loss functions and simpler hypothesis classes, resulting in too much emphasis being placed on outliers.
In this work, we use the margin-based $\alpha$-loss, which continuously tunes between canonical convex and quasi-convex losses, to robustly train simple models.
We show that the $\alpha$ hyperparameter smoothly introduces non-convexity and offers the benefit of ``giving up'' on noisy training examples. 
We also provide results on the Long-Servedio dataset for boosting and a COVID-19 survey dataset for logistic regression, highlighting the efficacy of our approach across multiple relevant domains.
\end{abstract}

\section{INTRODUCTION} \label{sec:intro}
In several critical infrastructure applications, simple models are favored over complex models.
In health care analytics, simple models are typically preferred for their interpretability so that practitioners can
audit the correlations the model uses for
decision making~\citep{rudin_2019,Caruana2015,pmlr-v139-nori21a,Caruana2021ebm}. 
In federated learning, simple models can be preferred for computational and energy efficiency, since edge devices are heterogeneous~\citep{kairouz2019advances,viola2001rapid}. 
Examples of learning algorithms that train simple models include logistic regression and boosting, particularly when the weak learner of the boosting algorithm is \textit{weaker} (e.g., decision/regression trees with low maximum depth).

While simple models may offer more interpretability or energy efficiency, 
they are known to suffer, provably, from label noise~\citep{ben2012minimizing,pmlr-v162-ji22a,rolnick2017deep}.
Indeed,~\cite{long2010random} showed that boosting algorithms that minimize convex losses over linear weak learners can achieve fair coin test accuracy after being trained with an arbitrarily small amount of (symmetric) label noise. 
In essence,~\cite{long2010random} construct a pathological dataset which exploits the sensitivity of linear classifiers and the \textit{inability} of convex losses to ``give up'' on noisy training examples, even if the convex boosting algorithm is regularized or stopped early.
\begin{figure}[h]
    \centerline{\includegraphics[trim={0.2cm .2cm .2cm 0.2cm},clip,width=.75\linewidth]{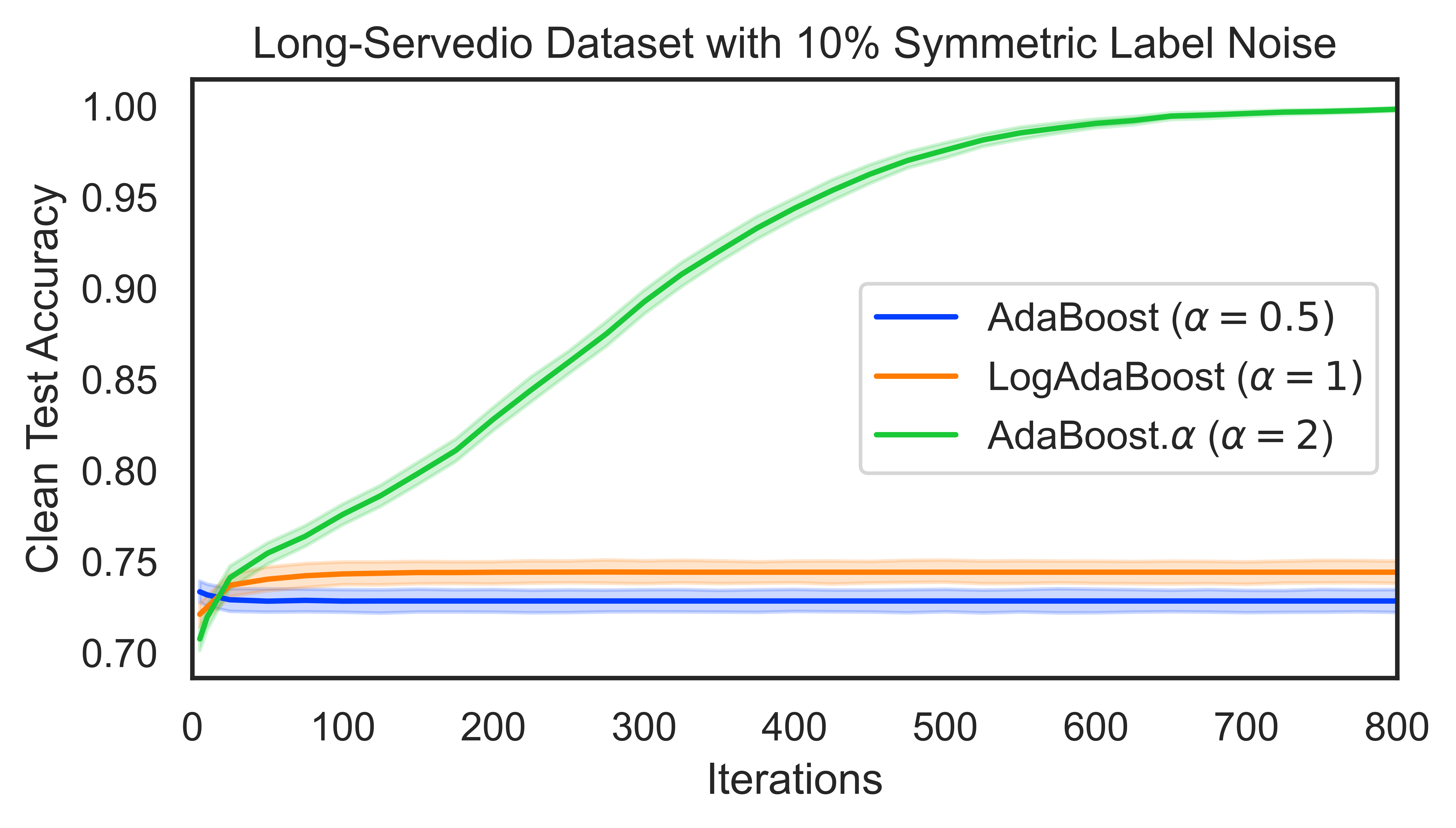}}
    \caption{Quasi-convex $\alpha$-loss booster ($\alpha = 2$) vs.~convex boosters ($\alpha \leq 1$) on decision stumps for the Long-Servedio dataset. Full version and details in Section~\ref{sec:experiments}.}
    \label{fig:ls_accuracy_iterations_shortversion}
\end{figure}
Recent work argues that the negative result of~\cite{long2010random} could perhaps be circumvented by increasing the complexity of the weak learner~\citep{nockLS2022}, however, there are certain benefits for utilizing simple models. 
Thus, one remaining degree of freedom to robustly train a simple model is by tuning the loss function itself. 
To this end, we use the recently introduced margin-based $\alpha$-loss, which smoothly tunes through the exponential ($\alpha = 1/2$), logistic ($\alpha = 1$), and sigmoid ($\alpha = \infty$) losses~\citep{sypherd2022journal}.
The $\alpha$ hyperparameter controls the convexity of the loss, since for $0 < \alpha \leq 1$ the loss is convex, and for $\alpha > 1$ the loss is quasi-convex.
We show that tuning $\alpha > 1$ allows the loss to ``give up'', which refers to how it evaluates large negative margins (preview Figure~\ref{Fig:marginalphalossplusder} and see the exponential vs.~sigmoid losses).
Hence, ``giving up'' on noisy training examples reduces the sensitivity of a simple hypothesis class (see Figure~\ref{fig:ls_accuracy_iterations_shortversion}). 
%
%
%
%


Our contributions are as follows:
\bignegspace
\begin{enumerate}
    \item In Theorem~\ref{thm:LSmarginalphalossrobust}, we show that there exist robust solutions of the margin-based $\alpha$-loss for $\alpha > 1$ to the problem of~\cite{long2010random}; we verify this result with simulation (Figure~\ref{Fig:LSclassificationlines}) and experimental results (Section~\ref{sec:boostingexperiments}), where we show increased gains when the maximum depth of the (decision/regression) tree weak learner is restricted, i.e., for simpler models.
    \negspace
    \negspace
    \item Building on the results in 1, we present a novel boosting algorithm (Algorithm~\ref{algo:adaboost.alpha} in Section~\ref{sec:adaboost.alpha}), called AdaBoost.$\alpha$, that may be of independent interest. 
    The novelty of AdaBoost.$\alpha$ is that it smoothly tunes through vanilla AdaBoost (minimizing the exponential loss, $\alpha = 1/2$), LogAdaBoost (minimizing the logistic loss, $\alpha = 1$)~\citep{schapire2013boosting}, to non-convex ``AdaBoost-type'' algorithms for $\alpha > 1$, all with the single $\alpha$ hyperparameter. 
    \negspace
    \negspace
    \item Noticing that the boosting setup of~\cite{long2010random} ultimately reduces to a two-dimensional linear problem, we theoretically demonstrate robustness of the margin-based $\alpha$-loss for $\alpha > 1$ under linear models of \textit{arbitrary} dimensions with an upperbound (Theorem~\ref{thm:taylorlagrangeupperbound}) and dominating terms also appearing in a lowerbound (Theorem~\ref{thm:lowerbound}).  
    In essence, we provide guarantees on the quality of optima, showing with upper and lower bounds on the noisy gradient that $\alpha > 1$ is better for ``good solutions'' than $\alpha \leq 1$.
    \negspace
    \negspace
    \item Finally, in Section~\ref{sec:logisticexperiments}, we report experimental results on the logistic model for a synthetic Gaussian Mixture Model (GMM) and a COVID-19 survey dataset~\citep{Salomone2111454118}. In particular, we show that $\alpha > 1$ is able to preserve the interpretability of the linear model for the COVID-19 data, while also providing robustness to label noise. In addition, we provide straightforward heuristics for tuning $\alpha$.
\end{enumerate}
\bignegspace



\subsection{Related Work} \label{sec:relatedwork}
%
\textbf{Convex and Non-Convex Losses}
While a small amount of carefully introduced label noise has been observed to improve the generalization capabilities of a model~\citep{pmlr-v108-li20e}, in general label noise during training is very detrimental to learning and thus represents an important problem for the community~\citep{frenay2013classification,10.1158/1055-9965.EPI-07-2629,10.1093/ntr/ntn010}.
In an effort to address this, many works propose reweighting/augmenting/regularizing/tuning \textit{convex} loss functions to train robust models~\citep{natarajan2013learning,ma2020normalized,liu2020peer,ghosh2017robust,patrini2017making,lee2006efficient,lin2017focal,leng2022polyloss}.
Other approaches include abstention~\citep{thulasidasan2019combating,ziyin2020learning,cortes2016boosting} and early stopping~\citep{bai2021understanding}, however, both techniques also typically revolve around a convex loss. 

Despite the fact that providing strong optimization guarantees for \textit{non-convex} losses is nontrivial~\citep{mei2018landscape}, 
non-convex loss functions (satisfying certain basic conditions, e.g., differentiability, classification-calibration~\citep{lin2004note,bartlett2006convexity}) have been observed to provide superior robustness over convex losses~\citep{beigman2009learning,manwani2013noise,nguyen2013algorithms,barron2019general,zhang2018generalized,zhao2010convex,sypherd2019tunable,chapelle2008tighter,wu2007robust,cheamanunkul2014non,masnadi2009design}.
Intuitively, non-convex loss functions seem to have a sophisticated regularization ability where they \textit{implicitly} assign less weight to misclassified training examples, and thus algorithms optimizing such losses are often less perturbed by outliers during training.
This contrasts with another set of approaches~\citep{lee2016label,yao2021instance,maas2019label,bootkrajang2012label,lee2016label} which seek to \textit{explicitly} estimate the noise transition matrix, sometimes requiring many parameters to do so.

\textbf{$\alpha$-loss}
The $\alpha$-loss, where $\alpha \in (0,\infty]$, arose in information theory~\citep{liao2018tunable,arimoto1971information}, and was recently introduced to ML~\citep{sypherd2019tunable}.
It smoothly tunes through several important losses (exponential for $\alpha = 1/2$, log for $\alpha = 1$, $0$-$1$ for $\alpha = \infty$), and has statistical, optimization, and generalization tradeoffs dependent on $\alpha$~\citep{sypherd2022journal}. 
Indeed, for shallow CNNs the $\alpha$-loss is more robust for $\alpha > 1$, however, the loss becomes increasingly more non-convex as $\alpha$ increases greater than $1$ (although there is a saturation effect), hence creating an optimization/robustness tradeoff~\citep{sypherd2020alpha}.
The $\alpha$-loss is equivalent (under appropriate hyperparameter restriction) to the Generalized Cross Entropy loss~\citep{zhang2018generalized}, which was motivated by the Box-Cox transformation in statistics~\citep{box1964analysis}.
Also, the $\alpha$-loss was recently shown to satisfy a statistical notion of robustness for loss functions in the class probability estimation setting~\citep{sypherdicml2022}.
More broadly, $\alpha$-loss and related quantities have been used in Generative Adversarial Networks~\citep{kurri2021realizing,kurri2022alphagan} and in robust Bayesian posterior estimation~\citep{zecchin2022robust}. 

\textbf{Convex and Non-Convex Boosting}
AdaBoost~\citep{FREUND1997119} (which minimizes the exponential loss~\citep{schapire2013boosting}) is the groundbreaking convex boosting algorithm. 
Later, the LogAdaBoost (which minimizes the logistic loss) was proposed as a more robust convex variant~\citep{collins2002logistic,mcdonald2003empirical}. 
Indeed, a SOTA boosting algorithm, XGBoost, minimizes (an approximated) logistic loss, rather than the exponential loss~\citep{chen2016xgboost}.~\cite{sypherdicml2022} recently introduced a novel boosting algorithm called PILBoost, which minimizes a \textit{convex} (proper) surrogate approximation of the $\alpha$-loss~\citep{nwLO,reid2010composite}, and presented experimental results on the robustness of PILBoost. 

However, the seminal work of~\cite{long2010random} showed that convex boosters provably suffer from label noise, particularly for simple weak learners~\citep{nockLS2022}.~\cite{van2015learning} proposed relaxing the nonnegativity condition of the convex loss in order to yield robustness, but it seems that this is unable to completely fix the problem~\citep{long2022perils}.
For this reason, non-convex boosting algorithms have been considered before~\citep{masnadi2009design,cheamanunkul2014non,miao2015rboost}, but there remains a large gap between the convex and non-convex realms.
Therefore, we propose directly using the margin-based $\alpha$-loss (rather than a ``proper'' convex approximation as in PILBoost), which smoothly tunes through several canonical convex and quasi-convex losses,
for boosting. 
Our AdaBoost.$\alpha$ (generalizing vanilla AdaBoost with $\alpha = 1/2$ and LogAdaBoost with $\alpha = 1$), ``gives up'' on noisy outliers during training (for $\alpha > 1$), thus allowing practitioners to continue using simpler models (e.g., for interpretability or energy efficiency) in noisy settings.


\section{PRELIMINARIES} \label{sec:prelim}

\subsection{Margin-Based $\alpha$-loss} \label{sec:marginbasedalphaloss}

We consider the setting of binary classification. 
The learner ideally wants to output a classifier $\overline{H}: \mathcal{X} \rightarrow \{-1,+1\}$ 
that  minimizes the probability of error, the expectation of the $0$-$1$ loss, however, this is NP-hard~\citep{ben2003difficulty}. 
Thus, the problem is relaxed by optimizing a surrogate to the $0$-$1$ loss over functions $H:\mathcal{X}\rightarrow \mathbb{R}$, whose output captures the certainty of prediction of the binary label $Y \in \{-1,1\}$ associated with the feature vector $X \in \mathcal{X}$~\citep{bartlett2006convexity}. 
The classifier is obtained by making a hard decision, i.e., $\overline{H}(X) = \text{sign}({H(X)})$. 
A surrogate loss is said to be margin-based if, the loss associated to a pair $(y,H(x))$ is given by $\tilde{l}(yH(x))$ for $\tilde{l}: \mathbb{R} \to\mathbb{R}_{+}$~\citep{lin2004note}. 
The loss of the pair $(y,H(x))$ only depends on the product $z:=yH(x)$, i.e., the (unnormalized) margin~\citep{schapire2013boosting}. A negative margin corresponds to a mismatch between the signs of $H(x)$ and $y$, i.e., a classification error by $H$; a positive margin corresponds to a correct classification by $H$.  

Since probabilities are typically the inputs to loss functions (e.g., log-loss, Matusita's loss~\citep{matusita1956decision}, $\alpha$-loss~\citep{sypherd2019tunable}), 
an important function we use is the sigmoid function $\sigma:\mathbb{R}\to[0,1]$, given by
\begin{equation}
\label{eq:DefSigmoid}
    \sigma(z) := \frac{1}{1+e^{-z}},
\end{equation}
where $z:=yH(x)$ is the margin.
The sigmoid smoothly maps real-valued predictions $H: \mathcal{X} \rightarrow \mathbb{R}$ to probabilities, and in the multiclass setting, the sigmoid is generalized by the softmax function~\citep{goodfellow2016deep}. 
Note that the inverse of $\sigma$ is the logit link~\citep{reid2010composite}.
Noticing that $\sigma(-z) = 1-\sigma(z)$, we have that 
\begin{align} \label{eq:derivativepropertysigmoid}
\sigma'(z) := \dfrac{d}{dz} \sigma(z) = \sigma(z)\sigma(-z) = \frac{e^{z}}{(1+e^{z})^{2}},
\end{align}
and note that $\sigma'$ is an even function.

\begin{figure} 
\centerline{\includegraphics[trim={3.75cm .65cm 3.5cm 1cm},clip,width=.75\linewidth]{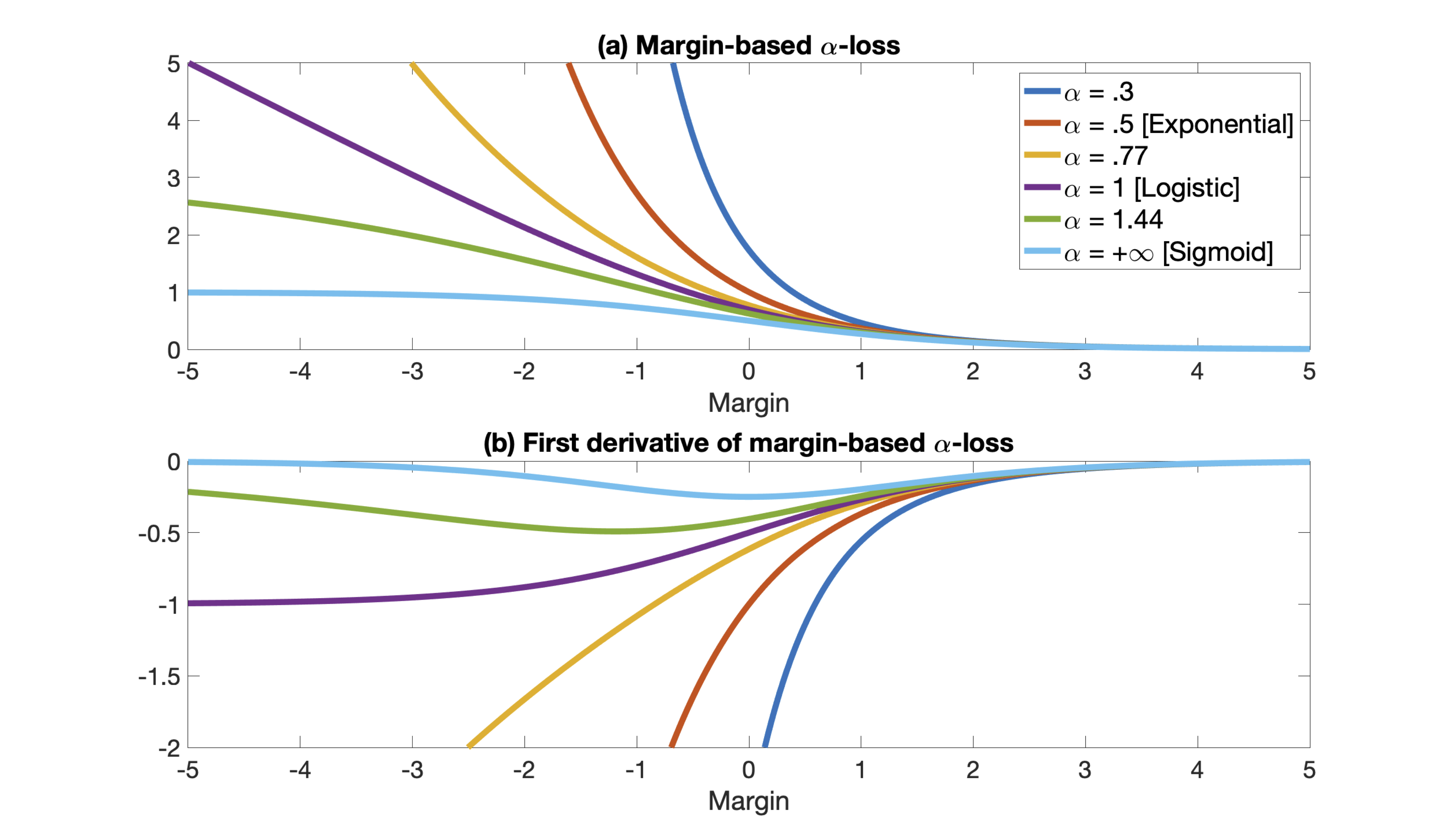}}
\caption{\textbf{(a)} Margin-based $\alpha$-loss~\eqref{eq:marginalphaloss} as a function of the margin ($z := yH(x)$) for $\alpha \in \{0.3,0.5,0.77,1,1.44,\infty\}$; \textbf{(b)} its first derivative (see Lemma~\ref{lem:derivativesmarginalphaloss} in Appendix~\ref{appendix:proofs}) with respect to the margin for the same set of $\alpha$. 
The ``giving up'' ability of the margin-based $\alpha$-loss for $\alpha > 1$ can be seen from its first derivative, where it is more constrained (than $\alpha \leq 1$) for large negative values of the margin.
}
\label{Fig:marginalphalossplusder}
\end{figure}

We now provide the definition of the margin-based $\alpha$-loss, which was first presented in~\citep{sypherd2019tunable} for $\alpha \in [1,\infty]$ and 
extended to $\alpha \in (0,\infty]$~\citep{sypherd2022journal}.
\begin{definition}
\label{def:MarginLoss}
The margin-based $\alpha$-loss, $\tilde{l}^{\alpha}: \mathbb{R} \to \mathbb{R}_{+}$, $\alpha \in (0,\infty]$, is given by, for $\alpha \in (0,1) \cup (1,\infty)$,
\begin{equation}
\label{eq:marginalphaloss}
\tilde{l}^{\alpha}(z) := \frac{\alpha}{\alpha - 1}\left(1 - \sigma(z)^{1 - 1/\alpha}\right),
\end{equation}
with $\tilde{l}^{1}(z) := -\log\sigma(z)$ and $\tilde{l}^{\infty}(z) := 1 - \sigma(z)$ by continuous extension, and note that $\tilde{l}^{1/2}(z) := e^{-z}$.
\end{definition}
Indeed, $\tilde{l}^{1/2}$, $\tilde{l}^{1}$, and $\tilde{l}^{\infty}$ recover the exponential (AdaBoost), logistic (logistic regression), and sigmoid (smooth $0$-$1$) losses, respectively~\citep{shalev2014understanding}; see Figure~\ref{Fig:marginalphalossplusder}(a) for a plot of $\tilde{l}^{\alpha}$ for several values of $\alpha$ versus the margin.
Note that for fixed $z \in \mathbb{R}$, $\tilde{l}^{\alpha}(z)$ is continuous in $\alpha$.~\cite{sypherd2022journal} showed that the margin-based $\alpha$-loss is classification-calibrated for all $\alpha \in (0,\infty]$~\citep{bartlett2006convexity}.
Thus, tuning the single $\alpha$ hyperparameter allows continuous interpolation through calibrated, important loss functions, however, different regimes of $\alpha$ have differing robustness properties.
To this end,~\cite{sypherd2022journal} presented the following result regarding the convexity characteristics of $\tilde{l}^{\alpha}$.
%
%
%
\begin{prop} \label{Prop:alpha-loss-convex}
$\tilde{l}^{\alpha}:\mathbb{R}\to\mathbb{R}_+$ is convex for $0 < \alpha \leq 1$ and quasi-convex for $\alpha > 1$. 
\end{prop}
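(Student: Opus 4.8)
The plan is to handle the two regimes with a single pair of derivative computations, exploiting that \emph{every} monotone function on an interval is quasi-convex, so that the genuinely analytic work is confined to the convex regime $0 < \alpha \le 1$.

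First I would compute the first derivative of $\tilde{l}^{\alpha}$ directly from~\eqref{eq:marginalphaloss} by the chain rule, writing $u = \sigma(z)$ and using $\sigma'(z) = \sigma(z)\sigma(-z)$ from~\eqref{eq:derivativepropertysigmoid}. The key observation is the cancellation $\tfrac{\alpha}{\alpha-1}\bigl(1 - \tfrac{1}{\alpha}\bigr) = 1$, which collapses the prefactor and yields the clean form
\begin{equation}
(\tilde{l}^{\alpha})'(z) = -\,\sigma(z)^{1 - 1/\alpha}\,\sigma(-z).
\end{equation}
Since $\sigma > 0$ and $\sigma(-z) > 0$ everywhere, this is strictly negative for every $z \in \mathbb{R}$ and every $\alpha$; that is, $\tilde{l}^{\alpha}$ is strictly decreasing on $\mathbb{R}$. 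A strictly monotone function on an interval has interval (hence convex) sublevel sets, so it is quasi-convex, and this immediately disposes of the claim for $\alpha > 1$ (indeed for all $\alpha$).

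For the convex regime I would pass to the second derivative. Differentiating the displayed first derivative again via $\tfrac{d}{dz} = u(1-u)\tfrac{d}{du}$ with $u = \sigma(z)$, and writing $\beta := 1 - 1/\alpha$, I expect the factorization
\begin{equation}
(\tilde{l}^{\alpha})''(z) = u^{\beta}(1-u)\bigl[(\beta+1)u - \beta\bigr],
\end{equation}
in which $u^{\beta}(1-u) > 0$ for $u \in (0,1)$. Thus the sign is governed entirely by the affine-in-$u$ factor $g(u) := (\beta+1)u - \beta = (2 - 1/\alpha)u - (1 - 1/\alpha)$. One checks $g(1) = 1 > 0$ always, while $g(0) = 1/\alpha - 1 \ge 0$ exactly when $\alpha \le 1$. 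Because $g$ is affine, nonnegativity at the two endpoints $u = 0$ and $u = 1$ forces $g \ge 0$ on all of $[0,1]$, giving $(\tilde{l}^{\alpha})'' \ge 0$ and hence convexity for $0 < \alpha \le 1$. For $\alpha > 1$ one sees $g(0) < 0$, so convexity genuinely fails for large negative margins, consistent with the ``giving up'' picture in Figure~\ref{Fig:marginalphalossplusder}.

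The main obstacle is the bookkeeping at the boundary values $\alpha \in \{1/2, 1, \infty\}$, where $\tilde{l}^{\alpha}$ is given by continuous extension rather than directly by~\eqref{eq:marginalphaloss}. I would handle these either by verifying the derivative formulas on the closed forms $e^{-z}$, $-\log\sigma(z)$, and $1-\sigma(z)$ — each reproducing the factorizations above with $\beta = -1, 0, 1$ respectively — or by invoking continuity of $\tilde{l}^{\alpha}(z)$ in $\alpha$ to carry the sign conclusions to the limits. With that check in place, the two derivative arguments cover the entire range $\alpha \in (0,\infty]$ and complete the proof.
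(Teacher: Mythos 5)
Your proof is correct. One thing to be aware of: the paper does not actually prove Proposition~\ref{Prop:alpha-loss-convex} — it imports the statement from \cite{sypherd2022journal} — so there is no in-paper argument to compare against. That said, your computations are fully consistent with what the paper does record: the first derivative $-\sigma(z)^{1-1/\alpha}\sigma(-z)$ matches $-\sigma'(z)\sigma(z)^{-1/\alpha}$ in Lemma~\ref{lem:derivativesmarginalphaloss}, your factorization $u^{\beta}(1-u)\bigl[(\beta+1)u-\beta\bigr]$ with $u=\sigma(z)$ and $\beta=1-1/\alpha$ simplifies to the expression in~\eqref{eq:2nddermaralphaloss}, and your appeal to the fact that monotone functions are quasi-convex is precisely the observation the paper makes in the remark immediately after the proposition. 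The sign analysis is complete: $g(u)=(2-1/\alpha)u-(1-1/\alpha)$ is affine with $g(1)=1>0$ and $g(0)=1/\alpha-1$, so nonnegativity on $[0,1]$ holds exactly when $\alpha\le 1$ regardless of the sign of the slope (which covers $\alpha<1/2$, where the slope is negative), and the failure of convexity for $\alpha>1$ near $u=0$ is correctly identified. The boundary cases are handled appropriately; the only trivial imprecision is that $\alpha=1/2$ is not a continuous extension — it is covered directly by~\eqref{eq:marginalphaloss} — but your verification on the closed form $e^{-z}$ is harmless and consistent with $\beta=-1$.
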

Recall that a function $f: \mathbb{R} \rightarrow \mathbb{R}$ is quasi-convex if, for all $x,y \in \mathbb{R}$ and $\lambda \in [0,1]$, $f(\lambda x + (1-\lambda)y) \leq \max{\{f(x),f(y)\}}$, and also that any monotonic function is quasi-convex (cf.~\citep{boyd2004convex}). 

In light of Proposition~\ref{Prop:alpha-loss-convex}, consider Figure~\ref{Fig:marginalphalossplusder}(a) for $\alpha = 1/2$ (convex) and $\alpha = 1.44$ (quasi-convex), and suppose for concreteness that $z_{1} = -1$ and $z_{2} = -5$. 
The difference in loss evaluations for these two negative values of the margin, which are representative of misclassified training examples, is approximately exponential vs.~sub-linear; this is similarly observed in Figure~\ref{Fig:marginalphalossplusder}(b) with the first derivative of $\tilde{l}^{\alpha}$ (see Lemma~\ref{lem:derivativesmarginalphaloss} in Appendix~\ref{appendix:proofs}).
Intuitively, if a training example is not fit well by the currently learned parameter values, then its margin will be (large and) negative and it will incur more derivative update; if such a training example is noisy, convex losses (e.g., $\alpha \leq 1$) encourage the algorithm to continue fitting the bad example, whereas non-convex losses (e.g., $\alpha > 1$) would instead allow the algorithm to ``give up''.
This tendency of convex losses could be exacerbated for simpler models because they can suffer significant perturbation by label noise (preview Figure~\ref{Fig:LSclassificationlines}) vs.~more nuanced function classes~\citep{rolnick2017deep}.  

\subsection{Boosting Setup}
For the boosting context,
we assume access to a training sample $\mathcal{S} := \{(x^i, y^i), i \in [m]\} \subset \mathcal{X} \times \{-1,+1\}$ of $m$ examples, where $[m] := \{1, 2, ..., m\}$. 
Following the functional gradient perspective of boosting (i.e., the blueprint of~\citep{fGFA}), 
the boosting algorithm minimizes a margin-based loss $\tilde{l}$ with respect to $\mathcal{S}$ over $t \in [T]$ iterations in order to learn a function $H_{T}: \mathcal{X} \rightarrow \mathbb{R}$, given by
\begin{align} \label{eq:boostingclassificationfunction}
H_{T}(\cdot) & := \sum_{t \in [T]} \theta_{t} h_{t}(\cdot),
\end{align}
where $\theta_{t}$ are the learned parameters and the $h_{t}: \mathcal{X} \rightarrow \mathbb{R}$ are weak learners with slightly better than random classification accuracy.
On each iteration $t \in [T]$, we compute weights for each training example using the full $H_{t-1}$ via
\begin{align} \label{eq:boostingweights}
D_{t}(i) := -\tilde{l}'(y^i H_{t-1}(x^i)), \forall i \in [m]. 
\end{align}
The weights $D_{t}(i)$ are non-negative, normalized to form a distribution over the training examples, and tend to increase for an example that is incorrectly predicted (negative margin) by the previously learned $H_{t-1}$.
Thus, weighting puts emphasis on ``hard'' examples using the first derivative of the loss function, which is a kind of functional gradient descent (cf.~\citep{schapire2013boosting}). 
Then, the distribution over training examples $D_{t}$ is passed to the weak learning oracle (see Algorithm~\ref{algo:adaboost.alpha} for the general procedure).

In the next section, we show that using the derivative of the margin-based $\alpha$-loss in~\eqref{eq:boostingweights} recovers a novel robust boosting algorithm, which may be of independent interest. 
We also show that this algorithm has provable robustness guarantees on the negative result of~\cite{long2010random}.


\section{ROBUSTNESS FOR BOOSTING} \label{sec:boosting}

\subsection{AdaBoost.$\alpha$: Boosting with a Give Up Option} \label{sec:adaboost.alpha}

\begin{algorithm} 
\caption{AdaBoost.$\alpha$} 
\label{algo:adaboost.alpha}
\begin{algorithmic}[1]
\STATE \textbf{Given:} $(x^{1},y^{1}), \ldots, (x^{m},y^{m})$ where $x^{i} \in \mathcal{X}$, $y^{i} \in \{-1,+1\}$, and $\alpha \in (0,\infty]$
\STATE Initialize: $H_{0} = 0$.
\STATE \textbf{for} {$t = 1, 2, \ldots, T$:}
\STATE \hspace{0.5cm} Update, for $i = 1, \ldots, m$:
\begin{equation} \label{eq:adaboostalpha}
 D_{t}(i) = \frac{\tcbhighmath[fuzzy halo= 0mm with blue!50!white,arc=2pt,
  boxrule=2pt,frame hidden]{ \sigma'(y^{i}H_{t-1}(x^{i})) \sigma(y^{i}H_{t-1}(x^{i}))^{-\frac{1}{\alpha}}}}{\mathcal{Z}_{t}}, 
%
%
\end{equation}
\hspace{0.5cm} where $\mathcal{Z}_{t}$ is a normalization factor.
\STATE \hspace{0.5cm} Return $h_{t}$, weakly learned on $D_{t}$.
\STATE \hspace{0.5cm} Compute error of weak hypothesis $h_{t}$:
\begin{align}
\epsilon_{t} = \sum_{i: h_{t}(x^{i}) \neq y^{i}} D_{t}(i). 
\end{align}
\STATE \hspace{0.5cm} Let $\theta_{t}  = \frac{1}{2}\log\left( \frac{1 - \epsilon_t}{\epsilon_t} \right)$. 
\STATE \hspace{0.5cm} Update: $H_{t} = H_{t-1} + \theta_{t}h_{t}$
\STATE \textbf{Return} $\overline{H}(\cdot) = \text{sign} \left(H_{T}(\cdot) \right)$
\end{algorithmic}
\end{algorithm}

Using the smooth tuning of the margin-based $\alpha$-loss, we present a novel robust boosting 
algorithm, AdaBoost.$\alpha$ in Algorithm~\ref{algo:adaboost.alpha}, which is obtained by noticing (from the functional gradient perspective~\citep{schapire2013boosting}) that the exponential weighting of vanilla AdaBoost is really the negative first derivative of the exponential loss (i.e., $\alpha = 1/2$).
Generalizing this observation for all $\alpha \in (0,\infty]$ (via Lemma~\ref{lem:derivativesmarginalphaloss} in Appendix~\ref{appendix:proofs}) in~\eqref{eq:adaboostalpha}, we obtain a hyperparameterized family of ``AdaBoost-type'' algorithms.

Indeed, AdaBoost.$\alpha$ also recovers LogAdaBoost (see Section~\ref{sec:relatedwork}) for $\alpha = 1$. 
For $\alpha > 1$, AdaBoost.$\alpha$ becomes a non-convex boosting algorithm minimizing the quasi-convex margin-based $\alpha$-losses (Proposition~\ref{Prop:alpha-loss-convex}).
As argued in Section~\ref{sec:marginbasedalphaloss}, non-convex losses enable the boosting algorithm to give up on noisy examples, and hence yield a more robust model $H_{T}$.
Indeed, for these same robustness reasons, non-convex boosting algorithms have been considered before (see Section~\ref{sec:relatedwork}).
However, the novelty of AdaBoost.$\alpha$ is that it continuously interpolates through convex AdaBoost variants ($\alpha \leq 1$) to non-convex ``AdaBoost-type'' algorithms ($\alpha > 1$).
Thus, AdaBoost.$\alpha$ allows the practitioner or meta-algorithm~\citep{he2021automl} to tune how much one would like the algorithm to give up on hard, possible noisy, training examples, which may be useful in a distributed context~\citep{cooper2017improved}. 
Lastly, we note that because of the modularity of the $\alpha$ hyperparameter generalization, a multiclass extension of AdaBoost.$\alpha$ readily follows from standard approaches of multiclass AdaBoost, e.g.,~\cite{hastie2009multi}. 


\subsection{Robustness on the Long-Servedio Dataset} \label{sec:lsdataset}

\begin{figure}[h] 
\centerline{\includegraphics[trim={0.4cm .2cm 1.2cm 0.2cm},clip,width=.75\linewidth]{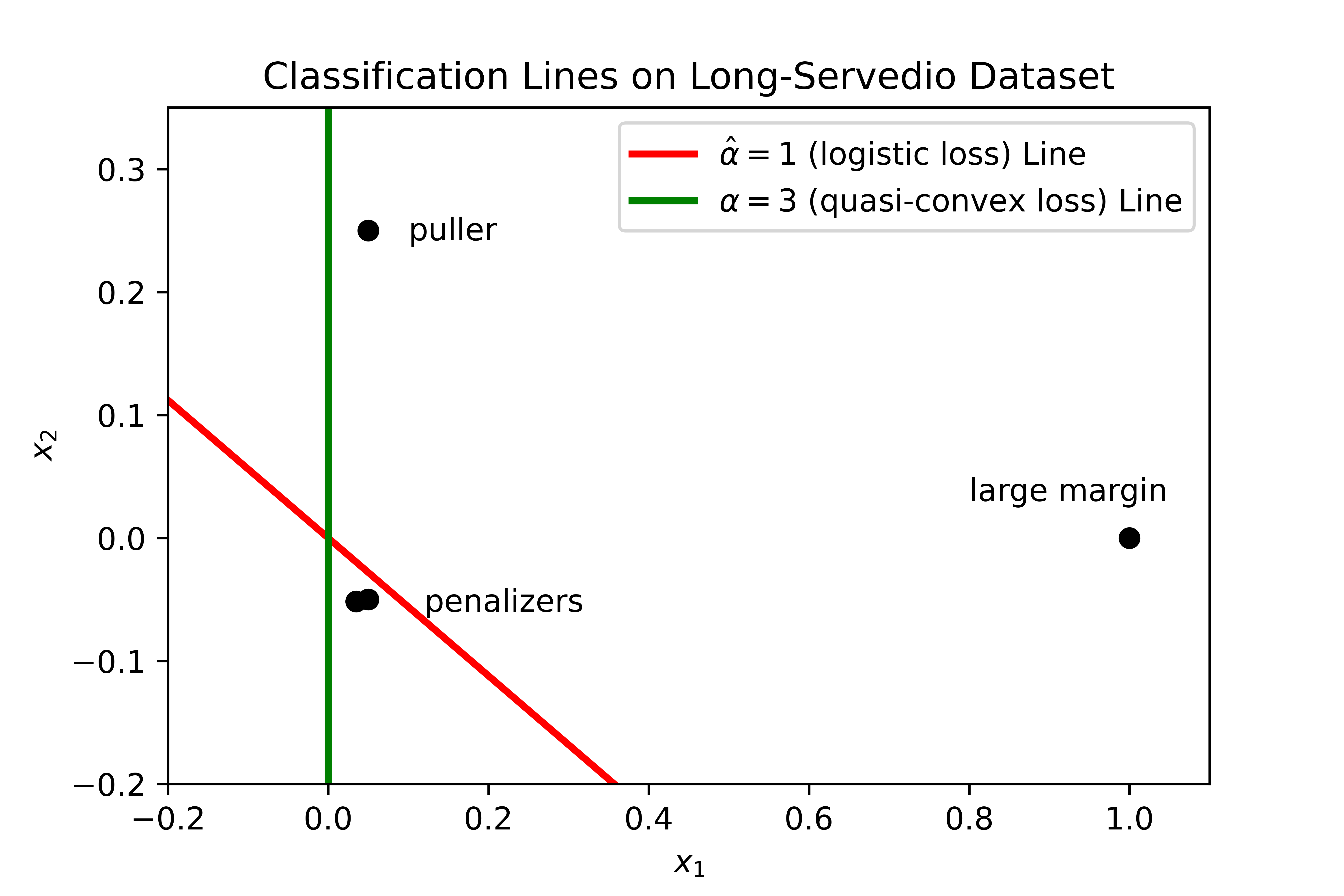}}
\caption{
A plot depicting optimal classification lines of $\hat{\alpha} = 1$ and $\alpha = 3$ for the \textit{clean} Long-Servedio dataset $\mathcal{S}$, where the penalizer examples are slightly separated for display.
The $\hat{\alpha}, \alpha$ optima are obtained by grid-search on the noisy Long-Servedio dataset $\hat{\mathcal{S}}$, where the noise level is chosen as $p = 1/3$, and $\gamma_{\hat{\alpha}} = 1/20$ is subsequently chosen for the negative result of~\cite{long2010random} to ``kick-in''. 
The $\hat{\alpha} = 1$ (logistic loss) line (red) is given by $(\theta_{1}^{\hat{\alpha}},\theta_{2}^{\hat{\alpha}}) = (0.79, 1.41)$ for~\eqref{eq:lsclassifcationfunction}, and has fair coin accuracy on $\mathcal{S}$, misclassifying both penalizers. 
The $\alpha = 3$ (quasi-convex loss) line (green) is given by $(\theta_{1}^{\alpha},\theta_{2}^{\alpha}) = (41.59, -1.19\times 10^{-11})$, and has perfect accuracy on $\mathcal{S}$.
This simulation aligns with Theorem~\ref{thm:LSmarginalphalossrobust} in that the quasi-convex $\alpha = 3$ loss is able to ``give up'' on the noisy copies of the training examples and recover perfect classification parameters.
More $\alpha$'s are presented in Appendix~\ref{appendix:proofofLStheorem}.
}
\bignegspace
\label{Fig:LSclassificationlines}
\end{figure}

In~\cite{long2010random}, 
the training sample $\mathcal{S}$ is a multiset consisting of three distinct examples, one of which is repeated twice, where the data margin $0 < \gamma < 1/6$:
\bignegspace
\begin{itemize}
    \item $\mathcal{S}$ contains one copy of the example $x = (1,0)$ with label $y = +1$. (Called the ``large margin'' example.)
    \item $\mathcal{S}$ contains two copies of the example $x = (\gamma,-\gamma)$ with label $y = +1$. (Called the ``penalizers'' since these are the points that the booster will misclassify.) 
    \item $\mathcal{S}$ contains one copy of the example $x = (\gamma, 5\gamma)$ with label $y = +1$. (Called the ``puller''.)
\end{itemize}
\bignegspace

Thus, all four examples in $\mathcal{S}$ have positive label and lie in the unit disc $\{x:\|x\|\leq 1\}$; see Figure~\ref{Fig:LSclassificationlines} for a plot of the dataset. 
Notice that $\overline{H}(x) = \text{sign}(x_{1})$ (sign of first coordinate of $x$) corrrectly classifies all four examples in $\mathcal{S}$ with margin $\gamma > 0$, so the weak learner hypothesis class $\mathcal{H} = \{h_{1}(x) = x_{1}, h_{2}(x) = x_{2}\}$ is sufficient for perfect classification of the dataset.
The task for the boosting algorithm is to learn parameters $(\theta_{1},\theta_{2})$ such that, from~\eqref{eq:boostingclassificationfunction}, 
\begin{align} \label{eq:lsclassifcationfunction}
H_{\tilde{l},\gamma}(x_{1},x_{2}) := \theta_{1} x_{1} + \theta_{2} x_{2},   
\end{align}
achieves perfect classification accuracy on $\mathcal{S}$, where the dependency on the loss $\tilde{l}$ and data margin $\gamma$ is clear.
Note that~\eqref{eq:lsclassifcationfunction} (we abbreviate $H_{\tilde{l},\gamma} = (\theta_{1},\theta_{2})$) is a 2D linear model, so this setup parallels with logistic regression, which we consider in the sequel.
%
Following~\citep{nockLS2022}, we obtain a noisy sample $\hat{\mathcal{S}}$ with label flip probability $0 < p < 1/2$ by including $p^{-1} - 1$ copies of $\mathcal{S}$ and $1$ copy of $\mathcal{S}$ with the labels flipped.~\cite{long2010random} showed that for any calibrated, \textit{convex} loss $\tilde{l}$:
\begin{itemize}
\bignegspace
    \item When $p = 0$, i.e., the training sample is $\mathcal{S}$, the optimal $H_{\tilde{l}} = (\theta_{1}^{\tilde{l}},\theta_{2}^{\tilde{l}})$ of $\tilde{l}$ has \textit{perfect} accuracy on $\mathcal{S}$.
    \item For any $0 < p < 1/2$ generating training sample $\hat{\mathcal{S}}$, there exists $0 < \gamma_{\tilde{l}} < 1/6$ such that the optimal $H_{\tilde{l},\gamma_{\tilde{l}}} = (\theta_{1}^{\tilde{l}},\theta_{2}^{\tilde{l}})$ of $\tilde{l}$ has \textit{fair coin} accuracy on $\mathcal{S}$.
\bignegspace
\end{itemize}
Intuitively, the interplay between the ``large margin'' and ``puller'' examples forces a convex booster, boosting $\mathcal{H}$, to try to fit the noisy examples in $\hat{\mathcal{S}}$; this holds even if the booster is regularized or stopped early, ultimately outputing a model that misclassifies both ``penalizers'' of $\mathcal{S}$~\citep{long2010random}.
Taking stock with $\tilde{l}^{\alpha}$, we see that this pathology holds for $\alpha \leq 1$, since these are convex losses. 
However, tuning $\alpha > 1$ to quasi-convex losses is able to induce the existence of optima which can fix the problem.
\begin{theorem} \label{thm:LSmarginalphalossrobust}
Let $0 < p < 1/2$ for $\hat{\mathcal{S}}$, and $\hat{\alpha} \leq 1$ for $\tilde{l}^{\alpha}$. 
By~\cite{long2010random}, there exists $0 < \gamma_{\hat{\alpha}} < 1/6$ such that the optimal $H_{\hat{\alpha},\gamma_{\hat{\alpha}}} = (\theta_{1}^{\hat{\alpha}}, \theta_{2}^{\hat{\alpha}})$ is a fair coin on $\mathcal{S}$. 
On the other hand, for $\alpha \in (1,\infty)$, $\tilde{l}^{\alpha}$ has optimum $H_{\alpha,\gamma_{\hat{\alpha}}} = (\theta_{1}^{\alpha}, \theta_{2}^{\alpha})$, where $\theta_{1}^{\alpha} = \mathcal{O}\left(\alpha \gamma_{\hat{\alpha}}^{-1} \log{\left(p^{-1} - 1 \right)} \right)$ and $\theta_{2}^{\alpha} = 0$, with perfect classification accuracy on $\mathcal{S}$.
\end{theorem}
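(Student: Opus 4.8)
The plan is to reduce the empirical $\alpha$-risk on $\hat{\mathcal{S}}$ to an explicit two-variable function and read off its stationary point. Writing $N := p^{-1}-1$ for the number of clean copies of $\mathcal{S}$, the risk is
\begin{align}
L(\theta_1,\theta_2) &= N\tilde{l}^\alpha(\theta_1) + 2N\tilde{l}^\alpha(\gamma(\theta_1-\theta_2)) + N\tilde{l}^\alpha(\gamma(\theta_1+5\theta_2)) \notag \\
&\quad + \tilde{l}^\alpha(-\theta_1) + 2\tilde{l}^\alpha(-\gamma(\theta_1-\theta_2)) + \tilde{l}^\alpha(-\gamma(\theta_1+5\theta_2)),
\end{align}
where the first line collects the four clean examples (large-margin, the two penalizers, and the puller, each with $y=+1$) and the second collects the single label-flipped copy. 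First I would substitute the loss derivative: by Lemma~\ref{lem:derivativesmarginalphaloss} (equivalently, from the AdaBoost.$\alpha$ weights in~\eqref{eq:adaboostalpha}) one has $-(\tilde{l}^{\alpha})'(z) = \sigma(z)^{1-1/\alpha}\sigma(-z) =: g(z) > 0$, and I would record the single identity that drives everything, $g(-z)/g(z) = (\sigma(z)/\sigma(-z))^{1/\alpha} = e^{z/\alpha}$, which follows from $\sigma(z)/\sigma(-z)=e^{z}$.

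The key structural observation is that the large-margin example $(1,0)$ has vanishing second coordinate, so its margin $\theta_1$ is independent of $\theta_2$ and it drops out of $\partial L/\partial\theta_2$ entirely. Evaluating the $\theta_2$-stationarity condition along $\theta_2=0$, where the two penalizers and the puller all share the margin $\gamma\theta_1$, the condition collapses to the single balance $g(-\gamma\theta_1) = N\,g(\gamma\theta_1)$ between the clean and flipped penalizer/puller contributions. Applying the identity turns this into $e^{\gamma\theta_1/\alpha} = N$, i.e.
\begin{equation}
\theta_1^\alpha = \frac{\alpha\,\log N}{\gamma} = \frac{\alpha\,\log(p^{-1}-1)}{\gamma_{\hat\alpha}} = \mathcal{O}\!\left(\alpha\,\gamma_{\hat\alpha}^{-1}\log(p^{-1}-1)\right),
\end{equation}
exactly the claimed scale. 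With $\theta_2=0$ and $\theta_1^\alpha>0$ the decision rule is $\mathrm{sign}(x_1)$, so every clean example (all with $x_1>0$ and label $+1$) is classified correctly, giving perfect accuracy on $\mathcal{S}$. This is the ``giving up'' phenomenon: the badly misclassified flipped large-margin point carries weight $g(-\theta_1^\alpha)$ which, for $\alpha>1$, decays to $0$ as $\theta_1^\alpha\to\infty$ rather than diverging as it would for a convex loss, so a large $\theta_1$ is sustainable.

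It remains to certify that $(\theta_1^\alpha,0)$ is genuinely an optimum of the full two-dimensional problem. Substituting the balance back into $\partial L/\partial\theta_1$, the penalizer/puller terms cancel and one is left only with the large-margin imbalance $g(-\theta_1^\alpha) - N\,g(\theta_1^\alpha)$. The plan is to show this residual is exponentially negligible in $1/\gamma$: since $\theta_1^\alpha=\Theta(\log N/\gamma)$ is large, for $\alpha>1$ we have $g(-\theta_1^\alpha)\asymp e^{-\theta_1^\alpha(1-1/\alpha)}$ and $g(\theta_1^\alpha)\asymp e^{-\theta_1^\alpha}$, both vanishing, so the true stationary point differs from $(\theta_1^\alpha,0)$ only by a vanishing perturbation and $\theta_2^\alpha=0$ to leading order (consistent with the numerically negligible $\theta_2$ in Figure~\ref{Fig:LSclassificationlines}). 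I expect this last step --- turning the dominant-balance heuristic into a rigorous existence-and-location statement for an interior local minimizer (e.g.\ an implicit-function/fixed-point argument around the leading-order point, together with a second-order check that the bounded, non-coercive landscape places a local minimum there rather than a saddle) --- to be the main obstacle. The gradient algebra and the contrast with the convex regime ($\alpha\le 1$, where $g(-z)$ diverges so no large-$\theta_1$ balance exists and the Long--Servedio failure is recovered) are routine by comparison.
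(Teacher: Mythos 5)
Your reduction is correct and is essentially the route the paper takes: the same decomposition of the noisy risk over the four examples with $N=p^{-1}-1$ clean copies and one flipped copy, the same derivative $-\tilde{l}^{\alpha\prime}(z)=\sigma(z)^{1-1/\alpha}\sigma(-z)$, and the same leading-order answer $\gamma\theta_1=\alpha\log N$ coming from the ratio identity $g(-z)/g(z)=e^{z/\alpha}$. The only organizational difference is in how the two stationarity conditions are combined. You solve $\partial L/\partial\theta_2=0$ exactly along $\theta_2=0$ (which pins $\theta_1=\alpha\gamma^{-1}\log N$) and then treat $\partial L/\partial\theta_1$ as an exponentially small residual; the paper instead eliminates the penalizer term between the two first-order equations to obtain the single scalar relation $B_N^\alpha(\theta_1)=-6\gamma B_N^\alpha(\gamma(\theta_1+5\theta_2))$ with $B_N^\alpha(x):=N\tilde{l}^{\alpha\prime}(x)-\tilde{l}^{\alpha\prime}(-x)$, restricts to $\theta_2=0$, and locates a crossing of the two sides by the intermediate value theorem using the sign and limit properties of $B_N^\alpha$ (Lemma~\ref{lem:BNalpha}): the left side is positive and tends to $0^+$ for $\theta_1>\alpha\log N$, while the right side changes sign from positive to negative at $\theta_1=\alpha\gamma^{-1}\log N$, forcing an intersection just below that crossover. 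So the step you flag as the main obstacle is closed in the paper not by an implicit-function or second-order argument but by this one-dimensional continuity argument on the combined equation (the existence of nearby solutions with $\theta_2\neq 0$ is then only argued informally, supported by the landscape and contour figures); your dominant-balance residual $g(-\theta_1)-Ng(\theta_1)$ is exactly the paper's $B_N^\alpha(\theta_1)$, so your estimate that it vanishes for $\alpha>1$ but diverges for convex $\alpha\le 1$ is precisely the mechanism the paper exploits. In short, your proposal matches the paper's proof in substance; the remaining work you anticipate is lighter than you expect, and neither your sketch nor the paper fully certifies a second-order local-minimum condition.
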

The proof of Theorem~\ref{thm:LSmarginalphalossrobust} (in Appendix~\ref{appendix:proofofLStheorem}) is nontrivial since $\alpha > 1$ has a non-convex optimization landscape.
In Figure~\ref{Fig:LSclassificationlines} where $p = 1/3$ and $\gamma_{\hat{\alpha}} = 1/20$, the grid search returns $(\theta_{1}^{\alpha},\theta_{2}^{\alpha}) = (41.59, -1.19\times 10^{-11})$, which aligns with Theorem~\ref{thm:LSmarginalphalossrobust}, namely that $\theta_{1}^{\alpha} \approx 3\times 20 \times \log{(2)} \approx 41.59$ and $\theta_{2}^{\alpha} \approx 0$.
Intuitively, increasing $\alpha \in (1,\infty)$ increases $\theta_{1}^{\alpha}$, which may have practical utility (see Section~\ref{sec:boostingexperiments}), but the rate for $\theta_{1}^{\alpha}$ hints at why $\alpha = \infty$ is not included, since $\alpha = \infty$ ``pushes'' $\theta_{1}^{\alpha}$ to $\infty$, an impossibility; this is an example of the robustness/optimization complexity tradeoff inherent in the margin-based $\alpha$-loss~\citep{sypherd2020alpha}.



\section{ROBUSTNESS FOR LINEAR MODELS} \label{sec:logisticmodel}

Taking inspiration from the boosting setup in Section~\ref{sec:lsdataset}, where the weak learner recovered a 2D linear model in~\eqref{eq:lsclassifcationfunction},
we now consider a generalization of the 2D linear hypothesis class to $d \in \mathbb{N}$ dimensions, which in binary classification is equivalent to the logistic model~\citep{sypherd2022journal}.
Ideally, one would like to give \textit{direct} expressions of gradient optimizers $\hat{\theta}^{\alpha}$ as we do for the Long-Servedio setup in Theorem~\ref{thm:LSmarginalphalossrobust}, however, the logistic model has sigmoid 
non-linearities that make this difficult for general data distributions. 
Instead, we take an \textit{indirect} approach where we provide guarantees on the quality of gradient optima, 
showing with upper and lower bounds that the noisy gradient for $\alpha > 1$ is smaller for ``good solutions'' than when $\alpha = 1$ (logistic regression).
Thus, the motivation for Theorems~\ref{thm:taylorlagrangeupperbound} and~\ref{thm:lowerbound} is to argue that a gradient optimizer
is more likely to converge near a ``good solution'' when $\alpha > 1$ than when $\alpha = 1$; indeed, this is another way to view
how the $\alpha > 1$ ``give up'' on the noise in the training data.

We let $X \in [0,1]^{d}$ be the normalized feature vector, $Y \in \{-1,+1\}$ the label, and we assume that the pair is drawn according to an unknown distribution $P_{X,Y}$.
We assume that the parameter vector $\theta \in \mathbb{B}_{d}(r)$ where $r>0$ and $\mathbb{B}_{d}(r) := \{\theta\in\mathbb{R}^d : \|\theta\|_{2} \leq r\}$. 
Thus, in this setting $\langle yx, \theta \rangle$ (inner product) is the margin, 
and note by the Cauchy-Schwarz inequality that $\langle yx, \theta \rangle \leq r\sqrt{d}$.



For $\alpha \in (0,\infty]$, the expected margin-based $\alpha$-loss, abbreviated the $\alpha$-risk, evaluated at $\theta \in \mathbb{B}_{d}(r)$ is given by 
\begin{align} \label{eq:cleanalpharisklogisticmodel}
R_{\alpha}(\theta) := \mathbb{E}_{X,Y}\left[\tilde{l}^{\alpha}(\langle YX, \theta \rangle) \right],
\end{align}
and for symmetric label noise rate $0 < p < 1/2$,
\begin{align} \label{eq:noisyalpharisklogisticmodel}
R_{\alpha}^{p}(\theta) := \mathbb{E}_{X,Y}\left[ \mathbb{E}_{\tau \sim \text{Rad}(p)} \left( \tilde{l}^{\alpha}(\langle -\tau YX, \theta \rangle) \right) \right], 
\end{align}
is called the noisy $\alpha$-risk, where $\tau$ is a Rademacher random variable with parameter $p$.
In order to assess the efficacy of a given parameter vector $\theta \in \mathbb{B}_{d}(r)$, we are interested in the gradient of the loss function, due to the use of gradient methods for optimization~\citep{boyd2004convex}. 
Thus, the gradient of the $\alpha$-risk in~\eqref{eq:cleanalpharisklogisticmodel} is 
\begin{align} \label{eq:gradientofmarginbasedalphaloss}
\nabla_{\theta} R_{\alpha}(\theta) := \mathbb{E}_{X,Y}\left[\nabla_{\theta} \tilde{l}^{\alpha}(\langle YX, \theta \rangle) \right], 
\end{align}
$\nabla_{\theta} \tilde{l}^{\alpha}(\langle YX, \theta \rangle) := - \sigma'(\langle YX, \theta \rangle) \sigma(\langle YX, \theta \rangle)^{-\frac{1}{\alpha}} Y X$ for $\alpha \in (0,\infty]$ from Lemma~\ref{lem:derivativesmarginalphaloss} in Appendix~\ref{appendix:proofs}.
%
%
Hence, the gradient of the noisy $\alpha$-risk~\eqref{eq:noisyalpharisklogisticmodel} is given by
\begin{align} \label{eq:noisygradientalpharisklogisticmodel}
&\nabla_{\theta} R_{\alpha}^{p}(\theta) := \mathbb{E}_{X,Y}\left[\mathbb{E}_{\tau \sim \text{Rad}(p)}\left(\nabla_{\theta} \tilde{l}^{\alpha}(\langle -\tau YX, \theta \rangle)  \right) \right]. 
\end{align}
\negspace
We now present a result in the realizable setting, indicating~\eqref{eq:noisygradientalpharisklogisticmodel} is smaller for $\alpha = \infty$ (soft $0$-$1$ loss) at any data generating vector $\theta^{*} \in \mathbb{B}_{d}(r)$ than for $\alpha = 1$ (logistic loss).
%
%
\begin{theorem} \label{thm:taylorlagrangeupperbound}
Let $0 < p < 1/2$ and let $\hat{\theta}^{1},\hat{\theta}^{\infty} \in \mathbb{B}_{d}(r)$ be such that $\nabla_{\theta} R_{1}^{p}(\hat{\theta}^{1}) = \mathbf{0} = \nabla_{\theta} R_{\infty}^{p}(\hat{\theta}^{\infty})$.
We assume that the following holds for all $(x,y) \in \mathcal{X} \times \{-1,+1\}$, 
%
\begin{align} \label{eq:taylorlagrangeupperboundassumption}
\langle yx, \hat{\theta}^{\infty} \rangle \geq \langle yx, \hat{\theta}^{1} \rangle > \ln{(2+\sqrt{3})}.
\end{align}
If for any $\theta^{*} \in \mathbb{B}_{d}(r)$ we have $\langle yx, \theta^{*} \rangle \geq \langle yx, \hat{\theta}^{\infty} \rangle$ for all $(x,y) \in \mathcal{X} \times \{-1,+1\}$, then we have that for $\alpha \in \{1,\infty\}$,
\begin{align} \label{eq:taylorlagrangeinfinitynormalpha}
\frac{\|\nabla_{\theta} R_{\alpha}^{p}(\theta^{*}) \|_{\infty}}{C_{\alpha}} \leq  d^{\frac{1}{2}} r \left|\tilde{l}^{\alpha''}(z^{*}_{\alpha}) \right| +  d r^{2} \left| \tilde{l}^{\alpha'''}(z_{\alpha}^{*}) \right|,
\end{align}
where $C_{\alpha} = 2$ for $\alpha = 1$ and $C_{\alpha} = 2-4p$ for $\alpha = \infty$, and
$z_{\alpha}^{*} := \argmax_{z \in \{\langle yx, \hat{\theta}^{\alpha} \rangle \}} \left|\tilde{l}^{\alpha''}(z) \right|$.
Furthermore, 
\begin{align} \label{eq:taylorlagrangeupperboundsinequality}
1-2p < \frac{d^{\frac{1}{2}} r \left|\tilde{l}^{1''}(z^{*}_{1}) \right| + d r^{2} \left| \tilde{l}^{1'''}(z^{*}_{1}) \right|}{d^{\frac{1}{2}} r \left|\tilde{l}^{\infty''}(z_{\infty}^{*}) \right| + d r^{2} \left| \tilde{l}^{\infty'''}(z_{\infty}^{*}) \right|}. 
\end{align}
\end{theorem}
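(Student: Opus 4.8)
The plan is to reduce the vector-valued noisy gradient to a scalar ``influence'' function of the margin, kill the zeroth-order term using the vanishing-gradient condition at the optimizer via a second-order Taylor--Lagrange expansion, and then control the remainder using the parity of the sigmoid derivatives together with monotonicity past the threshold $\ln(2+\sqrt{3})$. First I would rewrite the per-example noisy gradient in margin form: since $\tilde{l}^{\alpha}$ is margin-based, $\nabla_{\theta}\tilde{l}^{\alpha}(\langle yx,\theta\rangle)=\tilde{l}^{\alpha'}(\langle yx,\theta\rangle)\,yx$, and carrying out the Rademacher expectation in~\eqref{eq:noisygradientalpharisklogisticmodel} gives $\nabla_{\theta}R_{\alpha}^{p}(\theta)=\mathbb{E}_{X,Y}[F_{\alpha}^{p}(\langle YX,\theta\rangle)\,YX]$, where $F_{\alpha}^{p}(z):=(1-p)\tilde{l}^{\alpha'}(z)-p\,\tilde{l}^{\alpha'}(-z)$. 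The key structural fact is that $\tilde{l}^{1''}=\sigma'$ and $\tilde{l}^{\infty'}=-\sigma'$ are even; collecting parities after differentiating $F_{\alpha}^{p}$ yields $(F_{1}^{p})'=\tilde{l}^{1''}$ and $(F_{1}^{p})''=\tilde{l}^{1'''}$ with no $p$-dependence, whereas $(F_{\infty}^{p})'=(1-2p)\tilde{l}^{\infty''}$ and $(F_{\infty}^{p})''=(1-2p)\tilde{l}^{\infty'''}$. This is precisely the source of the constants $C_{1}=2$ and $C_{\infty}=2-4p=2(1-2p)$.

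Next, fixing $\alpha\in\{1,\infty\}$ and writing $\hat{z}:=\langle yx,\hat{\theta}^{\alpha}\rangle$ and $z^{*}:=\langle yx,\theta^{*}\rangle$, I would Taylor--Lagrange expand $F_{\alpha}^{p}(z^{*})=F_{\alpha}^{p}(\hat{z})+(F_{\alpha}^{p})'(\hat{z})(z^{*}-\hat{z})+\tfrac{1}{2}(F_{\alpha}^{p})''(\xi)(z^{*}-\hat{z})^{2}$ for some $\xi$ between $\hat{z}$ and $z^{*}$. Substituting into $\nabla_{\theta}R_{\alpha}^{p}(\theta^{*})$, the zeroth-order term integrates to $\mathbb{E}[F_{\alpha}^{p}(\hat{z})YX]=\nabla_{\theta}R_{\alpha}^{p}(\hat{\theta}^{\alpha})=\mathbf{0}$ by hypothesis, leaving only the linear and quadratic terms. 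Bounding coordinatewise with $|(YX)_{j}|\le 1$ (as $X\in[0,1]^{d}$), $|z^{*}-\hat{z}|=|\langle YX,\theta^{*}-\hat{\theta}^{\alpha}\rangle|\le 2r\sqrt{d}$ (Cauchy--Schwarz with $\|\theta^{*}-\hat{\theta}^{\alpha}\|_{2}\le 2r$), and $(z^{*}-\hat{z})^{2}\le 4r^{2}d$, the universal factor $2$ and the $(1-2p)$ piece collect into $C_{\alpha}$ and produce exactly the two terms $d^{1/2}r\,|\tilde{l}^{\alpha''}(z_{\alpha}^{*})|$ and $dr^{2}\,|\tilde{l}^{\alpha'''}(z_{\alpha}^{*})|$ of~\eqref{eq:taylorlagrangeinfinitynormalpha}.

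The step that requires care, and the main obstacle, is justifying the replacement of the margin-dependent factors $|\tilde{l}^{\alpha''}(\hat{z})|$ and $|\tilde{l}^{\alpha'''}(\xi)|$ by their values at $z_{\alpha}^{*}$, the smallest data margin. This is exactly where the threshold $\ln(2+\sqrt{3})$ enters: it is the inflection point of $\sigma''$, where $\sigma'''$ vanishes and $\sigma=\tfrac{3+\sqrt{3}}{6}$, beyond which $|\tilde{l}^{\alpha''}|$ (namely $\sigma'$ for $\alpha=1$ and $|\sigma''|$ for $\alpha=\infty$) is monotone decreasing. Since assumption~\eqref{eq:taylorlagrangeupperboundassumption} together with $\langle yx,\theta^{*}\rangle\ge\langle yx,\hat{\theta}^{\alpha}\rangle$ places every relevant margin, and hence $\xi$, in this regime, $z_{\alpha}^{*}=\argmax|\tilde{l}^{\alpha''}|$ is the minimal data margin and immediately dominates the second-derivative factor; the third-derivative (Lagrange remainder) factor is the delicate one, controlled for $\alpha=1$ by $|\sigma''|$ being decreasing past the threshold, while for $\alpha=\infty$ it needs a sharper monotonicity analysis of $\sigma'''$ beyond $\ln(2+\sqrt{3})$.

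Finally, for inequality~\eqref{eq:taylorlagrangeupperboundsinequality} I would prove the stronger statement $B_{1}\ge B_{\infty}$, where $B_{\alpha}:=d^{1/2}r\,|\tilde{l}^{\alpha''}(z_{\alpha}^{*})|+dr^{2}\,|\tilde{l}^{\alpha'''}(z_{\alpha}^{*})|$; since $B_{\infty}>0$ and $1-2p<1$, this yields $B_{1}/B_{\infty}\ge 1>1-2p$. The pointwise hypothesis $\langle yx,\hat{\theta}^{\infty}\rangle\ge\langle yx,\hat{\theta}^{1}\rangle$ gives $z_{\infty}^{*}\ge z_{1}^{*}$, and I would compare termwise. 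For the first terms, $|\tilde{l}^{1''}|=\sigma'$ whereas $|\tilde{l}^{\infty''}|=|\sigma''|=\sigma'\,|1-2\sigma|<\sigma'$, so $\sigma'(z_{1}^{*})\ge\sigma'(z_{\infty}^{*})>|\sigma''(z_{\infty}^{*})|$. For the second terms, the elementary bound $|\sigma'''(z)|\le|\sigma''(z)|$ for $z\ge\ln(2+\sqrt{3})$, which with $u=2\sigma-1$ reduces to $(3u+1)(u-1)\le 0$, combined with $|\sigma''|$ decreasing, gives $|\tilde{l}^{1'''}(z_{1}^{*})|=|\sigma''(z_{1}^{*})|\ge|\sigma''(z_{\infty}^{*})|\ge|\sigma'''(z_{\infty}^{*})|=|\tilde{l}^{\infty'''}(z_{\infty}^{*})|$. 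Summing the two termwise inequalities yields $B_{1}\ge B_{\infty}$ and completes the argument.
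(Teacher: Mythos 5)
Your proposal follows essentially the same route as the paper's proof: Taylor--Lagrange expand the first derivative of the loss (equivalently your influence function $F_{\alpha}^{p}$) around the noisy stationary point, use the near-odd/even parity of $\tilde{l}^{1'}$ and $\tilde{l}^{\infty'}$ to extract $C_{1}=2$ and $C_{\infty}=2(1-2p)$, kill the zeroth-order term with the first-order condition at $\hat{\theta}^{\alpha}$, and apply Cauchy--Schwarz with $\|X\|\leq\sqrt{d}$ and $\|\theta^{*}-\hat{\theta}^{\alpha}\|\leq 2r$ to produce the $d^{1/2}r$ and $dr^{2}$ factors. Two remarks. First, for the ratio inequality~\eqref{eq:taylorlagrangeupperboundsinequality} your termwise argument via the pointwise bound $|\sigma'''(z)|\leq|\sigma''(z)|$ for $z\geq\ln{(2+\sqrt{3})}$ (the $(3u+1)(u-1)\leq 0$ computation) together with $|\sigma''|$ decreasing is a clean substitute for the paper's Lemma~\ref{lemma:thirdderivativemarginalphalossbound}, and it avoids having to move $|\sigma'''|$ between the two points $z_{1}^{*}$ and $z_{\infty}^{*}$. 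Second, the step you flag as needing ``a sharper monotonicity analysis'' --- replacing $\mathbb{E}\left[\left|\tilde{l}^{\infty'''}(c^{\infty}_{(X,Y)})\right|\right]$ by $\left|\tilde{l}^{\infty'''}(z_{\infty}^{*})\right|$ --- is indeed the weakest link: $\sigma'''$ vanishes exactly at $\ln{(2+\sqrt{3})}$ and then increases before decaying, so $|\sigma'''|$ is not monotone decreasing on all of $(\ln{(2+\sqrt{3})},\infty)$, and the bound can fail when the smallest margin sits near the threshold. The paper's own proof asserts this monotonicity without qualification, so your honesty here has located a soft spot shared by the original argument rather than a defect unique to your write-up; a complete fix would either strengthen the margin assumption past the maximizer of $|\sigma'''|$ or bound the remainder by $\sup_{z\geq z_{\infty}^{*}}|\sigma'''(z)|$ instead.
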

\negspace
Theorem~\ref{thm:taylorlagrangeupperbound} uses symmetries of the first derivative of $\tilde{l}^{\alpha}$ for $\alpha \in \{1,\infty \}$; see Appendix~\ref{appendix:taylorlagrangeupperbound} for proof details.
Intuitively,~\eqref{eq:taylorlagrangeupperboundsinequality} indicates that there is a significant discrepancy between the two upper bounds as the noise rate $p \rightarrow 1/2$, suggesting that $\nabla_{\theta} R_{\alpha}^{p} (\cdot)$ is smaller at any data generating $\theta^{*}$ for $\alpha = \infty$ than for $\alpha = 1$ (logistic regression).
Note that the assumption in~\eqref{eq:taylorlagrangeupperboundassumption} is mild because \textit{both} vectors (rather than just $\hat{\theta}^{\infty}$) are assumed to achieve perfect accuracy on the clean data distribution.

 
In support of the upper bounds in Theorem~\ref{thm:taylorlagrangeupperbound}, 
we now present a uniform lower bound on the norm of~\eqref{eq:noisygradientalpharisklogisticmodel} for the skew-symmetric family of distributions (e.g., GMMs). 
\begin{theorem} \label{thm:lowerbound}
Let $0 < p < 1/2$, and for each $y\in\{-1,1\}$, let $X^{[y]}$ have the distribution of $X$ conditioned on $Y=y$.
We assume a skew-symmetric distribution, namely, that $X^{[1]} \stackrel{\textnormal{d}}{=} -X^{[-1]}$, and $\mathbb{E}[X^{[1]}]\neq \mathbf{0}$.
We also assume that $r> 0$ is small enough such that both of the following hold:
\begin{align} \label{eq:lowerboundassumps0}
(1-2p)(1-\sigma'(r \sqrt{d})) < \frac{\|\mathbb{E}(X^{[1]})\|_{2}}{\mathbb{E}(\|X^{[1]}\|_{2})},
\end{align}
and, for all $\alpha \in [1,\infty]$,
\begin{align} \label{eq:lowerboundassumps1}
e^{\frac{r \sqrt{d}}{\alpha}} \log{(e^{r \sqrt{d}} + 1)} < \left(p^{-1} - 1 \right) \log{(e^{-r \sqrt{d}} + 1)}.
\end{align}
Then, we have that for every $\theta \in \mathbb{B}_{d}(r)$, 
\begin{align} \label{eq:lowerboundresult}
\|\nabla_{\theta} R_{\alpha}^{p}(\theta)\|_{2} \geq \|\mathbb{E}[X^{[1]}]\|_{2} - \chi \mathbb{E}[\|X^{[1]}\|_{2}] > 0,
\end{align}
where (letting $\tilde{\chi} := \sigma(r\sqrt{d})^{1-\frac{1}{\alpha}} \sigma(-r\sqrt{d}) - 1$)
\begin{align} \label{eq:cerainlowerbound}
\chi := \begin{cases}
\sigma(r\sqrt{d}) - p & \alpha = 1 \\
p \tilde{\chi} - (1-p) \tilde{\chi} & \alpha \in (1,\infty) \\
(1-2p)(1-\sigma'(r\sqrt{d})) & \alpha = \infty,
\end{cases}
\end{align}
and $\chi$ is monotonically increasing in $\alpha \in [1,\infty]$. 
\end{theorem}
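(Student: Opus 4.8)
The plan is to reduce the vector lower bound in \eqref{eq:lowerboundresult} to a scalar question about the margin weight, and then to control that weight uniformly over the ball $\mathbb{B}_{d}(r)$. First I would make the noisy gradient \eqref{eq:noisygradientalpharisklogisticmodel} explicit. Using $\nabla_{\theta}\tilde{l}^{\alpha}(\langle v,\theta\rangle)=-\sigma'(\langle v,\theta\rangle)\sigma(\langle v,\theta\rangle)^{-1/\alpha}v$ from Lemma~\ref{lem:derivativesmarginalphaloss} together with the chain rule and the evenness of $\sigma'$ in \eqref{eq:derivativepropertysigmoid}, the inner Rademacher expectation collapses the two noise branches into a single scalar weight acting on $YX$. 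Writing $z:=\langle YX,\theta\rangle$ for the clean margin, I expect $\nabla_{\theta}R_{\alpha}^{p}(\theta)=\mathbb{E}_{X,Y}[G_{\alpha}(z)\,YX]$, where $G_{\alpha}$ is an explicit combination of $\sigma'(z)$, $\sigma(z)^{-1/\alpha}$, and $\sigma(-z)^{-1/\alpha}$ weighted by $p$ and $1-p$; for $\alpha=1$ and $\alpha=\infty$ this simplifies (via $\sigma'(z)=\sigma(z)\sigma(-z)$) to the elementary forms that ultimately produce the $\chi$-cases of \eqref{eq:cerainlowerbound}.

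Second, I would invoke the skew-symmetry hypothesis $X^{[1]}\stackrel{\textnormal{d}}{=}-X^{[-1]}$ to fold the $Y=-1$ contribution onto the $Y=+1$ contribution: substituting $X^{[-1]}\stackrel{\textnormal{d}}{=}-X^{[1]}$ and using that $G_{\alpha}(z)\,YX$ is invariant under the joint sign flip $(X,Y)\mapsto(-X,-Y)$, both class terms become equal and the priors sum to one. This yields the single-population representation $\nabla_{\theta}R_{\alpha}^{p}(\theta)=\mathbb{E}[G_{\alpha}(\langle X^{[1]},\theta\rangle)\,X^{[1]}]$, which is exactly what makes $\mathbb{E}[X^{[1]}]$ and $\mathbb{E}[\|X^{[1]}\|_{2}]$ the natural quantities appearing in \eqref{eq:lowerboundresult}.

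Third, I would split off the dominant term by writing $G_{\alpha}(z)=c+(G_{\alpha}(z)-c)$ for the appropriate sign-reference $c\in\{-1,+1\}$, so that $c\,\mathbb{E}[X^{[1]}]$ is the leading vector. The reverse triangle inequality together with $\|\mathbb{E}[\cdot]\|_{2}\le\mathbb{E}\|\cdot\|_{2}$ then give $\|\nabla_{\theta}R_{\alpha}^{p}(\theta)\|_{2}\ge\|\mathbb{E}[X^{[1]}]\|_{2}-\sup_{|z|\le r\sqrt{d}}|G_{\alpha}(z)-c|\cdot\mathbb{E}[\|X^{[1]}\|_{2}]$, where the margin range $|z|\le r\sqrt{d}$ is the Cauchy--Schwarz bound already noted after \eqref{eq:noisygradientalpharisklogisticmodel}. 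Identifying $\chi$ with that supremum, I would evaluate it at the boundary $z=\pm r\sqrt{d}$ to recover the three cases of \eqref{eq:cerainlowerbound}; the role of assumption \eqref{eq:lowerboundassumps1} is precisely to guarantee that $z\mapsto G_{\alpha}(z)-c$ is monotone (equivalently, sign-definite) on $[-r\sqrt{d},r\sqrt{d}]$, so the supremum is attained at an endpoint and the closed form for $\chi$ is valid. For strict positivity and the monotonicity claim, I note that positivity of the right-hand side in the extremal case $\alpha=\infty$ (where $\chi=(1-2p)(1-\sigma'(r\sqrt{d}))$) is exactly assumption \eqref{eq:lowerboundassumps0}; it then suffices to show $\chi$ is increasing in $\alpha$, since then $\chi\le\chi_{\infty}$ for all $\alpha$. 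Using the common form $\chi=-(1-2p)\tilde{\chi}$ with $\tilde{\chi}=\sigma(r\sqrt{d})^{1-1/\alpha}\sigma(-r\sqrt{d})-1$, and noting that $\sigma(r\sqrt{d})\in(1/2,1)$ so the base lies in $(0,1)$ while the exponent $1-1/\alpha$ is increasing in $\alpha$, the factor $\sigma(r\sqrt{d})^{1-1/\alpha}$ is decreasing in $\alpha$; hence $\tilde{\chi}$ is decreasing and $\chi=-(1-2p)\tilde{\chi}$ is increasing, as claimed.

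I expect the main obstacle to be the third step: proving that the residual weight $|G_{\alpha}(z)-c|$ attains its supremum at the boundary of the margin interval and collapses to the stated closed-form $\chi$ in each regime. This is where the nonlinearity of $\sigma(\cdot)^{-1/\alpha}$ makes monotonicity of $G_{\alpha}$ nontrivial and where assumption \eqref{eq:lowerboundassumps1} must be leveraged carefully; the $\alpha=1$ and $\alpha=\infty$ endpoints require separate elementary simplifications, and one must verify they glue consistently with the $\alpha\in(1,\infty)$ expression under the monotonicity argument above.
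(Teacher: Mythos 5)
Your skeleton matches the paper's: reduce via skew-symmetry to $\nabla_{\theta} R_{\alpha}^{p}(\theta) = \mathbb{E}\left[w_{\alpha}(\langle X^{[1]},\theta\rangle)\,X^{[1]}\right]$ with $w_{\alpha}(z) = p\,\sigma(-z)^{1-1/\alpha}\sigma(z) - (1-p)\,\sigma(z)^{1-1/\alpha}\sigma(-z)$, center the weight, apply the reverse triangle inequality and Jensen, evaluate the residual at $z = r\sqrt{d}$ via Cauchy--Schwarz and monotonicity in $z$, and reduce strict positivity to the $\alpha=\infty$ case through monotonicity in $\alpha$ and \eqref{eq:lowerboundassumps0}. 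However, two of your steps would not go through as written. First, the centering constant: no $c\in\{-1,+1\}$ makes $\sup_{|z|\le r\sqrt{d}}\left|w_{\alpha}(z)-c\right|$ equal to the stated $\chi$. At $\alpha=1$ one has $w_{1}(z)=\sigma(z)-(1-p)$, so $\left|w_{1}(z)-1\right| = 2-p-\sigma(z) \ge 1-p$ (rendering the final bound vacuous for any reasonable distribution), while $\left|w_{1}(z)+1\right| = \sigma(z)+p$; neither is $\chi=\sigma(r\sqrt{d})-p$. The paper instead distributes the subtracted constant across the two noise branches, working with $f_{\alpha,p}(z) := p\left(\sigma(-z)^{1-1/\alpha}\sigma(z)-1\right) - (1-p)\left(\sigma(z)^{1-1/\alpha}\sigma(-z)-1\right)$, and it is $f_{\alpha,p}(r\sqrt{d})$ that produces the cases of \eqref{eq:cerainlowerbound} (check: $f_{1,p}(z)=\sigma(z)-p$). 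Your decomposition must be reworked to this branch-wise centering before the rest of the argument can proceed.

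Second, you assign \eqref{eq:lowerboundassumps1} the job of making the residual weight monotone in $z$ so the supremum sits at an endpoint. In the paper, monotonicity of $f_{\alpha,p}$ in $z$ holds unconditionally; \eqref{eq:lowerboundassumps1} is precisely the condition $\partial_{\alpha} f_{\alpha,p}(r\sqrt{d})>0$, i.e., it forces $\chi$ to be increasing in $\alpha$ so that every case is dominated by $\chi_{\infty}=(1-2p)(1-\sigma'(r\sqrt{d}))$ and a single invocation of \eqref{eq:lowerboundassumps0} yields strict positivity uniformly in $\alpha$. Your substitute argument for that monotonicity --- differentiating the ``common form'' $-(1-2p)\tilde{\chi}$ in $\alpha$ --- applies only to the symmetric middle-case formula as literally printed; the quantity that actually emerges from the gradient is the asymmetric $p\left(\sigma(-r\sqrt{d})^{1-1/\alpha}\sigma(r\sqrt{d})-1\right)-(1-p)\left(\sigma(r\sqrt{d})^{1-1/\alpha}\sigma(-r\sqrt{d})-1\right)$, whose two terms move in opposite directions as $\alpha$ grows, which is exactly why the hypothesis \eqref{eq:lowerboundassumps1} is needed. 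Indeed your common form does not glue at $\alpha=1$: it evaluates to $(1-2p)\sigma(r\sqrt{d})$ rather than $\sigma(r\sqrt{d})-p$ --- the inconsistency you flagged at the end but did not resolve.
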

The proof of Theorem~\ref{thm:lowerbound} (in Appendix~\ref{appendix:uniformlowerbound}) is inspired by the Morse landscape analysis in~\citep{sypherd2019tunable}. 
Intuitively,~\eqref{eq:cerainlowerbound} implies that the RHS in~\eqref{eq:lowerboundresult} is monotonically \textit{decreasing} in $\alpha \in [1,\infty]$,
which aligns with the ordering given by the upper bounds in Theorem~\ref{thm:taylorlagrangeupperbound}.
Regarding the assumptions in~\eqref{eq:lowerboundassumps0} and~\eqref{eq:lowerboundassumps1}, they are both more easily satisfied for smaller $r > 0$, indicating alignment with the underlying optimization landscape phenomena. 
%
Taken together, Theorems~\ref{thm:taylorlagrangeupperbound} and~\ref{thm:lowerbound} suggest that larger $\alpha > 1$ are more robust than $\alpha = 1$ (logistic regression); also, notice the $1-2p$ coefficient for $\alpha = \infty$ appearing in both bounds.

\section{EXPERIMENTS} \label{sec:experiments}

We now provide empirical results in support of the previous sections, namely the efficacy of AdaBoost.$\alpha$ (Algorithm~\ref{algo:adaboost.alpha}) on the Long-Servedio dataset and the robustness of the margin-based $\alpha$-loss (Definition~\ref{def:MarginLoss}) in linear models, both for $\alpha > 1$.
Further details and results are in Appendix~\ref{appendix:experiments}.

\subsection{Boosting} \label{sec:boostingexperiments}


For the boosting experiments, we utilize the \textit{experiment version} of the Long-Servedio dataset~\citep{long2010random,cheamanunkul2014non},
where the feature vectors are 21D,
which differs from the theory version presented in Section~\ref{sec:lsdataset}, where the feature vectors are 2D. 
A full description of the dataset is presented in Appendix~\ref{appendix:boostingexperiments}.
We introduce symmetric label noise in the training data with flip probability $0 < p < 1/2$.

\textbf{Robustness for simple models} In Figure~\ref{fig:ls_accuracy_depth}, we report results of AdaBoost.$\alpha$ with $\alpha > 1$ (quasi-convex) vs. SOTA convex boosters: vanilla AdaBoost (AdaBoost.$\alpha$ with $\alpha = 1/2$), LogAdaBoost (AdaBoost.$\alpha$ with $\alpha = 1$), XGBoost, and PILBoost (see Section~\ref{sec:relatedwork}).
For lower maximum tree depth\footnote{Increasing the maximum tree depth exponentially increases the number of parameters for the weak learner, which impacts energy consumption, interpretability, and generalization (e.g., via VC dimension).} of the weak learner (i.e., simpler models), $\alpha > 1$ boosters are better able to ``give up'' on the noisy labels during training and the learned model yields better accuracy on the \textit{clean} test set, aligning with Theorem~\ref{thm:LSmarginalphalossrobust}. 
When the maximum depth is increased, all of the algorithms perform roughly the same~\citep{nockLS2022}.


\begin{figure}[h]
    \centerline{\includegraphics[trim={0.2cm .2cm .05cm 0.2cm},clip,width=.75\linewidth]{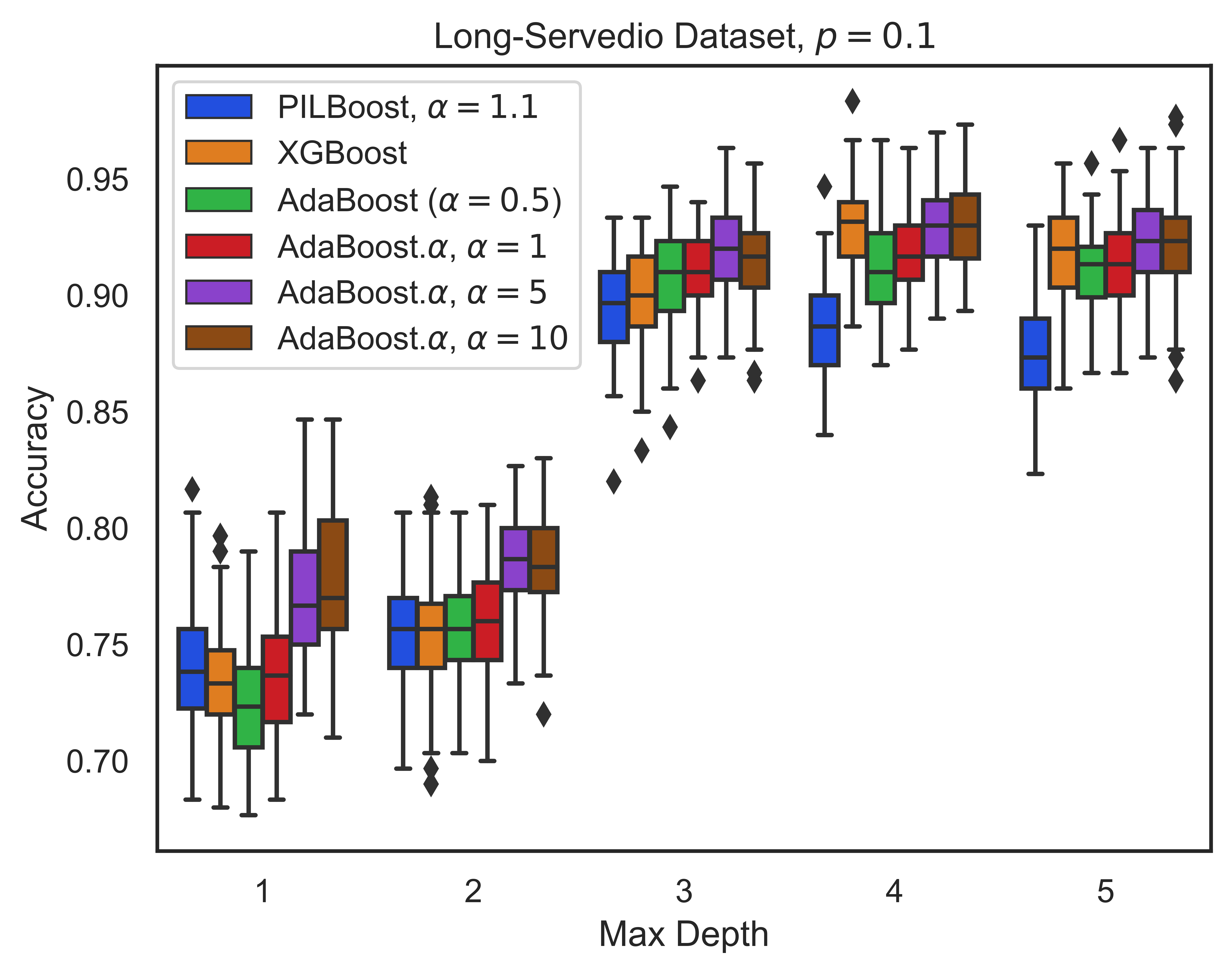}}
    \caption{
    Box and whisker plots of the clean test accuracies of several boosters with $100$ decision trees of varying maximum depth on the Long-Servedio dataset for $p = 0.1$ symmetric label noise.  
    The boxes are the interquartile ranges, the lines in the boxes are the medians, and the diamonds are the outliers.
    Note that AdaBoost.$\alpha$ with $\alpha >1$ (quasi-convex), outperforms the convex boosters when the maximum depth is $1$ or $2$.
    Further commentary is in Section~\ref{sec:boostingexperiments}, and more noise levels are in Appendix~\ref{appendix:boostingexperiments}.
    }
    \label{fig:ls_accuracy_depth}
    \bignegspace
\end{figure}

\textbf{Giving up} 
In Figure~\ref{fig:ls_accuracy_iterations}, we plot the clean test accuracy of AdaBoost.$\alpha$ boosting decision stumps for several values of $\alpha$ versus iterations (i.e., number of weak learners). 
We see that for $\alpha \leq 1$, increasing iterations does not increase accuracy; however, the $\alpha > 1$ (non-convex) boosters continue ``giving up'' on the noisy training examples, resulting in a $\approx 25\%$ gain.
For the large $\alpha > 1$, i.e. $\alpha = 8$ or $20$, the confidence intervals widen, which is an example of the robustness/non-convexity tradeoff inherent in the $\alpha$ hyperparameter~\citep{sypherd2020alpha}. 


\begin{figure}[h]
    \centerline{\includegraphics[trim={0.2cm .2cm .6cm 0.2cm},clip,width=.75\linewidth]{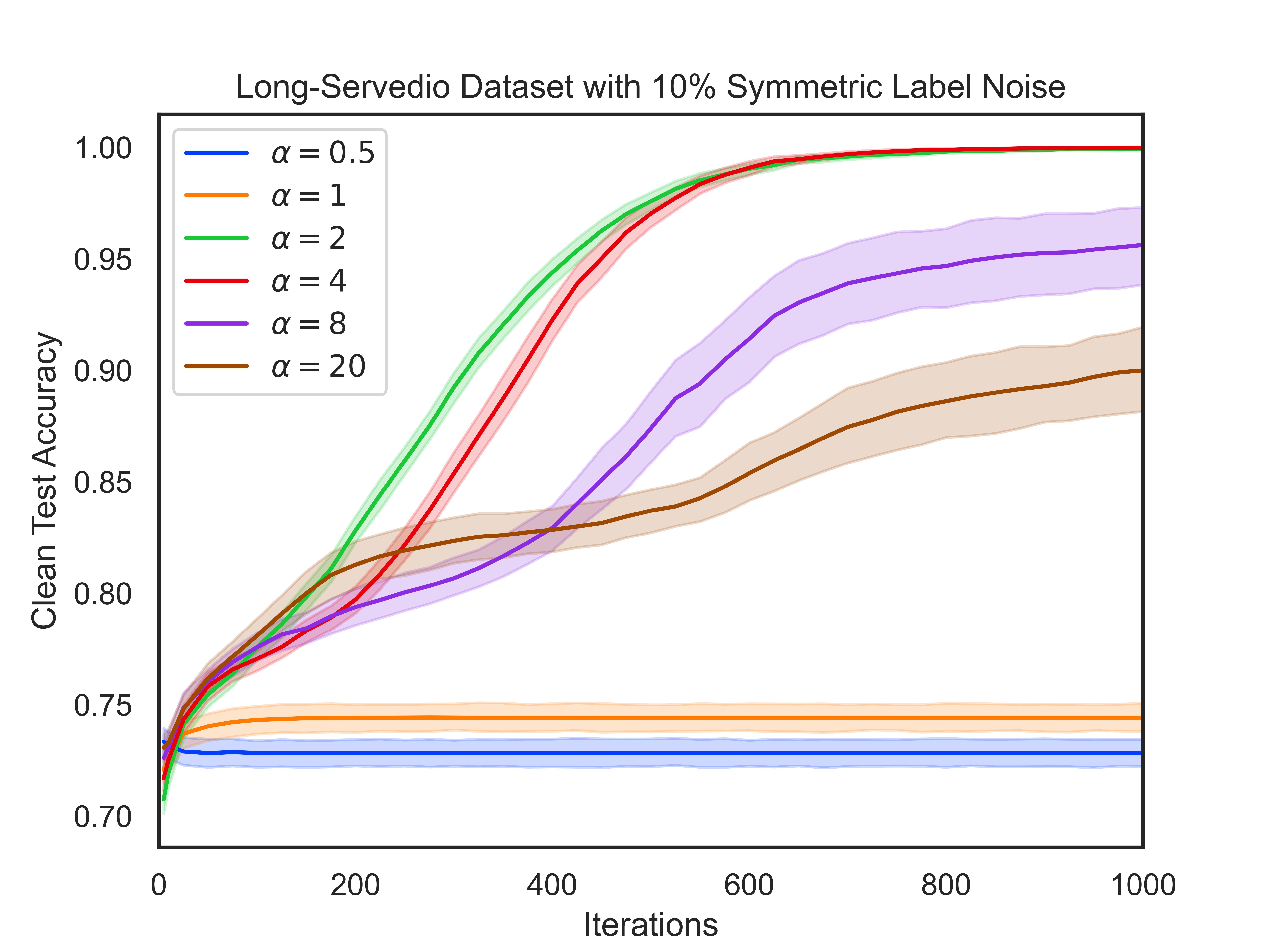}}
    \caption{
    Full version of Figure~\ref{fig:ls_accuracy_iterations_shortversion}, where we plot clean test accuracies vs.~the number of iterations of AdaBoost.$\alpha$ boosting decision stumps for several values of $\alpha$ on the Long-Servedio dataset with $p = 0.1$ symmetric label noise. 
    Note that the solid curves correspond to mean accuracy and shaded areas are the associated $95\%$ confidence intervals (from $80$ runs of the experiment).
    This result reflects the tendency of the convex $\alpha \leq 1$ boosters to continue overfitting on the noisy training examples, and the ability of the non-convex $\alpha > 1$ boosters to continue judiciously ``giving up'' on the noisy training examples.
    Further commentary is in Section~\ref{sec:boostingexperiments}, and more noise levels are in Appendix~\ref{appendix:boostingexperiments}.}
    \label{fig:ls_accuracy_iterations}
    \bignegspace
\end{figure}

\textbf{Smooth tuning} It is not difficult to tune $\alpha$ for AdaBoost.$\alpha$, see Figure~\ref{fig:ls_accuracy_alpha} in Appendix~\ref{appendix:boostingexperiments} for consideration on the Long-Servedio dataset.~\cite{sypherd2022journal} indicated that the effective range of $\alpha$ is typically bounded, e.g., $\alpha^{*} \in [.8,8]$ for shallow CNNs; AdaBoost.$\alpha$ appears to be no different. 
In part, this is due to a \textit{saturation effect}, where $\alpha > 1$ quickly ``resembles'' the $\infty$-loss~\citep{sypherd2020alpha}.
Hence, tuning $\alpha > 1$, but not too large, trades a reasonable amount of non-convexity for robustness. 

We see similar behavior in Figure~\ref{fig:bc_alpha} on the breast cancer dataset~\citep{breastcancerdataset}, namely that for every non-zero level of symmetric label noise, an $\alpha>1$ is able to achieve greater accuracy on a clean test set, and we note the smoothness of the gains with $\alpha$, implying that tuning $\alpha$ is simple for this dataset as well.
In Appendix~\ref{appendix:boostingexperiments}, we present full results of AdaBoost.$\alpha$ on the breast cancer dataset, similarly observing gains for smaller maximum tree depths. 
\begin{figure}[h]
    \centering
    \includegraphics[width=.75\linewidth]{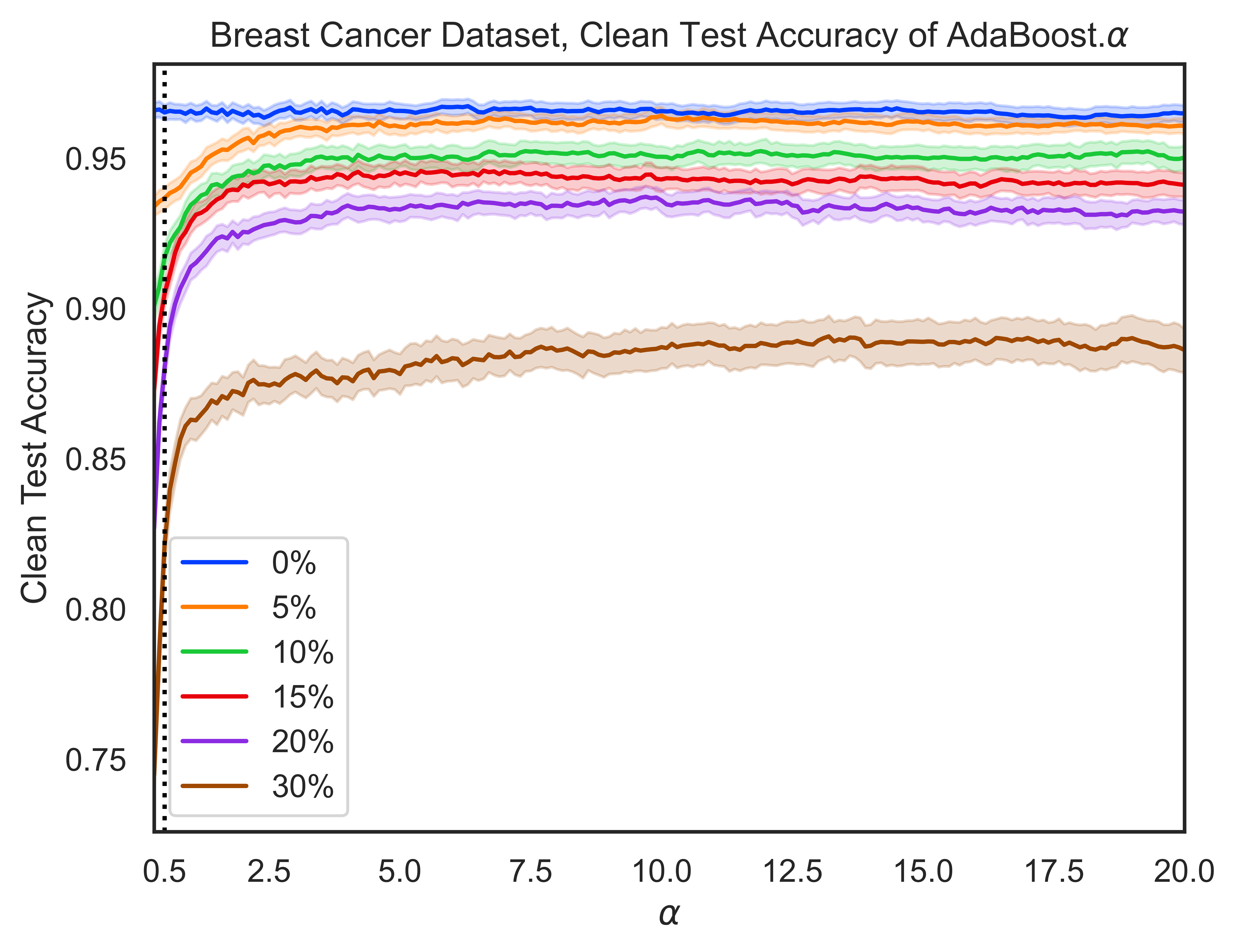}
    \caption{Clean test accuracy of AdaBoost.$\alpha$ with $100$ depth $1$ decision trees on the breast cancer dataset suffering from various levels of symmetric label noise during training. We see that vanilla AdaBoost ($\alpha=0.5$) struggles at large noise levels, but simply tuning $\alpha$ larger gives significant performance gains. For lower noise levels, tuning $\alpha$ large does not impact classification accuracy for this dataset, implying that tuning $\alpha$ is simple for this dataset.}
    \label{fig:bc_alpha}
    \bignegspace
\end{figure}



\subsection{Linear Model} \label{sec:logisticexperiments}

For the linear model experiments, we consider two datasets: a 2D GMM, and a real-world COVID-19 survey dataset~\citep{Salomone2111454118}. 
We introduce symmetric label noise into the training data for both. 

For the effectiveness metric of using the margin-based $\alpha$-loss, we consider the model parameters themselves, as they have clear interpretations in the form of odds ratios for the linear setting. 
Specifically, we examine a linear classifier trained with $\alpha$-loss on noisy data and calculate the mean squared error (MSE) of its learned parameters and those of some baseline (further described for each dataset). 
By ensuring that the model parameters are close to those of a clean model, we preserve interpretability and accuracy. 

\begin{figure}[h]
\centerline{\includegraphics[trim={0.4cm .2cm .7cm 0.2cm},clip,width=.75\linewidth]{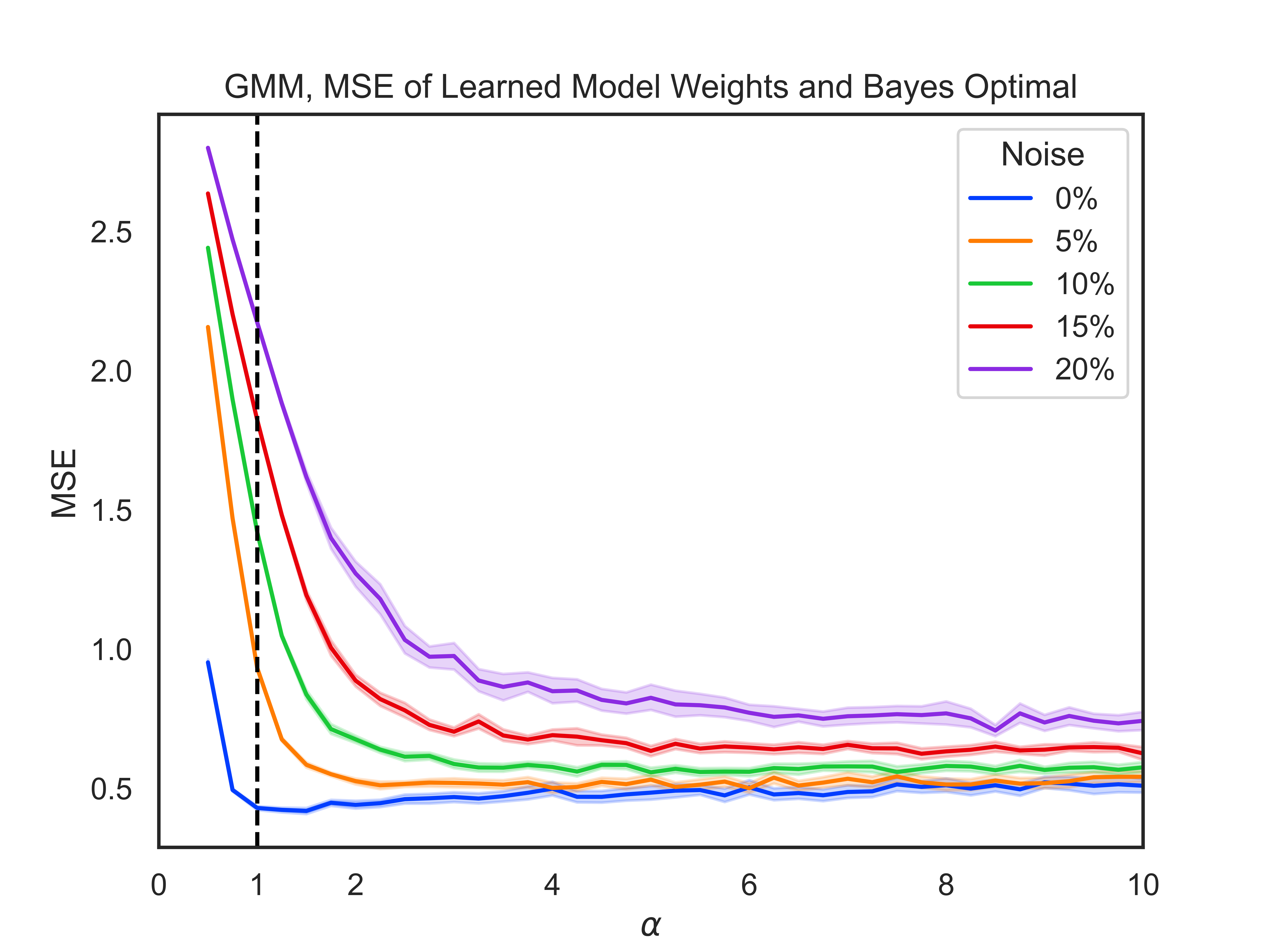}}
\caption{
MSE of Bayes optimal line and the parameters learned by $\alpha$-loss, on a 2D GMM with $86:14$ class imbalance and varying label noise levels. We see that $\alpha>1$ is able to more closely approximate the clean parameters than $\alpha\le1$, and the MSE is fairly flat in the large $\alpha$ regime, indicating that it is not difficult to tune $\alpha$.
Note that the $95\%$ confidence intervals grow wider for larger $\alpha$, indicative of the optimization/robustness tradeoff~\citep{sypherd2020alpha}.
}
\label{fig:synth_MSE}
\bignegspace
\end{figure}

\textbf{2D GMM} We first consider a 2D GMM with $\mu_1 = (1,1) = -\mu_{-1}$, identity covariance, and $\mathbb{P}[Y=1] = 0.14$ (aligning with the next experiment).
Thus, the Bayes-optimal classifier is linear, and we compare with the separator learned by training $\alpha$-loss on noisy data.
In Figure~\ref{fig:synth_MSE}, we see that tuning $\alpha>1$ results in a decreased MSE for every non-zero noise level, and
implies that the model learned by $\alpha > 1$ is closer to the Bayes optimal line than the model learned by $\alpha\le 1$, aligning with Theorems~\ref{thm:taylorlagrangeupperbound} and~\ref{thm:lowerbound}.
Tuning on this simple dataset is quite easy as the MSE is fairly flat for $\alpha > 1$, see Appendix~\ref{appendix:logisticexperiments} for more details. 


\textbf{COVID-19 survey data}
We now consider the US COVID-19 Trends and Impact Survey (US CTIS) dataset~\citep{Salomone2111454118}, which consists of self-reported survey data.
We compress the dataset from $71$ features to $42$ categorical and real-valued features including symptom data, behaviors, and comorbidities. 
For simplicity and interpretability, $8$ features, listed in Table~\ref{table:features}, were chosen using cross validation which contributed the most to the final prediction (largest odds ratios).
Each example is labeled either as RT-PCR-confirmed COVID positive ($1$) or negative ($-1$), based on self-reported diagnoses by study participants. Examples with clearly spurious responses (e.g., a negative number of people in a household) or responses with missing features were removed. This pre-processing resulted in a dataset of $864,154$ training examples with a class imbalance of $14:86$ of positive to negative COVID cases. 
\begin{table}[h]
\small\sf\centering
\begin{tabular}{lll}
\toprule
Feature&Type\\
\midrule
\verb"Age"&Categorical \\
\verb"Gender"&Categorical \\
\verb"LossOfSmellTaste"&Binary \\
\verb"ShortBreath"&Binary \\
\verb"Aches"&Binary \\
\verb"Tired"&Binary \\
\verb"Cough"&Binary \\
\verb"Fever"&Binary \\
\bottomrule
\end{tabular}
\caption{Top $8$ features of the US COVID-19 survey dataset~\citep{Salomone2111454118}, selected via the largest odds ratios on the validation set.}
\label{table:features}
\bignegspace
\end{table}


\begin{figure}[h]
\centerline{\includegraphics[trim={0.2cm .2cm .7cm 0.2cm},clip,width=.75\linewidth]{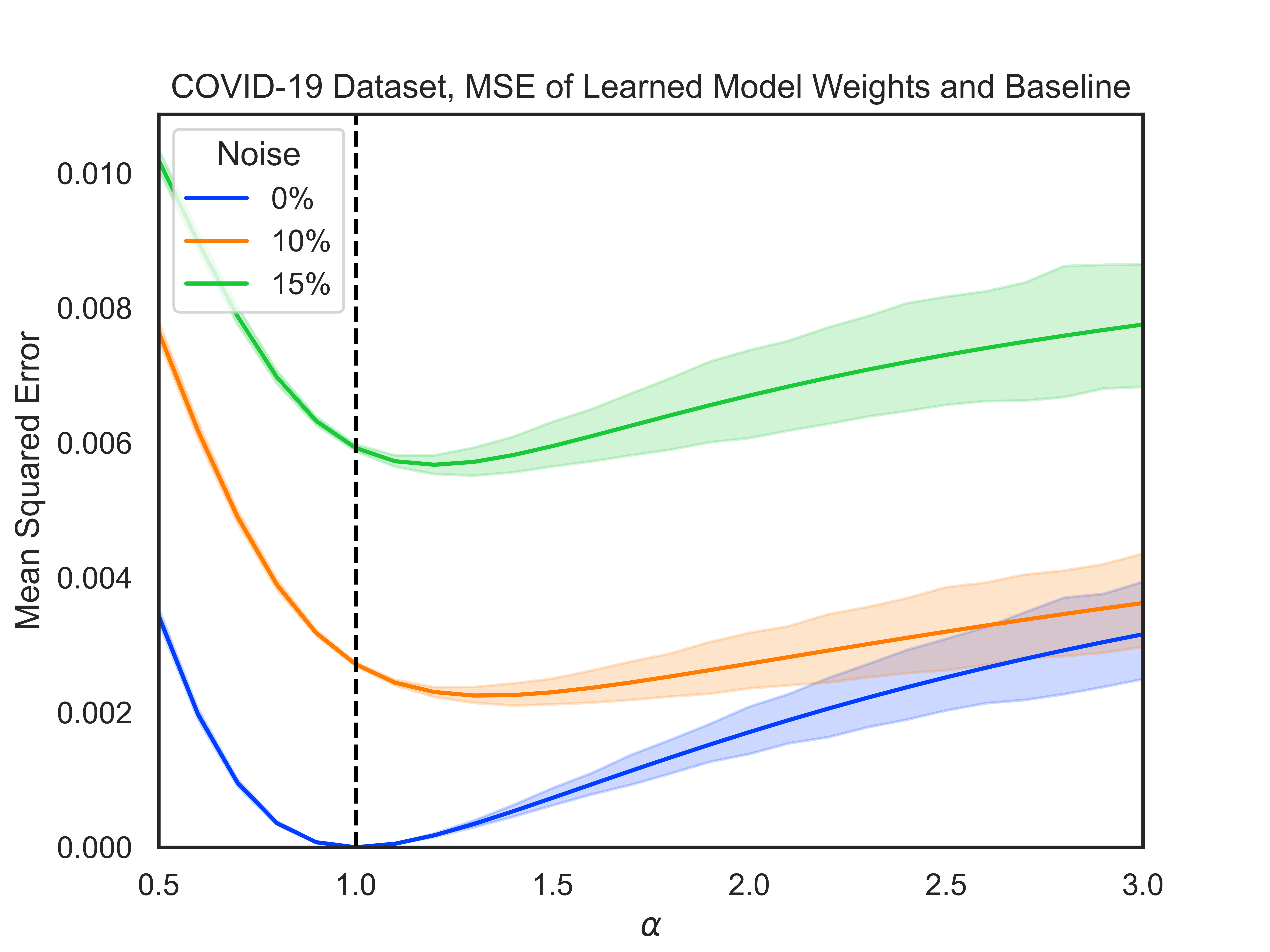}}
\caption{
A US COVID-19 survey dataset~\citep{Salomone2111454118}, plotting MSE of logistic regression ($\alpha = 1$) baseline parameters on clean data and model parameters learned using $\alpha$-loss on noisy data vs.~$\alpha$. For non-zero noise the MSE is minimized for $\alpha>1$, but some care is required in increasing $\alpha\gg 1$ as the confidence intervals widen, likely due to this being non-realizable and highly imbalanced data.
Also, note that we do not consider noise beyond $15\%$ because of stratification, i.e., higher noise would completely overwhelm the class imbalance. 
}
\label{fig:covid_MSE}
\bignegspace
\end{figure}

In Figure~\ref{fig:covid_MSE}, we compare the model parameters learned by the margin-based $\alpha$-loss on noisy data with those of the $\alpha=1$ (logistic regression) trained on \textit{clean} data, which is a calibrated model~\citep{tu1996advantages}; we are interested in the utility of $\alpha > 1$ to ``give up'' on the noisy training data and recover the clean model parameters.
We see that tuning $\alpha>1$ gives gains for both non-zero noise levels, but there is a clear tradeoff with optimization complexity; this is indicated by the widening confidence intervals as $\alpha$ increases~\citep{sypherd2020alpha}, which could be due to the COVID-19 survey data being non-realizable and highly imbalanced.  
%
However, we note that reduced MSE for $\alpha > 1$ directly translates to gains on test-time accuracy; in Figure~\ref{fig:covid_sensitivity} in Appendix~\ref{appendix:experiments} we show that the sensitivity of the model increases with increasing $\alpha$.


\section{CONCLUSION}
In this work, we have presented results indicating that the margin-based $\alpha$-loss is able to ``give up'' on noisy training data and robustly train simple models.
For boosting, we have shown, theoretically and experimentally, how tuning $\alpha > 1$ can address the negative result of~\cite{long2010random}, in the process presenting a novel robust boosting algorithm called AdaBoost.$\alpha$, which may be of independent interest.
For linear models, we have also presented theoretical and experimental results, notably showing robustness for a highly imbalanced COVID-19 survey dataset~\citep{Salomone2111454118}.
Additionally, we have presented straightforward tuning characteristics for $\alpha$ in both settings.
Lastly, regarding societal impacts, we argue that it is important to consider simple models, since they are more interpretable and have reduced energy cost; we have shown for multiple relevant domains that one can robustly train simple models with a single $\alpha$ hyperparameter.

\subsubsection*{Acknowledgements}
We thank the anonymous reviewers for their comments, and Monica Welfert at Arizona State University for her contributions to the preliminary code. 
This work is supported in part by NSF grants SCH-2205080, CIF-1901243, CIF-2134256, CIF-2007688, CIF-1815361, a Google AI for Social Good grant, and an Office of Naval Research grant N00014-21-1-2615.
This research is based on survey results from Carnegie Mellon University’s Delphi Group.

\bibliography{TS_ML}

\newpage 
\begin{appendices}

\section{Further Theoretical Results, Commentary, and Proofs} \label{appendix:proofs}

\paragraph{Classification-Calibration} Regarding the statistical efficacy of $\tilde{l}^{\alpha}$,~\cite{sypherd2022journal} showed that the margin-based $\alpha$-loss is classification-calibrated for all $\alpha \in (0,\infty]$, which is a necessary minimum condition for a ``good'' margin-based loss function to satisfy.
In words, a margin-based loss function is classification-calibrated if for each feature vector, the minimizer of its conditional risk agrees in sign with the Bayes optimal predictor; this is a pointwise form of Fisher consistency from the perspective of classification~\citep{lin2004note,bartlett2006convexity}.

\begin{lemma} \label{lem:derivativesmarginalphaloss}
For $\alpha \in (0,\infty]$, the first derivative of $\tilde{l}^{\alpha}$ with respect to the margin is given by 
\begin{align} \label{eq:dermaralphaloss}
\tilde{l}^{\alpha'}(z) := \dfrac{d}{dz} \tilde{l}^{\alpha}(z) = - \sigma'(z)\sigma(z)^{-\frac{1}{\alpha}}, 
\end{align}
its second derivative is given by 
\begin{align} \label{eq:2nddermaralphaloss}
\tilde{l}^{\alpha''}(z) := \dfrac{d^{2}}{dz^{2}} \tilde{l}^{\alpha}(z) = \frac{e^{z}\left(\alpha e^{z} - \alpha + 1\right)}{\alpha (e^{-z} + 1)^{-\frac{1}{\alpha}} (e^{z} + 1)^{3}},
\end{align}
and its third derivative is given by
\begin{align} \label{eq:3ddermaralphaloss}
\tilde{l}^{\alpha'''}(z) := \dfrac{d^{3}}{dz^{3}} \tilde{l}^{\alpha}(z) = \frac{-e^{2z} + 4 e^{z} - 1 - \frac{3 e^{z} - 2}{\alpha} - \frac{1}{\alpha^{2}}}{ e^{-z} \left(1+e^{-z} \right)^{-\frac{1}{\alpha}} (e^{z} + 1)^4}.
\end{align}
\end{lemma}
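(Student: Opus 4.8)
The plan is to compute the three derivatives directly from Definition~\ref{def:MarginLoss} by repeated application of the chain and product rules, exploiting the two elementary identities $\sigma'(z) = \sigma(z)\sigma(-z)$ from~\eqref{eq:derivativepropertysigmoid} together with its mirror $\frac{d}{dz}\sigma(-z) = -\sigma(z)\sigma(-z)$ (which follows since $\sigma'$ is even). These two rules keep every derivative of the sigmoid expressible as a polynomial in $\sigma(z)$ and $\sigma(-z)$, which is what makes the iterated differentiation tractable. First I would differentiate $\tilde{l}^{\alpha}(z) = \frac{\alpha}{\alpha-1}(1 - \sigma(z)^{1-1/\alpha})$ once: the chain rule produces the factor $-(1-1/\alpha)\sigma(z)^{-1/\alpha}\sigma'(z)$, and since $\frac{\alpha}{\alpha-1}(1-1/\alpha) = 1$, the awkward prefactor cancels exactly, leaving $\tilde{l}^{\alpha'}(z) = -\sigma'(z)\sigma(z)^{-1/\alpha}$, which is~\eqref{eq:dermaralphaloss}.

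For the second and third derivatives I would rewrite the first derivative in the product form $\tilde{l}^{\alpha'}(z) = -\sigma(z)^{1-1/\alpha}\sigma(-z)$ (using $\sigma'(z) = \sigma(z)\sigma(-z)$ and $1-\sigma(z) = \sigma(-z)$). Differentiating this product with the two sigmoid rules above is purely mechanical: each differentiation sends a monomial $\sigma(z)^{a}\sigma(-z)^{b}$ to a combination of terms of the same shape, so $\tilde{l}^{\alpha''}$ and $\tilde{l}^{\alpha'''}$ emerge as $\sigma(z)^{1-1/\alpha}\sigma(-z)$ times a low-degree polynomial in $\sigma(z)$ with coefficients depending on $1/\alpha$. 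The final step in each case is to substitute $\sigma(z) = e^{z}/(1+e^{z})$ and $\sigma(-z) = 1/(1+e^{z})$ and collect over a common denominator; I expect the $\sigma(z)^{1-1/\alpha}$ factor to become $(e^{-z}+1)^{-1/\alpha}$ in the denominator and the bracketed polynomial to reproduce the numerators $e^{z}(\alpha e^{z} - \alpha + 1)$ and $-e^{2z}+4e^{z}-1-(3e^{z}-2)/\alpha - 1/\alpha^{2}$ stated in~\eqref{eq:2nddermaralphaloss} and~\eqref{eq:3ddermaralphaloss}.

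Finally, because the lemma is stated for $\alpha \in (0,\infty]$, I would confirm that the formulas cover the boundary cases $\alpha=1$ and $\alpha=\infty$: setting $1/\alpha = 1$ and $1/\alpha = 0$ in~\eqref{eq:dermaralphaloss}--\eqref{eq:3ddermaralphaloss} and checking agreement with direct differentiation of the continuous extensions $\tilde{l}^{1} = -\log\sigma$ and $\tilde{l}^{\infty} = 1-\sigma$ from Definition~\ref{def:MarginLoss} (for instance $\tilde{l}^{1'} = \sigma - 1$ and $\tilde{l}^{1''} = \sigma'$, which the closed forms indeed recover). The main obstacle is not conceptual but bookkeeping: the third derivative requires carefully tracking the coefficients of the $1/\alpha$ and $1/\alpha^{2}$ terms through two rounds of the product rule and then matching the collected polynomial against the stated numerator. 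Organizing the entire computation in terms of $\sigma(z)$ and $\sigma(-z)$ until the very last substitution is precisely what keeps this manageable and guards against the sign and exponent errors that the $(1+e^{z})$ and $(1+e^{-z})$ powers otherwise invite.
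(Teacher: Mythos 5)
Your proposal is correct: the paper gives no explicit proof of this lemma (it is stated as a routine calculation), and your plan --- differentiate $\tilde{l}^{\alpha}$ directly, use the cancellation $\tfrac{\alpha}{\alpha-1}(1-\tfrac{1}{\alpha})=1$, then iterate the product rule on monomials in $\sigma(z)$ and $\sigma(-z)$ via $\sigma'(z)=\sigma(z)\sigma(-z)$ before substituting exponentials at the end --- is exactly the intended computation. I checked the bookkeeping: with $a=1-\tfrac{1}{\alpha}$ one gets $\tilde{l}^{\alpha''}=\sigma^{a}\bar{\sigma}\,(\sigma-a\bar{\sigma})$ and $\tilde{l}^{\alpha'''}=\sigma^{a}\bar{\sigma}\,\bigl((3a+1)\sigma\bar{\sigma}-\sigma^{2}-a^{2}\bar{\sigma}^{2}\bigr)$ with $\bar{\sigma}:=\sigma(-z)$, which reduce precisely to~\eqref{eq:2nddermaralphaloss} and~\eqref{eq:3ddermaralphaloss}, and the $\alpha=1,\infty$ boundary checks you describe also go through.
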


\paragraph{Discussion of Algorithm~\ref{algo:adaboost.alpha}}
The weighting used for the weak learner in Algorithm~\ref{algo:adaboost.alpha}, namely that $\theta_{t}  = \frac{1}{2}\log\left( \frac{1 - \epsilon_t}{\epsilon_t} \right)$, is the expression commonly used in vanilla AdaBoost ($\alpha = 1/2$ for AdaBoost.$\alpha$)~\citep{schapire2013boosting}. 
However, there are several other possibilities of $\theta_{t}$ for AdaBoost.$\alpha$, due to its interpolating characteristics. 
One possibility is to use $\theta_{t}  = \alpha \log\left( \frac{1 - \epsilon_t}{\epsilon_t} \right)$, for $\alpha \in (0,\infty]$, which is the optimal classification function of the margin-based $\alpha$-loss~\citep{sypherd2022journal}. 
Another possibility is to use a Wolfe line search~\citep{telgarsky2013boosting}. 
Consideration of the weighting of the weak learners, and the ensuing convergence (and consistency) characteristics for  Algorithm~\ref{algo:adaboost.alpha}, is left for future work. 


\subsection{Proof of Theorem~\ref{thm:LSmarginalphalossrobust}} \label{appendix:proofofLStheorem}

\begin{figure}[t]
     \centering
     \begin{subfigure}[b]{0.475\textwidth}
         \centering
         \includegraphics[width=\textwidth]{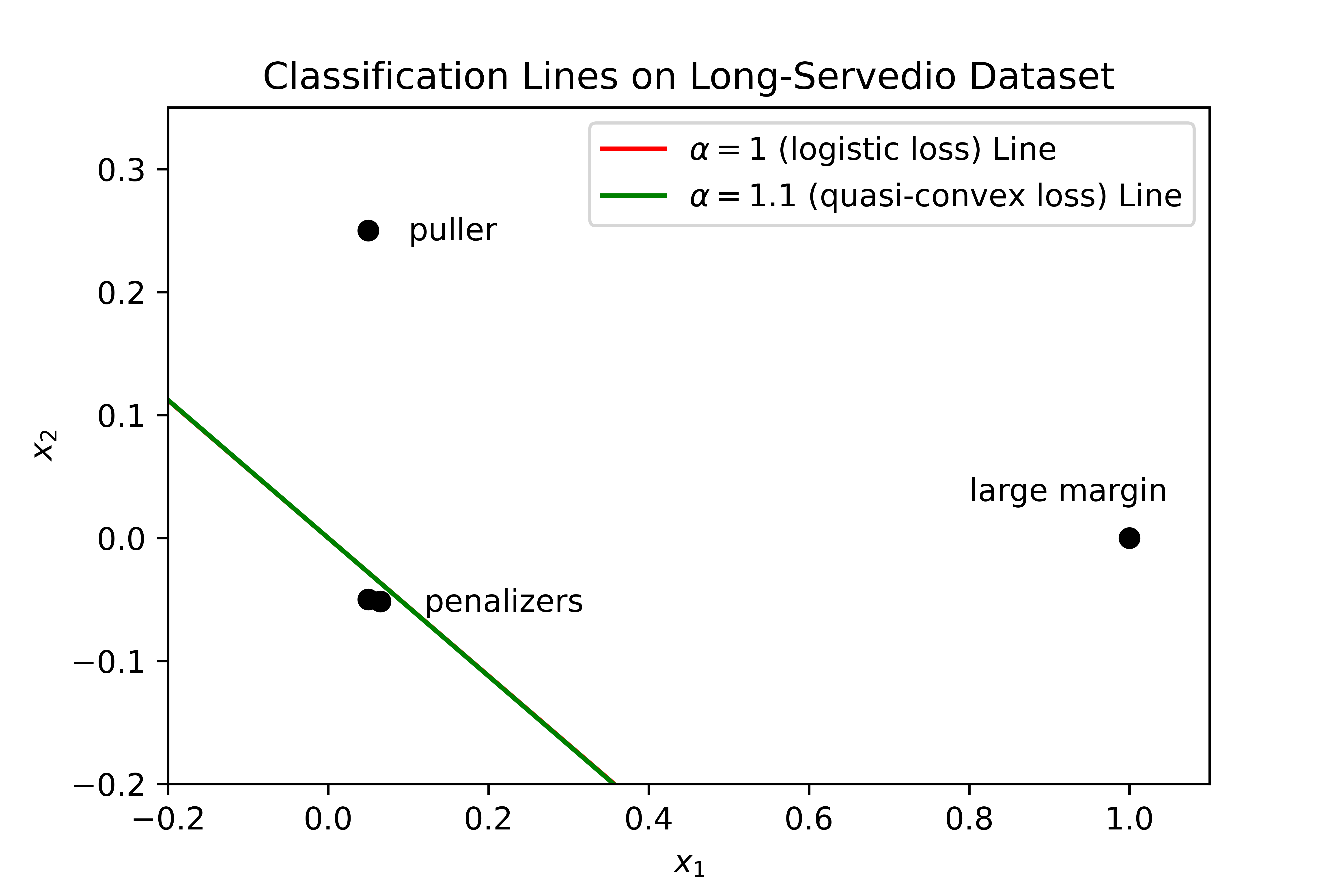}
         \caption{$\alpha = 1.1$ classification line.}
     \end{subfigure}
     \begin{subfigure}[b]{0.475\textwidth}
         \centering
         \includegraphics[width=\textwidth]{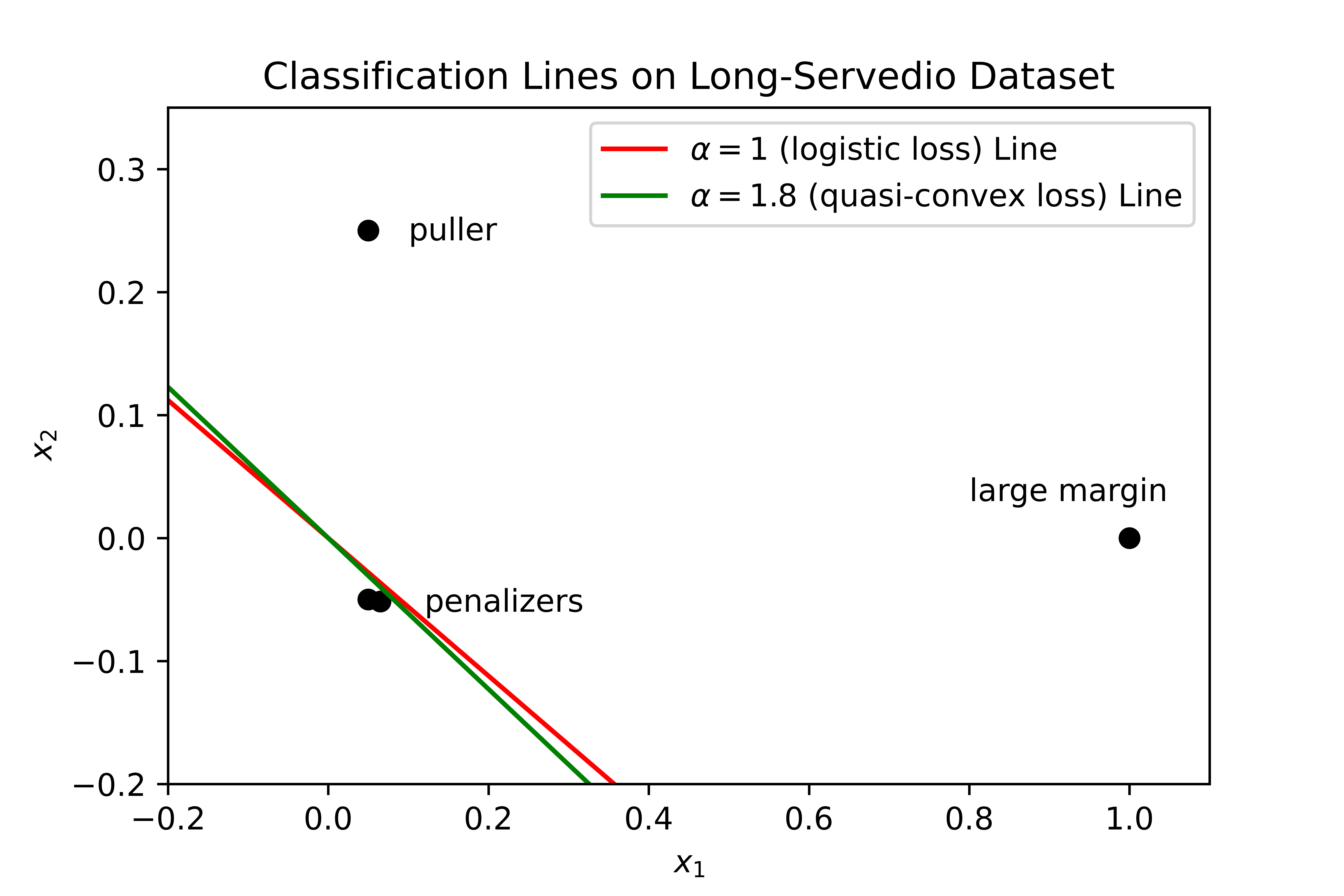}
         \caption{$\alpha = 1.8$ classification line.}
     \end{subfigure}
     \begin{subfigure}[b]{0.475\textwidth}
         \centering
         \includegraphics[width=\textwidth]{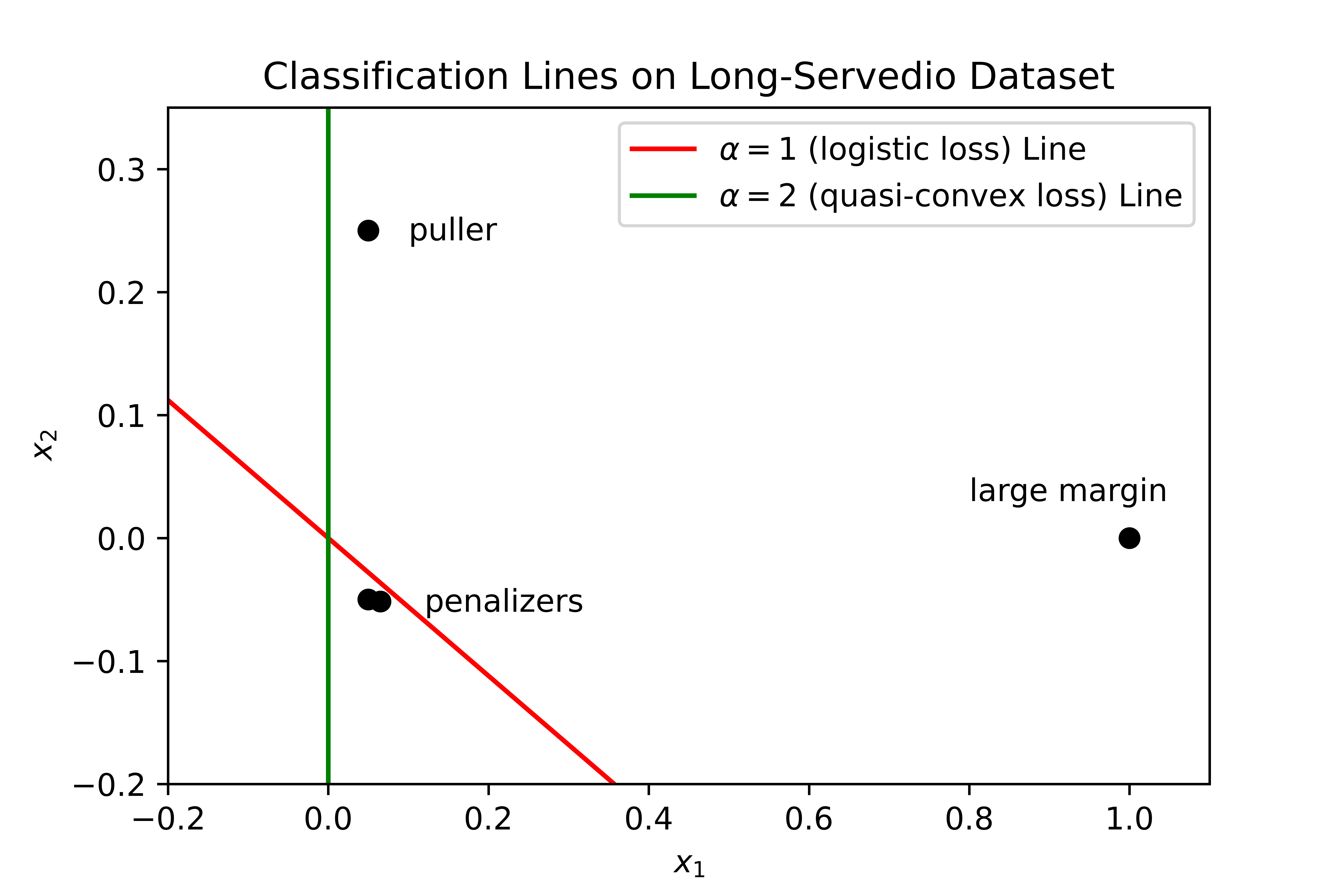}
         \caption{$\alpha = 2$ classification line.}
     \end{subfigure}
     \begin{subfigure}[b]{0.475\textwidth}
         \centering
         \includegraphics[width=\textwidth]{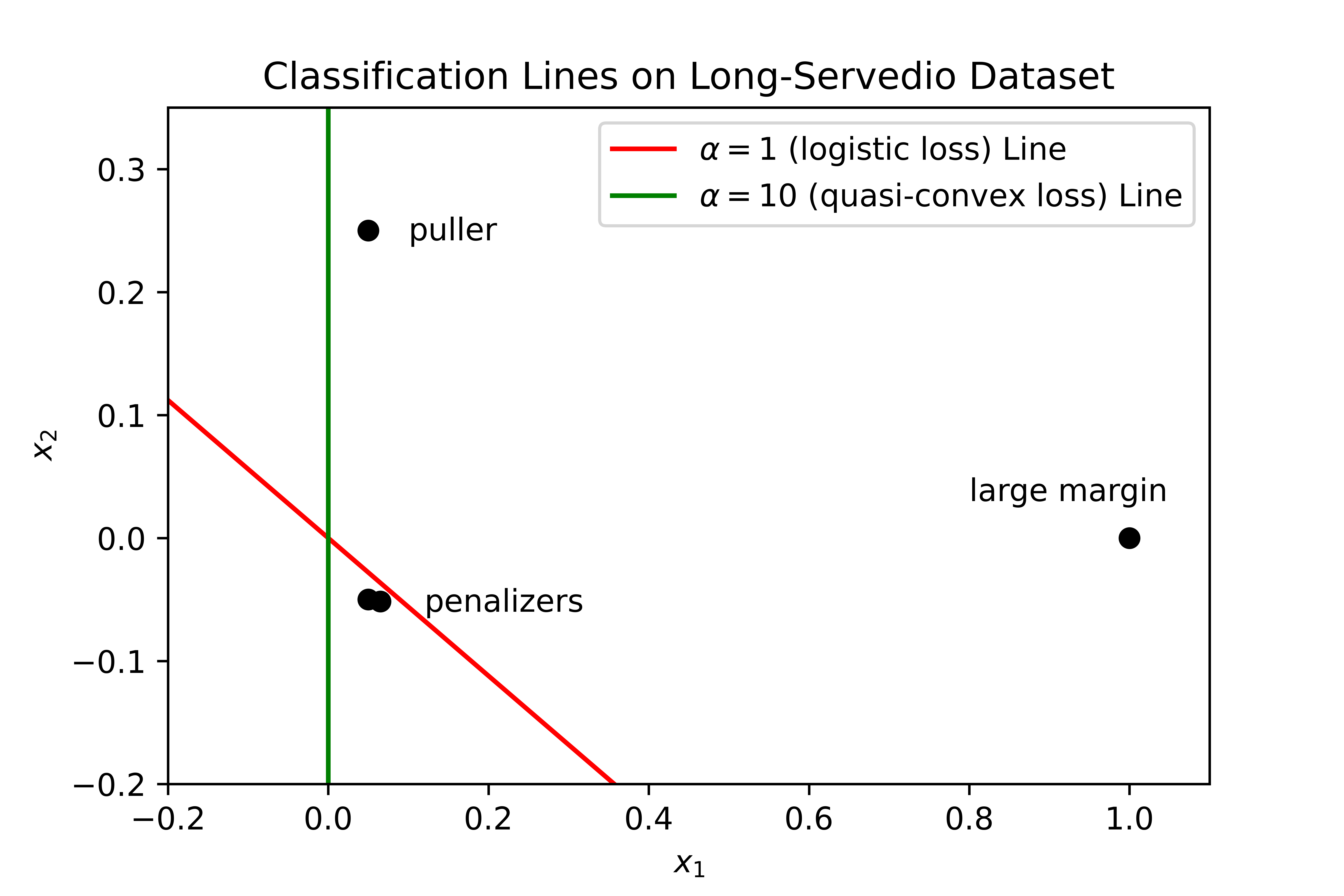}
         \caption{$\alpha = 10$ classification line.}
     \end{subfigure}
        \caption{
        Companion figure of Figure~\ref{Fig:LSclassificationlines} where $N = 2$ ($p=1/3$) and $\gamma = 1/20$ for $\alpha \in \{1.1,1.8,2,10\}$.
        }
        \label{fig:LSclassificationlines_more}
\end{figure}

The strategy of the proof is as follows:
\begin{enumerate}
    \item First, we quantify what a perfect classification solution on the Long-Servedio dataset looks like, namely, inequality requirements involving $\theta_{1}$ and $\theta_{2}$ derived from the interaction of the ``penalizers'', ``puller'', and ``large margin'' examples and the linear hypothesis class.
    \item Next, we invoke the pathological result of~\citep{long2010random}, which yields a ``bad'' margin $\gamma$ for any noise level and the margin-based $\alpha$-loss with $\alpha \leq 1$ (i.e., convex losses as articulated in Proposition~\ref{Prop:alpha-loss-convex}).
    \item Then, we reduce the first order equation of the margin-based $\alpha$-loss evaluated at the four examples over the linear weights for $\alpha \in (1,\infty)$, and through a cancellation yield an equation which has a function of $\theta_{1}$ on the LHS and a similar function of both $\theta_{1}$ and $\theta_{2}$ on the RHS, i.e., an asymmetric equation not allowing full analytical solution but allowing reasoning about possible solutions.
    \item Finally, using continuity arguments exploiting the giving up properties of the quasi-convex margin-based $\alpha$-losses for $\alpha \in (1,\infty)$, we guarantee the existence of a solution $(\theta_{1}^{*},\theta_{2}^{*})$ with perfect classification accuracy on the \textit{clean} Long-Servedio dataset under the given pathological margin $\gamma$.
\end{enumerate}


By the construction of the hypothesis class~\citep{long2010random}, namely that $\mathcal{H} = \{h_{1}(\mathbf{x}) = x_{1},h_{2}(\mathbf{x}) = x_{2}\}$, 
notice that the classification lines (constructed by the boosting algorithm in this pathological example) are given by $\theta_{1}x_{1} + \theta_{2}x_{2} = 0$ and must pass through the origin.
Rewriting this classification line, we have that $x_{2} = -\frac{\theta_{1}}{\theta_{2}}x_{1}$.
Reasoning about perfect classification weights $(\theta_{1}^{*},\theta_{2}^{*})$, notice (see Figure~\ref{Fig:LSclassificationlines}) that the ``large margin'' example forces $\theta_{1}^{*} > 0$. 
Further, reasoning about the ``penalizers'', we find that we require $\theta_{1}^{*} > \theta_{2}^{*}$, and reasoning about the ``puller'', we also find that we require $\theta_{1}^{*} > -5\theta_{2}^{*}$.
Thus, perfect classification weights on the Long-Servedio dataset must satisfy all of the following:
\begin{align} \label{eq:LSgoodsolutions}
\theta_{1}^{*} > 0 \quad \text{and} \quad  \theta_{1}^{*} > \theta_{2}^{*} \quad \text{and} \quad \theta_{1}^{*} > -5\theta_{2}^{*}.
\end{align}
We now examine the solutions to the first-order equation for $\alpha \in (0,\infty]$.

As in~\citep{long2010random}, let $1 < N < \infty$ be the noise parameter such that the noise rate $p = \frac{1}{N+1}$, and hence $1-p = \frac{N}{N+1}$.
Under the Long-Servedio setup with the margin-based $\alpha$-loss (and recalling that all four examples have classification label $y = 1$), we have that 
\begin{align}
R_{\alpha}^{p}(\theta_{1},\theta_{2}) &= \frac{1}{4} \sum\limits_{x \in S} \left[(1-p) \tilde{l}^{\alpha}(\theta_{1}x_{1} + \theta_{2}x_{2}) + p \tilde{l}^{\alpha}(-\theta_{1}x_{1} - \theta_{2}x_{2}) \right].
\end{align}

It is clear that minimizing $4(N+1)R_{\alpha}^{p}$ is the same as minimizing $R_{\alpha}^{p}$ so we shall henceforth work with $4(N+1)R_{\alpha}^{p}$ since it gives rise to cleaner expressions. 
We have that 
\begin{align}
4(N+1)R_{\alpha}^{p}(\theta_{1},\theta_{2}) &= \sum\limits_{x \in S} [N \tilde{l}^{\alpha}(\theta_{1}x_{1} + \theta_{2}x_{2}) + \tilde{l}^{\alpha}(-\theta_{1}x_{1} - \theta_{2}x_{2})] \\
\nonumber &= N \tilde{l}^{\alpha}(\theta_{1}) + \tilde{l}^{\alpha}(-\theta_{1}) + 2N \tilde{l}^{\alpha}(\theta_{1}\gamma - \theta_{2}\gamma) + 2 \tilde{l}^{\alpha}(-\theta_{1}\gamma + \theta_{2} \gamma) \\ &\quad\quad\quad\quad\quad\quad\quad\quad\quad\quad\quad\quad+ N \tilde{l}^{\alpha}(\theta_{1}\gamma + 5\theta_{2} \gamma) + \tilde{l}^{\alpha}(-\theta_{1}\gamma - 5\theta_{2}\gamma). \label{eq:LSlandscape}
\end{align}
See Figure~\ref{fig:LSoptimizationlandscape} for a visualization of~\eqref{eq:LSlandscape}.
\begin{figure}[t]
     \centering
     \begin{subfigure}[b]{0.475\textwidth}
         \centering
         \includegraphics[width=\textwidth]{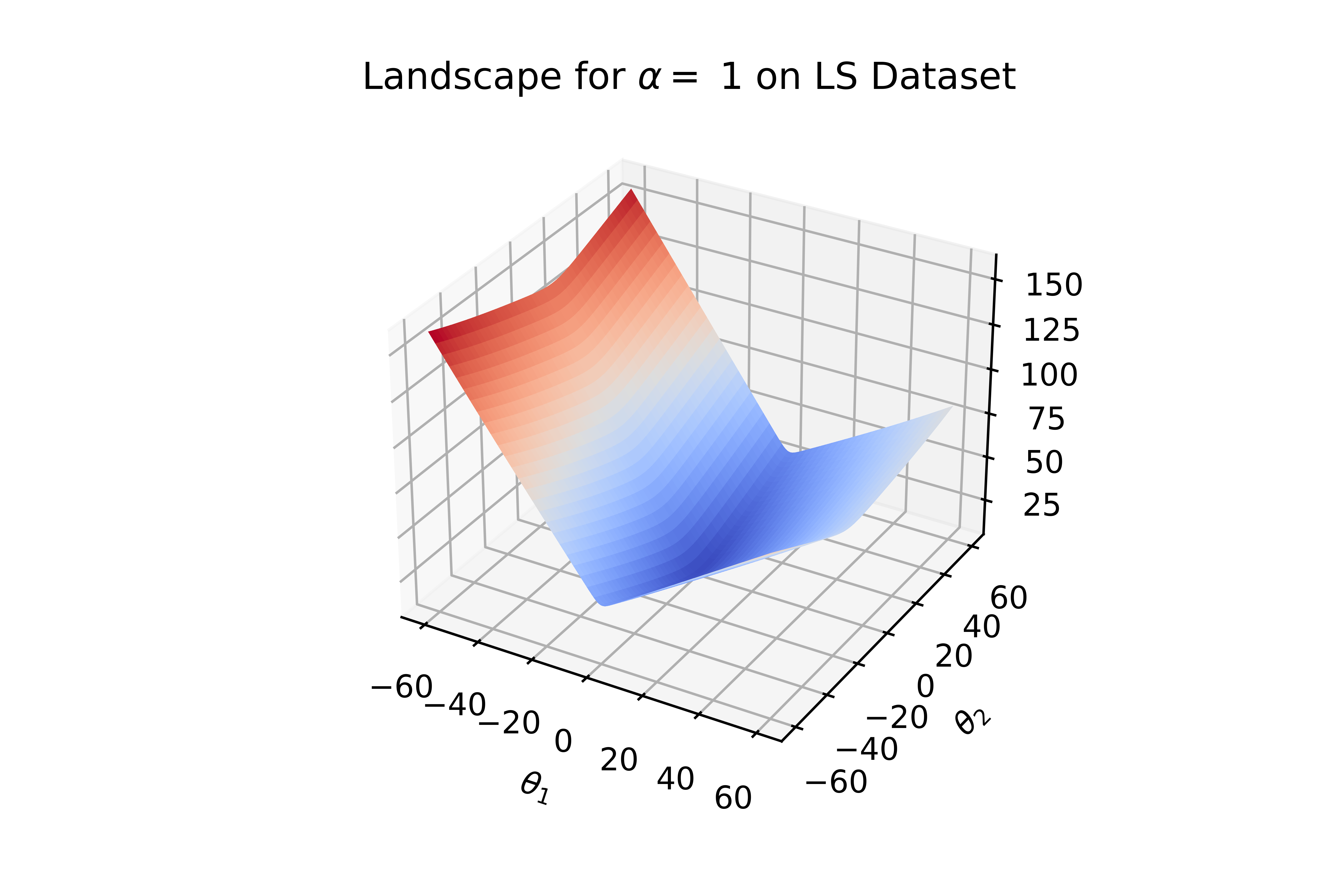}
         \caption{$\alpha = 1$ Optimization Landscape.}
     \end{subfigure}
     \begin{subfigure}[b]{0.475\textwidth}
         \centering
         \includegraphics[width=\textwidth]{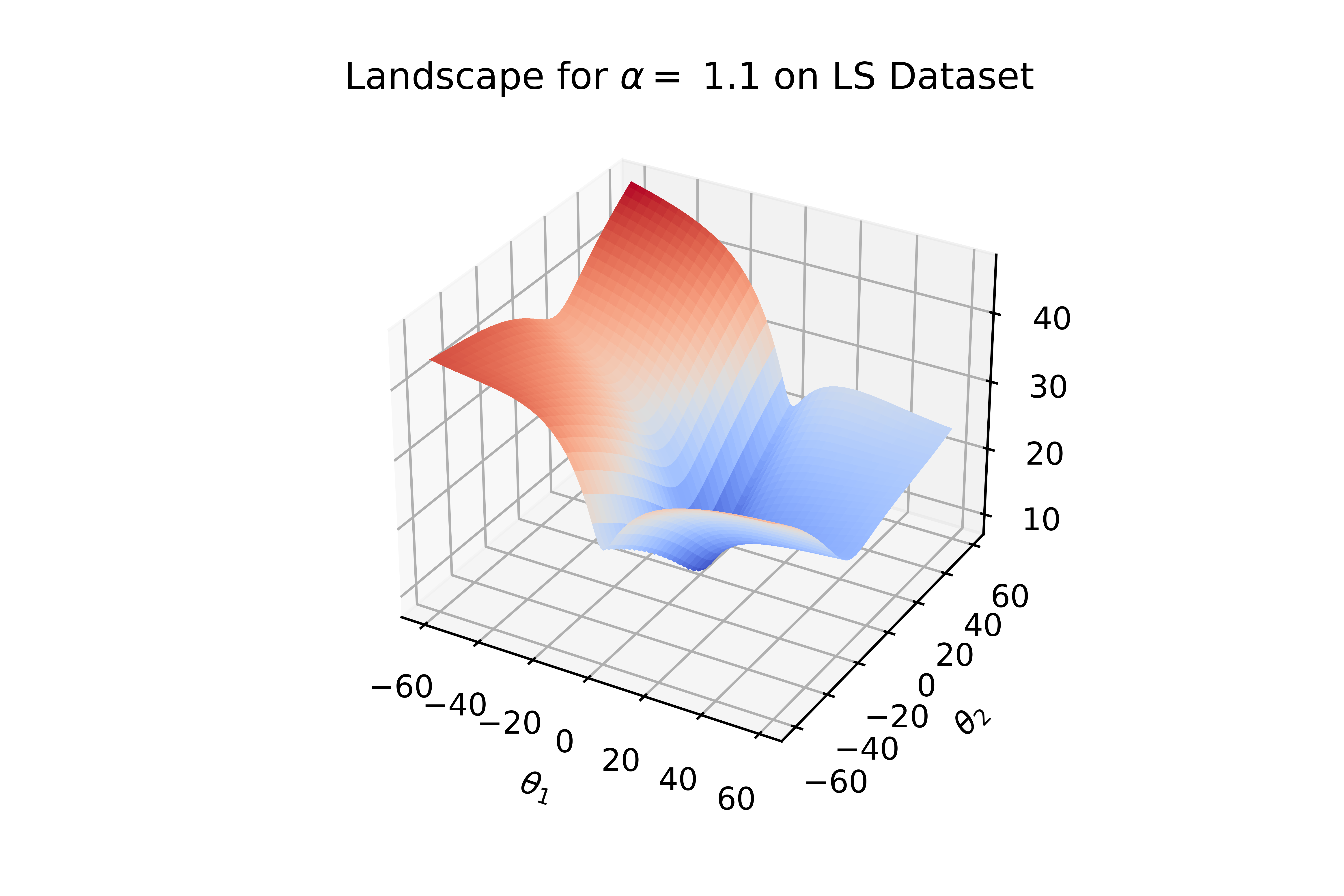}
         \caption{$\alpha = 1.1$ Optimization Landscape.}
     \end{subfigure}
     \begin{subfigure}[b]{0.475\textwidth}
         \centering
         \includegraphics[width=\textwidth]{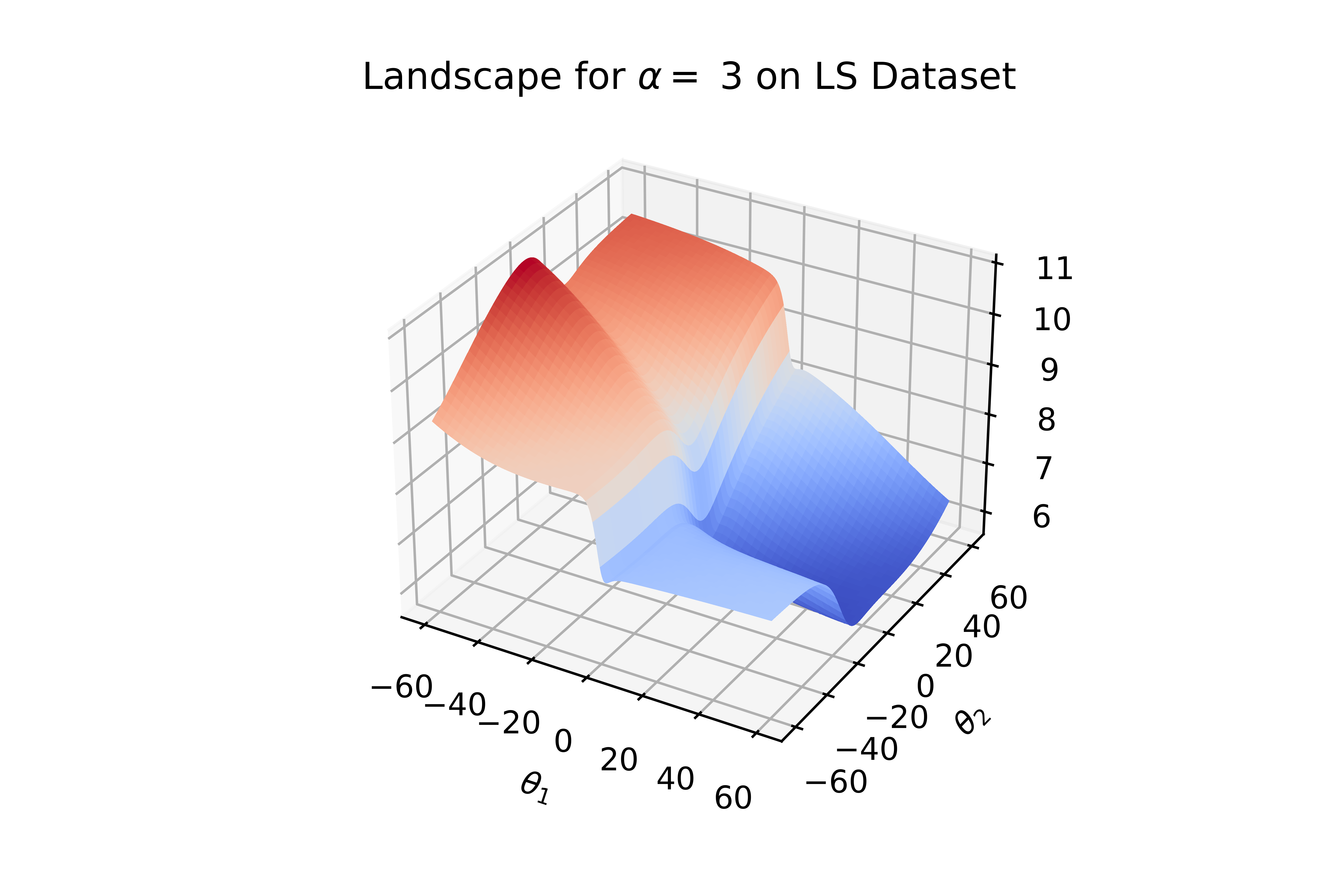}
         \caption{$\alpha = 3$ Optimization Landscape.}
     \end{subfigure}
     \begin{subfigure}[b]{0.475\textwidth}
         \centering
         \includegraphics[width=\textwidth]{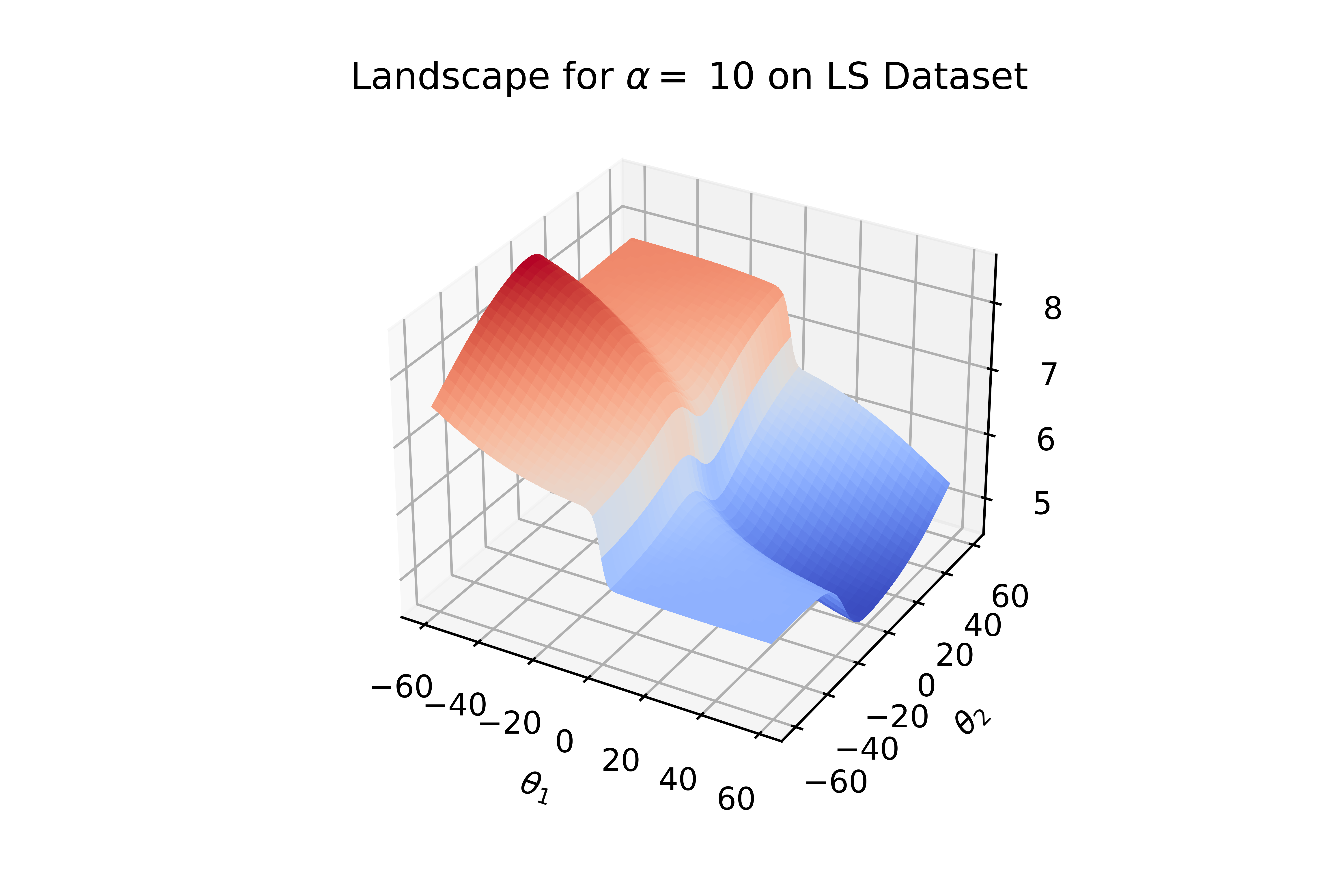}
         \caption{$\alpha = 10$ Optimization Landscape.}
     \end{subfigure}
        \caption{
        Plots of optimization landscapes on the Long-Servedio dataset, i.e.~\eqref{eq:LSlandscape}, for $\alpha \in \{1,1.1,3,10\}$.
        Aligning with Figure~\ref{Fig:LSclassificationlines}, $N = 2$ and $\gamma = 1/20$. 
        For $\alpha = 1$, the landscape is convex, which was formally proved (for any distribution) in~\citep{sypherd2020alpha}.
        For $\alpha = 1.1$, the landscape is non-convex, but not too much, which was also quantified in~\citep{sypherd2020alpha}.
        For $\alpha = 3$, the landscape is more non-convex, and notice that the quality of the solutions (in the sense of~\eqref{eq:LSgoodsolutions}) is significantly better for $\alpha = 3$.
        For $\alpha = 10$, the landscape strongly resembles the $\alpha = 3$, but is ``flatter''. 
        }
        \label{fig:LSoptimizationlandscape}
\end{figure}

Again following notation in~\citep{long2010random},
let $P_{1}^{\alpha}(\theta_{1},\theta_{2})$ and $P_{2}^{\alpha}(\theta_{1},\theta_{2})$ be defined as follows:
\begin{align}
P_{1}^{\alpha}(\theta_{1},\theta_{2}) := \frac{\partial}{\partial \theta_{1}} 4(N+1)R_{\alpha}^{p}(\theta_{1},\theta_{2}) \quad \text{and} \quad P_{2}^{\alpha}(\theta_{1},\theta_{2}) := \frac{\partial}{\partial \theta_{2}} 4(N+1)R_{\alpha}^{p}(\theta_{1},\theta_{2}).
\end{align}
Thus, differentiating~\eqref{eq:LSlandscape} by $\theta_{1}$ and $\theta_{2}$ respectively, we have
\begin{align}
\nonumber P_{1}^{\alpha}(\theta_{1},\theta_{2}) &= N \tilde{l}^{\alpha{'}}(\theta_{1}) - \tilde{l}^{\alpha{'}}(-\theta_{1}) + 2\gamma N \tilde{l}^{\alpha{'}}(\theta_{1}\gamma - \theta_{2}\gamma) \\
&\quad\quad\quad\quad\quad\quad\quad\quad -2\gamma \tilde{l}^{\alpha{'}}(-\theta_{1}\gamma + \theta_{2}\gamma) + N\gamma \tilde{l}^{\alpha{'}}(\theta_{1}\gamma + 5 \theta_{2}\gamma) - \gamma \tilde{l}^{\alpha{'}}(-\theta_{1}\gamma - 5\theta_{2}\gamma),
\end{align}
and
\begin{align}
P_{2}^{\alpha}(\theta_{1},\theta_{2}) &= -2\gamma N \tilde{l}^{\alpha{'}}(\theta_{1}\gamma - \theta_{2}\gamma) + 2\gamma \tilde{l}^{\alpha{'}}(-\theta_{1}\gamma+\theta_{2}\gamma) + 5 \gamma N \tilde{l}^{\alpha{'}}(\theta_{1}\gamma + 5 \theta_{2} \gamma) - 5 \gamma \tilde{l}^{\alpha{'}}(-\theta_{1} \gamma - 5 \theta_{2} \gamma).
\end{align}

In order to reason about the quality of the solutions to~\eqref{eq:LSlandscape} for $\alpha \in (1,\infty)$, we want to find where $P_{1}^{\alpha}(\theta_{1},\theta_{2}) = P_{2}^{\alpha}(\theta_{1},\theta_{2}) = 0$ for the margin-based $\alpha$-loss.
So, rewriting $P_{1}^{\alpha}(\theta_{1},\theta_{2}) = 0$, we obtain 
\begin{align} \label{eq:marls0}
\nonumber N \tilde{l}^{\alpha{'}}(\theta_{1}) + 2\gamma N\tilde{l}^{\alpha{'}}(\theta_{1}\gamma - \theta_{2}\gamma) &+ N\gamma \tilde{l}^{\alpha{'}}(\theta_{1}\gamma + 5 \theta_{2}\gamma) \\ &= \tilde{l}^{\alpha{'}}(-\theta_{1}) + 2\gamma \tilde{l}^{\alpha{'}}(-\theta_{1}\gamma + \theta_{2}\gamma)  + \gamma \tilde{l}^{\alpha{'}}(-\theta_{1}\gamma - 5\theta_{2}\gamma), 
\end{align}
and rewriting $P_{2}^{\alpha}(\theta_{1},\theta_{2}) = 0$, we obtain 
\begin{align} \label{eq:marls1}
2\gamma \tilde{l}^{\alpha{'}}(-\theta_{1}\gamma+\theta_{2}\gamma)
+ 5 \gamma N \tilde{l}^{\alpha{'}}(\theta_{1}\gamma + 5 \theta_{2} \gamma) = 2\gamma N \tilde{l}^{\alpha{'}}(\theta_{1}\gamma - \theta_{2}\gamma) + 5 \gamma \tilde{l}^{\alpha{'}}(-\theta_{1} \gamma - 5 \theta_{2} \gamma).
\end{align}
Substituting~\eqref{eq:marls1} into~\eqref{eq:marls0}, we are able to cancel a term and recover 
\begin{align}
N\tilde{l}^{\alpha{'}}(\theta_{1}) - \tilde{l}^{\alpha{'}}(-\theta_{1}) = 6\gamma \tilde{l}^{\alpha{'}}(-\theta_{1}\gamma - 5\theta_{2}\gamma) - 6N\gamma \tilde{l}^{\alpha{'}}(\theta_{1}\gamma + 5 \theta_{2} \gamma).
\end{align}
Rewriting, we obtain 
\begin{align} \label{eq:marls2}
N\tilde{l}^{\alpha{'}}(\theta_{1}) - \tilde{l}^{\alpha{'}}(-\theta_{1}) = - 6\gamma \left[ N\tilde{l}^{\alpha{'}}(\gamma(\theta_{1} + 5 \theta_{2})) - \tilde{l}^{\alpha{'}}(-\gamma(\theta_{1} + 5\theta_{2}))\right].
\end{align}
Notice that $B_{N}^{\alpha}(x) = N\tilde{l}^{\alpha{'}}(x) - \tilde{l}^{\alpha{'}}(-x)$, with $x \in \mathbb{R}$, is common on both sides.
From Lemma~\ref{lem:derivativesmarginalphaloss}, 
we have that $\tilde{l}^{\alpha{'}}(x) := - \sigma'(x)\sigma(x)^{-1/\alpha}$ for $\alpha \in (0,\infty]$.
Plugging this into $B_{N}^{\alpha}$ (and using the fact that $\sigma'(x)$ is an even function), we have that 
\begin{align}
B_{N}^{\alpha}(x) &= N\left(- \sigma'(x)\sigma(x)^{-1/\alpha} \right) - \left(- \sigma'(-x)\sigma(-x)^{-1/\alpha} \right) \\
&= \sigma'(x)\sigma(-x)^{-1/\alpha} - N \sigma'(x)\sigma(x)^{-1/\alpha} \\
&= \sigma'(x) \left( \sigma(-x)^{-1/\alpha} - N \sigma(x)^{-1/\alpha} \right).
\end{align}
Using this, we can rewrite~\eqref{eq:marls2} as
\begin{align} 
\label{eq:LSmarimportante} B_{N}^{\alpha}(\theta_{1}) &= - 6\gamma B_{N}^{\alpha}(\gamma(\theta_{1} + 5\theta_{2})),
\end{align}
which is equivalent to
\begin{align}
\label{eq:LSmarimportante2} \sigma'(\theta_{1})\left( \sigma(-\theta_{1})^{-1/\alpha} - N \sigma(\theta_{1})^{-1/\alpha} \right) &= - 6\gamma \sigma'(\gamma(\theta_{1} + 5\theta_{2})) \left( \sigma(-\gamma(\theta_{1} + 5\theta_{2}))^{-1/\alpha} - N \sigma(\gamma(\theta_{1} + 5\theta_{2}))^{-1/\alpha}\right), 
\end{align}
and both quantify solutions $(\theta_{1}^{*},\theta_{2}^{*})$.
Notice that it is unfortunately not possible to analytically reduce~\eqref{eq:LSmarimportante2} for general $\alpha \in (1,\infty)$ because it is a difference of $\alpha$ power expressions, i.e., a transcendental equation.
However, while we cannot analytically recover solutions $(\theta_{1}^{*}, \theta_{2}^{*})$ for $\alpha \in (1,\infty)$,
we can reason about the solutions themselves (from the perspective of~\eqref{eq:LSgoodsolutions}), because we can utilize nice properties of $B_{N}^{\alpha}$. 
For instance, one key thing to notice in~\eqref{eq:LSmarimportante} is that $B_{N}^{\alpha}$ on the LHS depends only on one component of the solution vector, namely $\theta_{1}$, whereas the RHS depends on both components of the solution vector $(\theta_{1},\theta_{2})$.

\begin{figure}[h]
     \centering
     \begin{subfigure}[b]{0.475\textwidth}
         \centering
         \includegraphics[width=\textwidth]{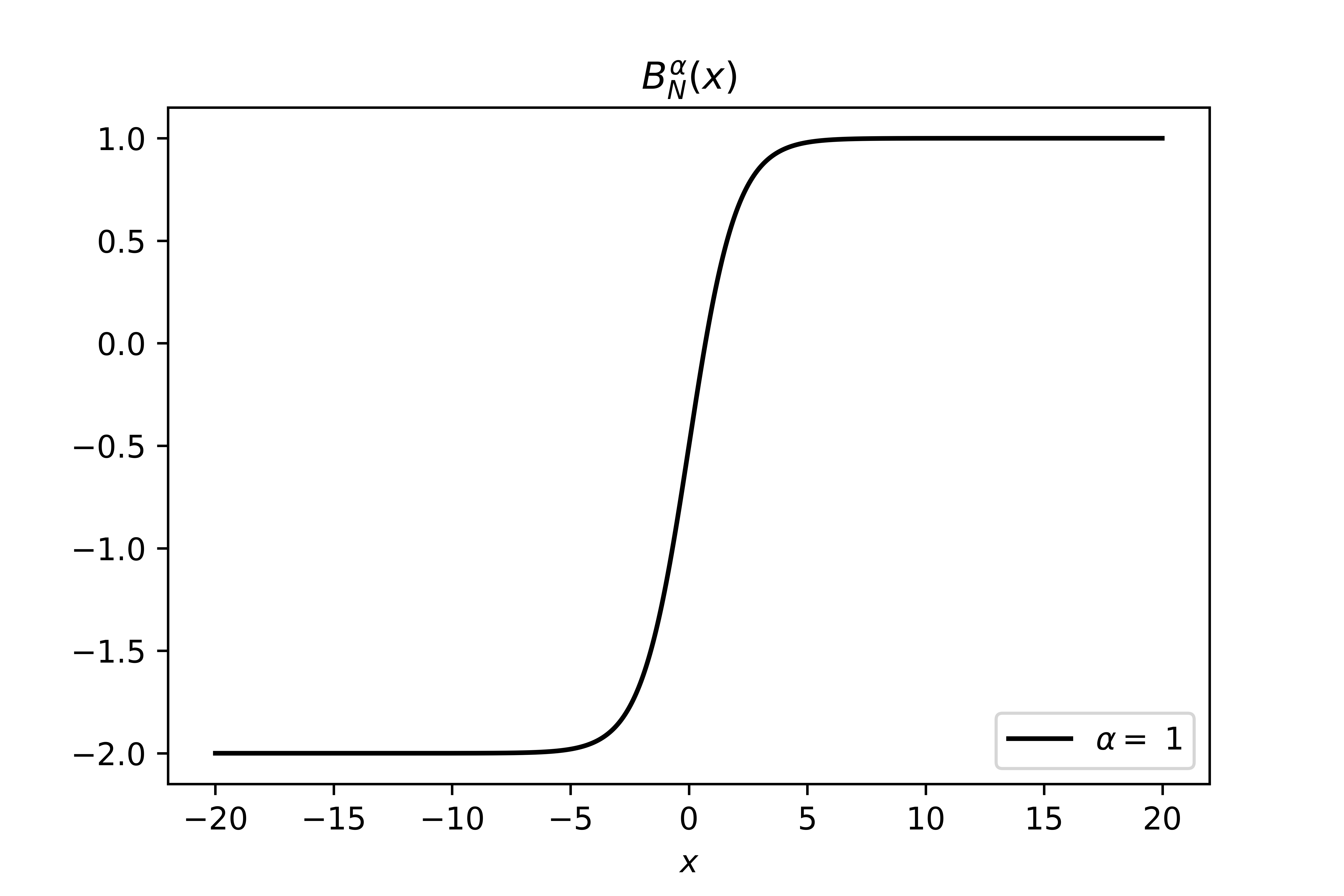}
         \caption{$B_{N}^{\alpha}(x)$ for $\alpha = 1$.}
     \end{subfigure}
     \begin{subfigure}[b]{0.475\textwidth}
         \centering
         \includegraphics[width=\textwidth]{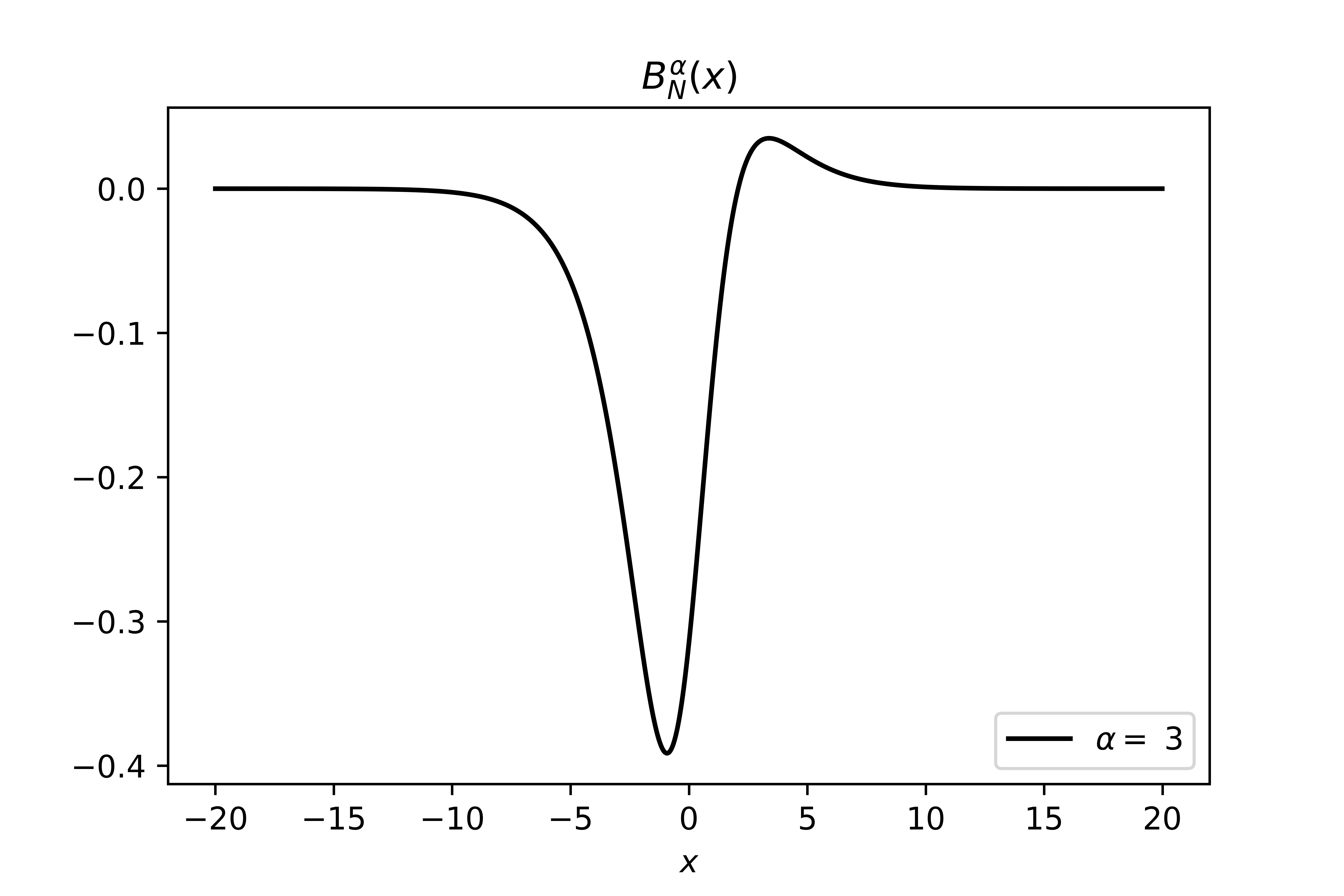}
         \caption{$B_{N}^{\alpha}(x)$ for $\alpha = 3$.}
     \end{subfigure}
        \caption{Plots of $B_{N}^{\alpha}(x)$ for $\alpha = 1$ and $3$, where $N = 2$. For $\alpha = 1$, notice that $B_{N}^{\alpha}(x)$ is non-decreasing in $x$. On the other hand, notice that for $\alpha = 3$, $B_{N}^{\alpha}(x)$ is \underline{not} non-decreasing.
        One can also see other properties of $B_{N}^{\alpha}$ in figure (b) as articulated in Lemma~\ref{lem:BNalpha}.
        }
        \label{fig:LSBN}
\end{figure}

To this end, we take a detour from the main thread to aggregate some nice properties of $B_{N}^{\alpha}$ for $\alpha > 1$. See Figure~\ref{fig:LSBN} for a plot of $B_{N}^{\alpha}$.

\begin{lemma} \label{lem:BNalpha}
Consider for $\alpha \in (0,\infty]$ and $1 < N < \infty$,
\begin{align}
B_{N}^{\alpha}(x) := \sigma'(x) \left(\sigma(-x)^{-1/\alpha} - N \sigma (x)^{-1/\alpha} \right),
\end{align}
where $x \in \mathbb{R}$.
The following are properties of $B_{N}^{\alpha}$:
\begin{enumerate}
    \item For $\alpha \leq 1$, $B_{N}^{\alpha}(x)$ is non-decreasing in $x$.
    \item For $\alpha > 1$, $B_{N}^{\alpha}(x)$ is \underline{not} non-decreasing in $x$.
    \item Note that $\lim\limits_{\alpha \rightarrow \infty} B_{N}^{\alpha}(x) = \sigma'(x) (1-N)$.
    \item For $\alpha > 1$, $\lim\limits_{x \rightarrow +\infty} B_{N}^{\alpha}(x) \rightarrow 0^{+}$ and $\lim\limits_{x \rightarrow -\infty} B_{N}^{\alpha}(x) \rightarrow 0^{-}$.
    \item For $\alpha > 1$, the resulting limits of the previous property are reversed for $-B_{N}^{\alpha}$.
    \item For $\alpha > 1$, $B_{N}^{\alpha}(x) > 0$ if and only if $x > \alpha \ln{N}$.
\end{enumerate}
\end{lemma}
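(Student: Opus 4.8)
The plan is to establish each of the six properties of $B_{N}^{\alpha}$ from its explicit form $B_{N}^{\alpha}(x) = \sigma'(x)\left(\sigma(-x)^{-1/\alpha} - N\sigma(x)^{-1/\alpha}\right)$, using the basic identities $\sigma(-x) = 1-\sigma(x)$, $\sigma'(x) = \sigma(x)\sigma(-x)$ (even), and the closed forms from Lemma~\ref{lem:derivativesmarginalphaloss}. I would tackle property 6 first, since it is the cleanest and anchors the sign behavior used by the others. Since $\sigma'(x) > 0$ everywhere, the sign of $B_{N}^{\alpha}(x)$ is governed entirely by the bracket $\sigma(-x)^{-1/\alpha} - N\sigma(x)^{-1/\alpha}$. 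Rearranging, $B_{N}^{\alpha}(x) > 0$ iff $\left(\sigma(x)/\sigma(-x)\right)^{1/\alpha} > N$, and since $\sigma(x)/\sigma(-x) = e^{x}$ (directly from the definition of $\sigma$), this reads $e^{x/\alpha} > N$, i.e.\ $x > \alpha\ln N$. This gives property 6 immediately, and I note that the same computation shows $B_{N}^{\alpha}(x)$ changes sign exactly once, at $x = \alpha\ln N$, for every $\alpha \in (0,\infty]$.

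Next I would dispatch the limit-type claims, properties 3, 4, and 5, which all follow from asymptotics of the closed form. For property 3, I take $\alpha \to \infty$ pointwise in $x$: both $\sigma(-x)^{-1/\alpha}$ and $\sigma(x)^{-1/\alpha}$ tend to $1$, so the bracket tends to $1 - N$ and $B_{N}^{\alpha}(x) \to \sigma'(x)(1-N)$. For property 4, I fix $\alpha > 1$ and examine $x \to \pm\infty$. As $x \to +\infty$, we have $\sigma(x) \to 1$ and $\sigma(-x) \to 0$, so $\sigma(-x)^{-1/\alpha} \to \infty$ while $\sigma'(x) \to 0$; the product's sign and vanishing follow because by property 6 the bracket is eventually positive (for $x > \alpha\ln N$) and $\sigma'(x)$ decays exponentially, so $B_{N}^{\alpha}(x) \to 0^{+}$. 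The symmetric computation as $x \to -\infty$ gives $0^{-}$, using that the bracket is negative there. Property 5 is then just the observation that negating $B_{N}^{\alpha}$ swaps the signs of these one-sided limits, so it needs no separate work.

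For the monotonicity claims, properties 1 and 2, the cleanest route is to relate $B_{N}^{\alpha}$ back to the derivatives already computed. Writing $B_{N}^{\alpha}(x) = N\tilde{l}^{\alpha{'}}(x) - \tilde{l}^{\alpha{'}}(-x)$ (which is how $B_{N}^{\alpha}$ arose in the main proof), differentiation gives $\frac{d}{dx}B_{N}^{\alpha}(x) = N\tilde{l}^{\alpha{''}}(x) + \tilde{l}^{\alpha{''}}(-x)$. For property 1, when $\alpha \leq 1$ the loss $\tilde{l}^{\alpha}$ is convex by Proposition~\ref{Prop:alpha-loss-convex}, so $\tilde{l}^{\alpha{''}} \geq 0$ and each summand is nonnegative, giving $B_{N}^{\alpha}$ nondecreasing. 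For property 2, I would show the derivative is negative somewhere when $\alpha > 1$: the simplest argument is via properties 4 and 6 together. Since $B_{N}^{\alpha}$ is eventually positive, vanishes as $x \to +\infty$ from above (property 4), but is positive and finite just past $x = \alpha\ln N$, it must decrease on some interval, hence is not nondecreasing. (Alternatively one plugs the explicit form~\eqref{eq:2nddermaralphaloss} for $\tilde{l}^{\alpha{''}}$ into $N\tilde{l}^{\alpha{''}}(x) + \tilde{l}^{\alpha{''}}(-x)$ and exhibits a sign change.) The main obstacle I anticipate is property 2: making ``not nondecreasing'' rigorous requires pinning down an actual region of decrease, and the derivative $N\tilde{l}^{\alpha{''}}(x) + \tilde{l}^{\alpha{''}}(-x)$ mixes the factor $(\alpha e^{z} - \alpha + 1)$ from~\eqref{eq:2nddermaralphaloss} at both $\pm x$, so a direct sign analysis is delicate; the limit-based argument circumvents the algebra and is the route I would commit to.
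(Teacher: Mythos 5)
Your proof is correct, and it is considerably more explicit than what the paper actually records: the paper's ``proof'' of Lemma~\ref{lem:BNalpha} consists of two sentences, deferring property 1 to ``one of the results of \cite{long2010random} for convex, classification-calibrated loss functions'' and dismissing properties 2--6 as ``readily shown using standard techniques.'' Your treatment of property 1 is genuinely different and arguably preferable: writing $B_{N}^{\alpha}(x) = N\tilde{l}^{\alpha{'}}(x) - \tilde{l}^{\alpha{'}}(-x)$ and differentiating to get $N\tilde{l}^{\alpha{''}}(x) + \tilde{l}^{\alpha{''}}(-x) \geq 0$ uses only the convexity statement of Proposition~\ref{Prop:alpha-loss-convex}, whereas the paper routes through the Long--Servedio machinery for calibrated convex losses; your argument is self-contained and makes transparent exactly which hypothesis ($\tilde{l}^{\alpha{''}} \geq 0$) does the work. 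Your ordering (property 6 first, then the limits, then monotonicity) is sensible, and the contradiction argument for property 2 --- a strictly positive value just past $\alpha \ln N$ combined with the limit $0^{+}$ forces a region of decrease --- is clean and fully rigorous.

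One step deserves tightening in property 4. You justify $B_{N}^{\alpha}(x) \to 0^{+}$ by saying $\sigma'(x)$ ``decays exponentially,'' but the competing factor $\sigma(-x)^{-1/\alpha} = (1+e^{x})^{1/\alpha}$ grows exponentially, so the conclusion rests on the rate comparison $\sigma'(x)\,\sigma(-x)^{-1/\alpha} \asymp e^{-x(1-1/\alpha)}$, which tends to $0$ precisely because $\alpha > 1$ gives $1 - 1/\alpha > 0$. This is the one place in the limit claims where the hypothesis $\alpha > 1$ actually enters (for $\alpha < 1$ the expression diverges as $x \to +\infty$, and for $\alpha = 1$ it tends to a nonzero constant), so it should be stated explicitly rather than left implicit. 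Likewise, your parenthetical that the sign change at $x = \alpha\ln N$ occurs ``for every $\alpha \in (0,\infty]$'' should exclude $\alpha = \infty$, where $e^{x/\alpha} \equiv 1 < N$ and $B_{N}^{\infty}$ is negative everywhere, consistent with property 3. Neither point is a structural gap; both are one-line fixes.
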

%
\begin{figure}[H]
     \centering
     \begin{subfigure}[b]{0.475\textwidth}
         \centering
         \includegraphics[width=\textwidth]{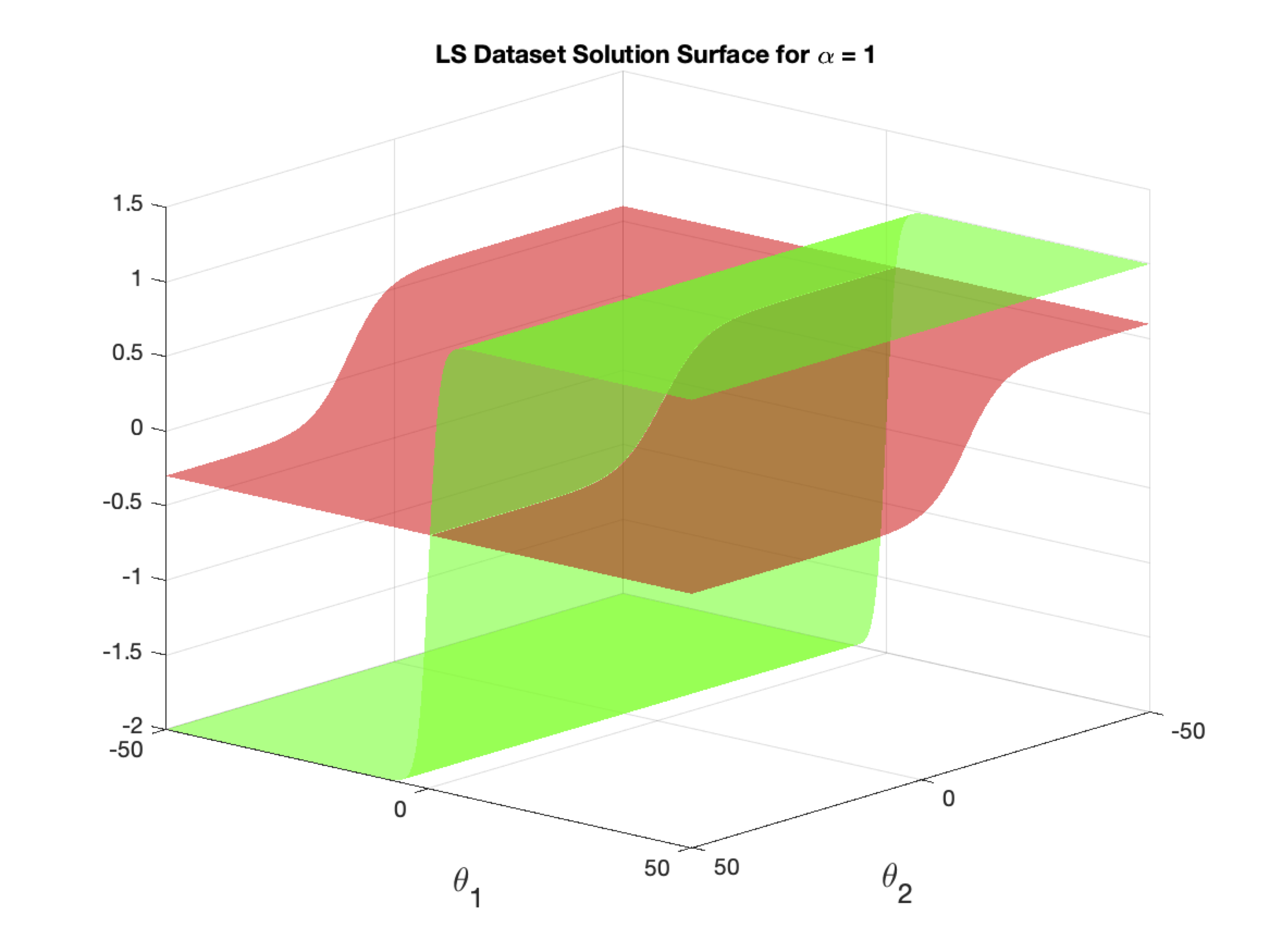}
         \caption{Plot of LHS (green) and RHS (red) of~\eqref{eq:LSimportantrestate} for $\alpha = 1$.}
         \label{fig:LSsurface1}
     \end{subfigure}
     \begin{subfigure}[b]{0.475\textwidth}
         \centering
         \includegraphics[width=\textwidth]{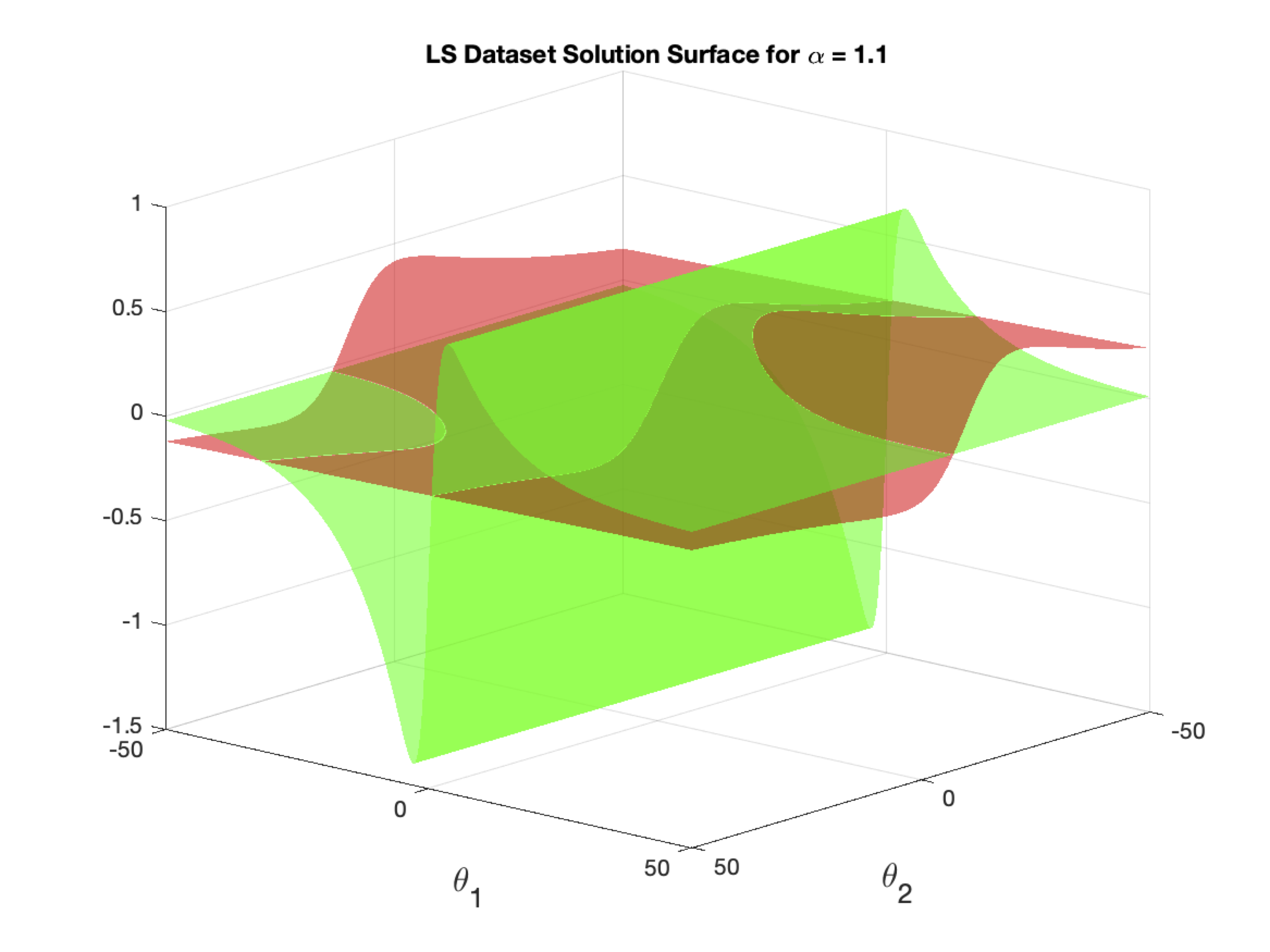}
         \caption{Plot of LHS (green) and RHS (red) of~\eqref{eq:LSimportantrestate} for $\alpha = 1.1$.}
         \label{fig:LSsurface1.1}
     \end{subfigure}
     \begin{subfigure}[b]{0.475\textwidth}
         \centering
         \includegraphics[width=\textwidth]{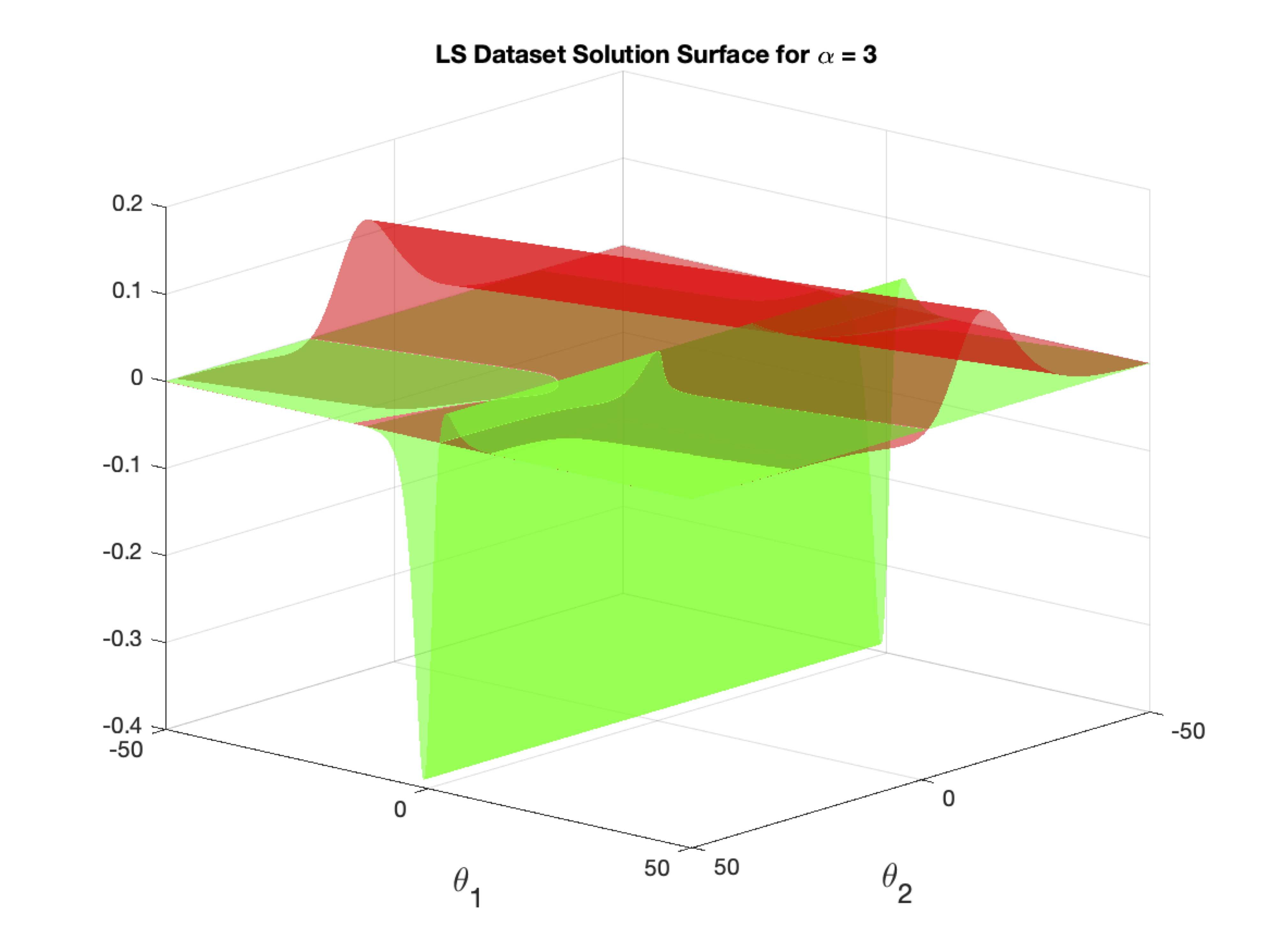}
         \caption{Plot of LHS (green) and RHS (red) of~\eqref{eq:LSimportantrestate} for $\alpha = 3$.}
         \label{fig:LSsurface3}
     \end{subfigure}
          \begin{subfigure}[b]{0.475\textwidth}
         \centering
         \includegraphics[width=\textwidth]{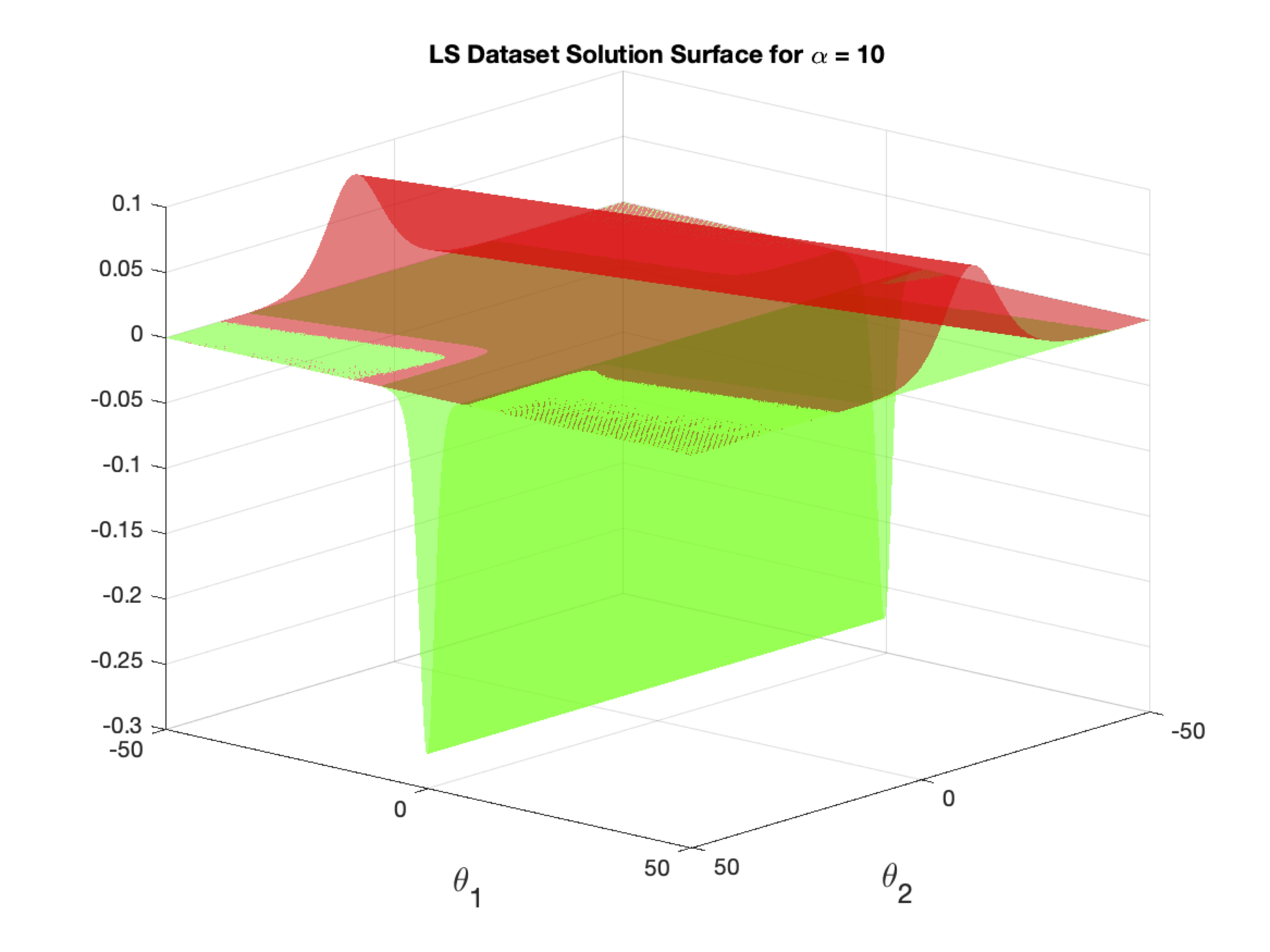}
         \caption{Plot of LHS (green) and RHS (red) of~\eqref{eq:LSimportantrestate} for $\alpha = 10$.}
         \label{fig:LSsurface10}
     \end{subfigure}
        \caption{Plots of LHS (green) and RHS (red) of~\eqref{eq:LSimportantrestate} for $\alpha \in \{1,1.1,3,10\}$, and $N = 2$ and $\gamma = 1/20$. The intersections of the surfaces indicate solutions of~\eqref{eq:LSimportantrestate}. One can see that the solutions for $\alpha = 1$ are not ``good'' in the sense of~\eqref{eq:LSgoodsolutions} because $\theta_{1}$ is small and fixed; this phenomenon was proved by~\citep{long2010random} since $\alpha = 1$ is a convex loss. 
        For $\alpha = 1.1$, one can see the resemblance of $\alpha = 1$ and $\alpha = 3$, and the fact that ``good'' solutions are starting to accumulate.
        For $\alpha = 3$, there are many solutions with diverse $(\theta_{1},\theta_{2})$ values, since the loss is no longer convex.
        ``Good'' solutions for $\alpha = 3$ can be seen where $\theta_{1}$ is positive and large with respect to $\theta_{2}$, i.e., in the middle/right side of the plot.
        For $\alpha = 10$, one can see that the ``good'' solutions have been pushed out further in the parameter space and the two surfaces are starting to separate (reflecting the fact that $\alpha = \infty$ has no solutions).
        Viewing all four plots together, one observes smooth transitions in $\alpha$, indicating that finding a good solution is not difficult.
        }
        \label{fig:LSsurfacesolutions}
\end{figure}
%
The proof of the first property is obtained by invoking one of the results of~\cite{long2010random} for convex, classification-calibrated loss functions.
The remaining properties can be readily shown using standard techniques.

With these nice properties of $B_{N}^{\alpha}$ in hand, we now return to the main thread. 
Using the properties in Lemma~\ref{lem:BNalpha}, we want to reason about the solutions of~\eqref{eq:LSmarimportante}, i.e.,
\begin{align} \label{eq:LSimportantrestate}
B_{N}^{\alpha}(\theta_{1}) &= - 6\gamma B_{N}^{\alpha}(\gamma(\theta_{1} + 5\theta_{2})),
\end{align}
as a function of $\alpha \in (0,\infty]$.
From Propositions~\ref{Prop:alpha-loss-convex} and~\citep{sypherd2022journal}, we know that $\tilde{l}^{\alpha}$ is classification-calibrated for all $\alpha \in (0,\infty]$, convex for $\alpha \leq 1$, and quasi-convex for $\alpha > 1$.
Thus, via~\citep{long2010random}, for each $\hat{\alpha} \leq 1$, there exists some $0 < \gamma_{\hat{\alpha}} < 1/6$ such that 
there exists a solution $(\theta_{1}^{\hat{\alpha}},\theta_{2}^{\hat{\alpha}})$ of~\eqref{eq:LSimportantrestate} which has classification accuracy of $0.5$ (fair coin) on the Long-Servedio dataset.
Without loss of generality, fix $\hat{\alpha} \leq 1$ and its associated pathological $0 < \gamma_{\hat{\alpha}} < 1/6$.

For $\alpha = \infty$, notice that there are \textit{no solutions} to~\eqref{eq:LSimportantrestate} since via the third property in Lemma~\ref{lem:BNalpha},~\eqref{eq:LSimportantrestate} reduces to 
\begin{align} \label{eq:LSinftyBn}
\sigma'(\theta_{1}) (1-N) = - 6\gamma_{\hat{\alpha}} \sigma'(\gamma_{\hat{\alpha}}(\theta_{1} + 5\theta_{2})) (1-N),
\end{align}
which is not satisfied 
because $\sigma'(\theta_{1}) (1-N) < 0$ and $- 6\gamma_{\hat{\alpha}} \sigma'(\gamma_{\hat{\alpha}}(\theta_{1} + 5\theta_{2})) (1-N) > 0$ for all $(\theta_{1},\theta_{2})$; intuitively, the LHS and RHS in~\eqref{eq:LSinftyBn} look like mirrored $\sigma'(x)$ type functions. 

Now, we consider $\alpha \in (1,\infty)$ in~\eqref{eq:LSimportantrestate}, which is the key region of $\alpha$ for the proof.
Examining the LHS of~\eqref{eq:LSimportantrestate}, i.e. $B_{N}^{\alpha}(\theta_{1})$, 
we note from the fourth property of Lemma~\ref{lem:BNalpha} that $\lim\limits_{\theta_{1} \rightarrow +\infty} B_{N}^{\alpha}(\theta_{1}) \rightarrow 0^{+}$.
Furthermore, we note via the sixth property in Lemma~\ref{lem:BNalpha} that 
$B_{N}^{\alpha}(\theta_{1}) > 0$ if and only if $\theta_{1} > \alpha \ln{N}$. 
So, tuning $\alpha \in (1,\infty)$ greater moves the crossover (from negative to positive) of $B_{N}^{\alpha}$ further in $\theta_{1}$. 

We now examine the RHS of~\eqref{eq:LSimportantrestate}, i.e., $- 6 \gamma_{\hat{\alpha}} B_{N}^{\alpha}(\gamma_{\hat{\alpha}}(\theta_{1} + 5\theta_{2}))$.
Set $\theta_{2} = 0$, so we reduce $- 6 \gamma_{\hat{\alpha}} B_{N}^{\alpha}(\gamma_{\hat{\alpha}}(\theta_{1} + 5\theta_{2}))$ to $- 6 \gamma_{\hat{\alpha}} B_{N}^{\alpha}(\gamma_{\hat{\alpha}} \theta_{1})$.
From the fifth property of Lemma~\ref{lem:BNalpha}, 
we have that $\lim\limits_{\theta_{1} \rightarrow \infty} - 6 \gamma_{\hat{\alpha}} B_{N}^{\alpha}(\gamma_{\hat{\alpha}} \theta_{1}) \rightarrow 0^{-}$.
Furthermore, we note via the sixth property in Lemma~\ref{lem:BNalpha} that 
$-6 \gamma_{\hat{\alpha}} B_{N}^{\alpha}(\gamma_{\hat{\alpha}} \theta_{1}) < 0$ if and only if $\theta_{1} > \frac{\alpha \ln{N}}{\gamma_{\hat{\alpha}}}$. 
So, tuning $\alpha \in (1,\infty)$ greater moves the crossover (from positive to negative) of $-6 \gamma_{\hat{\alpha}} B_{N}^{\alpha}(\gamma_{\hat{\alpha}} \theta_{1})$ further in $\theta_{1}$. 

Taking the limit and crossover behaviors in $\theta_{1}$ of $B_{N}^{\alpha}(\theta_{1})$ (the LHS of~\eqref{eq:LSimportantrestate}) and $-6 \gamma_{\hat{\alpha}} B_{N}^{\alpha}(\gamma_{\hat{\alpha}} \theta_{1})$ (the reduced RHS of~\eqref{eq:LSimportantrestate}) together, 
we have \textit{by continuity} that there must exist some $\tilde{\theta}_{1} > 0$ which satisfies 
\begin{align} \label{eq:LSapproximatedsolution}
B_{N}^{\alpha}(\tilde{\theta}_{1}) = -6 \gamma_{\hat{\alpha}} B_{N}^{\alpha}(\gamma_{\hat{\alpha}} \tilde{\theta}_{1}),
\end{align}
for each $\alpha \in (1,\infty)$. 
\begin{figure}[H]
    \centering
    \centerline{\includegraphics[width=.45\linewidth]{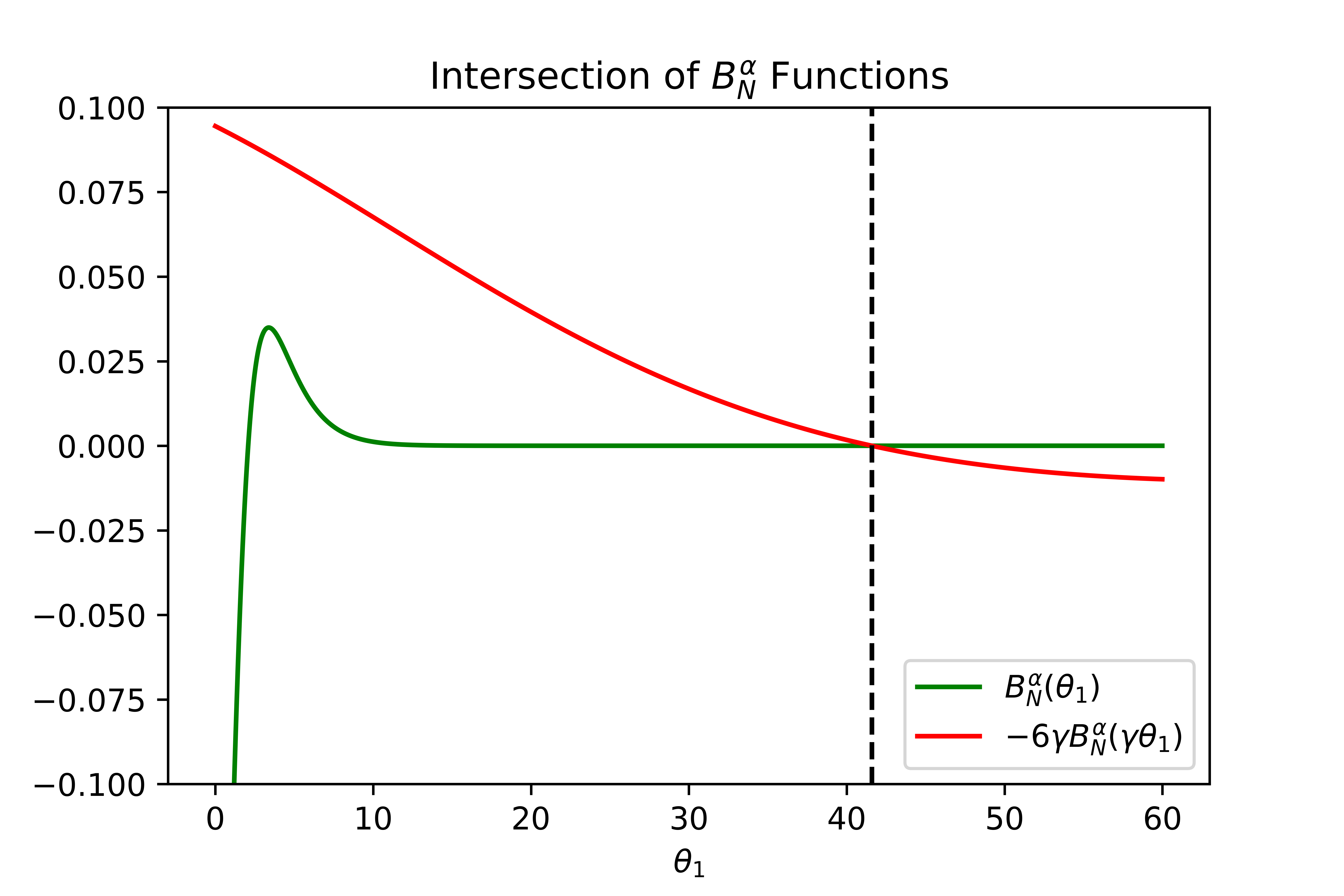}}
    \caption{A plot of LHS (green) and RHS (red) of~\eqref{eq:LSapproximatedsolution} for $\alpha =3$, where $N = 2$, $\gamma = 1/20$. Notice that the intersection point $\tilde{\theta}_{1}$ (dotted line) is \textit{very} close to the crossover point $\frac{\alpha \ln{N}}{\gamma_{\hat{\alpha}}} \approx \frac{3 \times 0.69}{1/20} \approx 41.59$, and also notice that this solution nicely coincides with the grid-search solution presented in Figure~\ref{Fig:LSclassificationlines}.}
     \label{fig:LSfinalstep}
\end{figure}
Furthermore, the choice of $\alpha \in (1,\infty)$ directly influences the magnitude of $\tilde{\theta}_{1} > 0$, with larger $\alpha$ increasing the value of $\tilde{\theta}_{1}$ because of the crossover points, particularly that we require $\tilde{\theta}_{1} > \frac{\alpha \ln{N}}{\gamma_{\hat{\alpha}}}$, which is more restrictive than the requirement that $\tilde{\theta}_{1} > \alpha \ln{N}$, since $0 < \gamma_{\hat{\alpha}} < 1/6$, i.e., $B_{N}^{\alpha}$ is more ``expansive'' when its argument is multiplied by $\gamma_{\hat{\alpha}} < 1/6$.
See Figure~\ref{fig:LSfinalstep} for a plot. 

Therefore, for each $\alpha \in (1,\infty)$, there exists a solution $(\theta_{1}^{\alpha}, \theta_{2}^{\alpha})$ to~\eqref{eq:LSimportantrestate}, where $\theta_{1}^{\alpha} = \tilde{\theta}_{1} > 0$ (indeed, we have that $\theta_{1}^{\alpha} = \mathcal{O}\left(\alpha \gamma_{\hat{\alpha}}^{-1} \ln{\left(p^{-1} - 1 \right)} \right)$) and $\theta_{2}^{\alpha} = 0$, which is a good solution in the sense of~\eqref{eq:LSgoodsolutions} and thus has perfect classification accuracy on the \textit{clean} LS dataset.



Next, while not necessary for the proof of Theorem~\ref{thm:LSmarginalphalossrobust}, we also argue for the existence of other optima near $(\theta_{1}^{\alpha},\theta_{2}^{\alpha})$.
Reconsidering the full (with $\theta_{2}$ included) expression, $- 6 \gamma_{\hat{\alpha}} B_{N}^{\alpha}(\gamma_{\hat{\alpha}}(\theta_{1} + 5\theta_{2}))$ in~\eqref{eq:LSimportantrestate}, we take $\alpha \in (1,\infty)$ large enough in~\eqref{eq:LSapproximatedsolution}
and thus $\tilde{\theta}_{1} > \frac{\alpha \ln{N}}{\gamma_{\hat{\alpha}}}$ is large enough such that
$B_{N}^{\alpha}(\tilde{\theta}_{1}) \approx 0$ and is locally very ``flat'' (as given by the third property in Lemma~\ref{lem:BNalpha}).
Hence, perturbing $\tilde{\theta}_{1}$ slightly induces an extremely slight movement in $B_{N}^{\alpha}(\tilde{\theta}_{1})$.
Now, considering $- 6 \gamma_{\hat{\alpha}} B_{N}^{\alpha}(\gamma_{\hat{\alpha}}(\tilde{\theta}_{1} + 5\theta_{2}))$, 
we fix $\theta_{2}^{*}$ to be \textit{very} small (either positive or negative). 
We then ``wiggle'' $\tilde{\theta}_{1}$ slightly to (potentially) recover a solution $\theta_{1}^{*}$ to 
\begin{align} 
B_{N}^{\alpha}(\theta_{1}^{*}) &= - 6\gamma_{\hat{\alpha}} B_{N}^{\alpha}(\gamma_{\hat{\alpha}}(\theta_{1}^{*} + 5\theta_{2}^{*})),
\end{align}
which (might) exist by continuity. 
See Figure~\ref{fig:LScontours} for a plot; intuitively, the fact that the LHS and RHS of~\eqref{eq:LSapproximatedsolution} intersect, not merely ``touch'', suggests the existence of $(\theta_{1}^{*},\theta_{2}^{*})$, indeed a ``strip'' of good solutions.

\begin{figure}[H]
     \centering
     \begin{subfigure}[b]{0.24\textwidth}
         \centering
         \includegraphics[width=\textwidth]{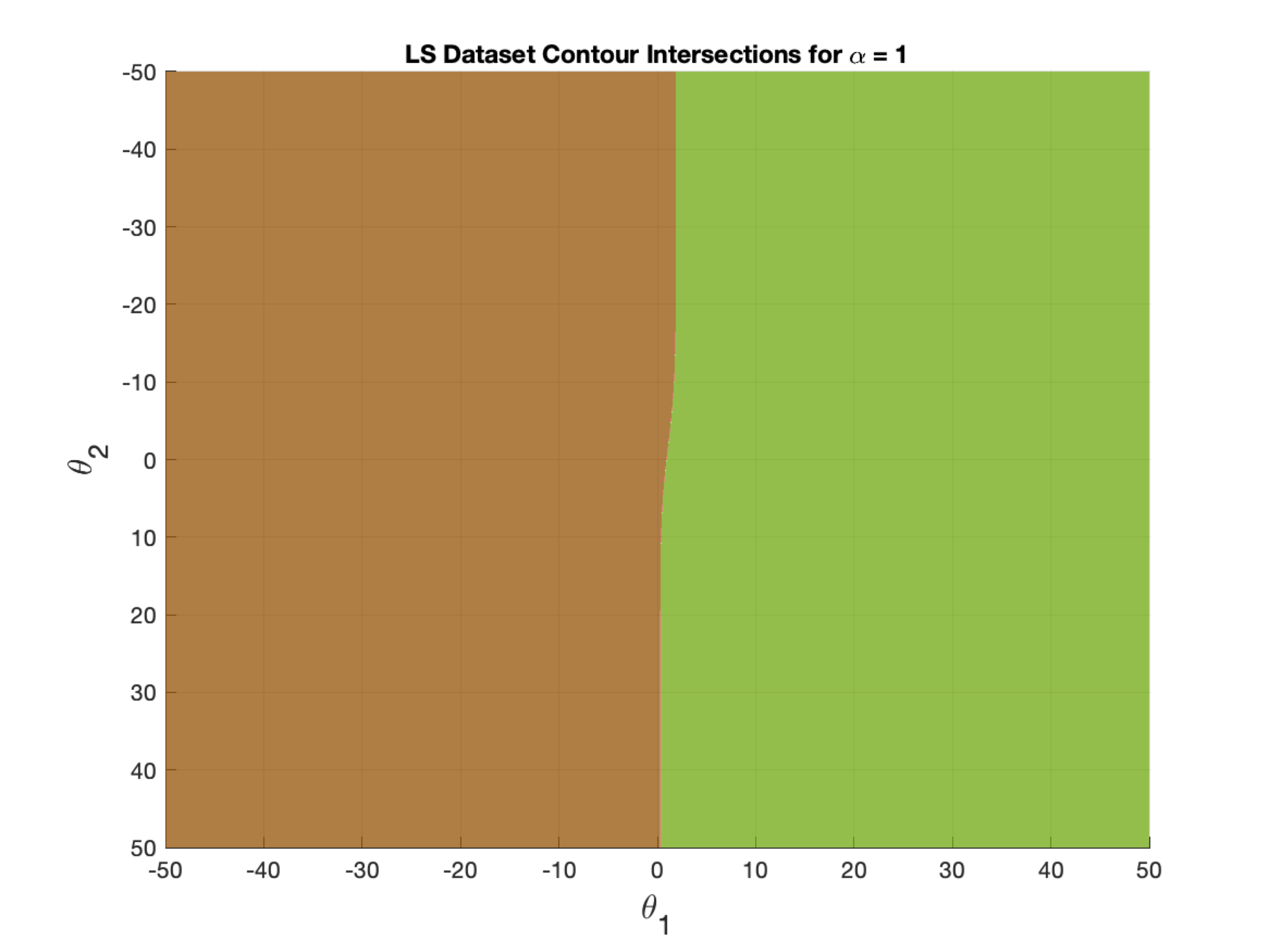}
         \caption{Rotated Figure~\ref{fig:LSsurface1}.}
     \end{subfigure}
     \begin{subfigure}[b]{0.24\textwidth}
         \centering
         \includegraphics[width=\textwidth]{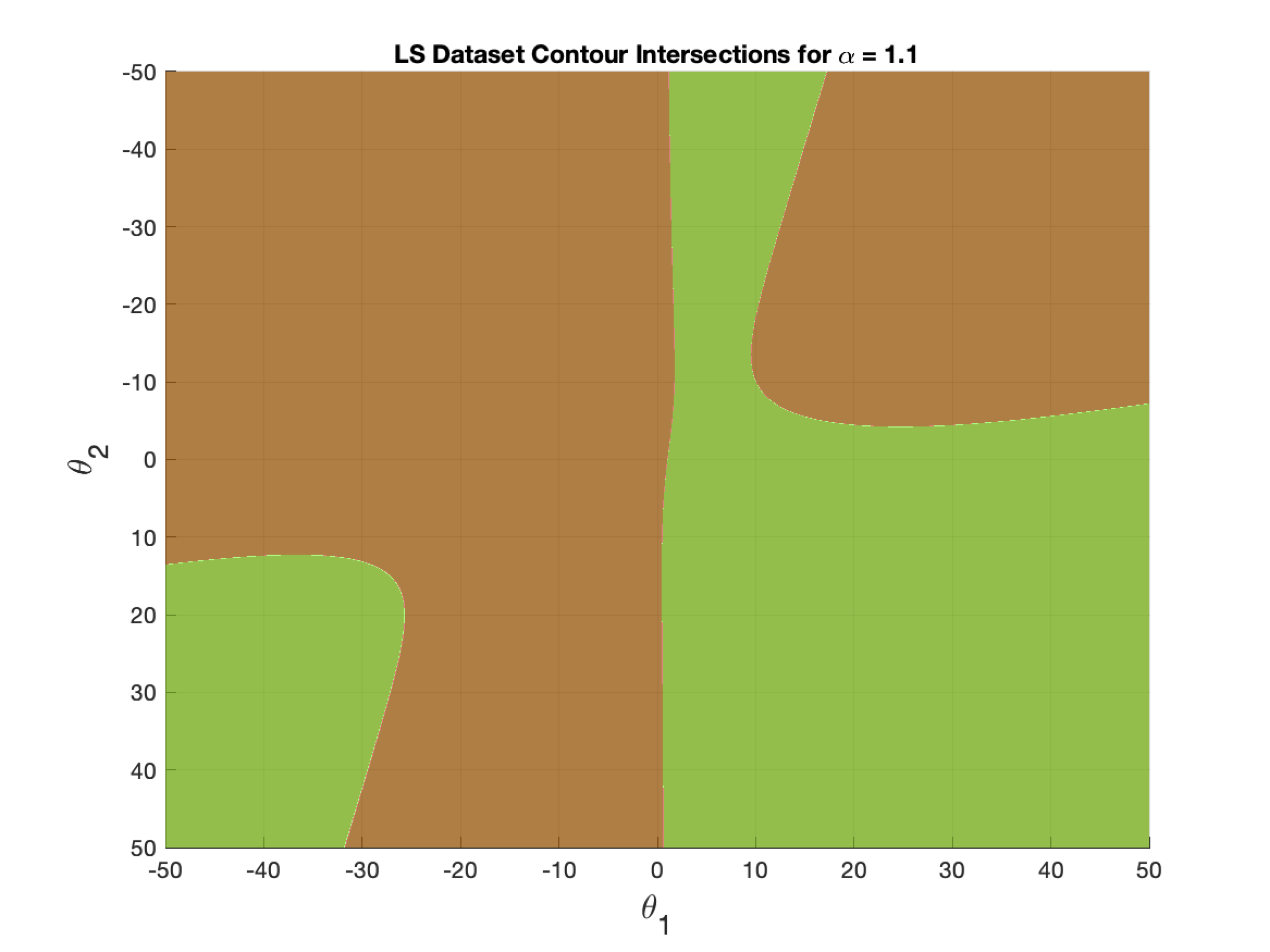}
         \caption{Rotated Figure~\ref{fig:LSsurface1.1}.}
     \end{subfigure}
     \begin{subfigure}[b]{0.24\textwidth}
         \centering
         \includegraphics[width=\textwidth]{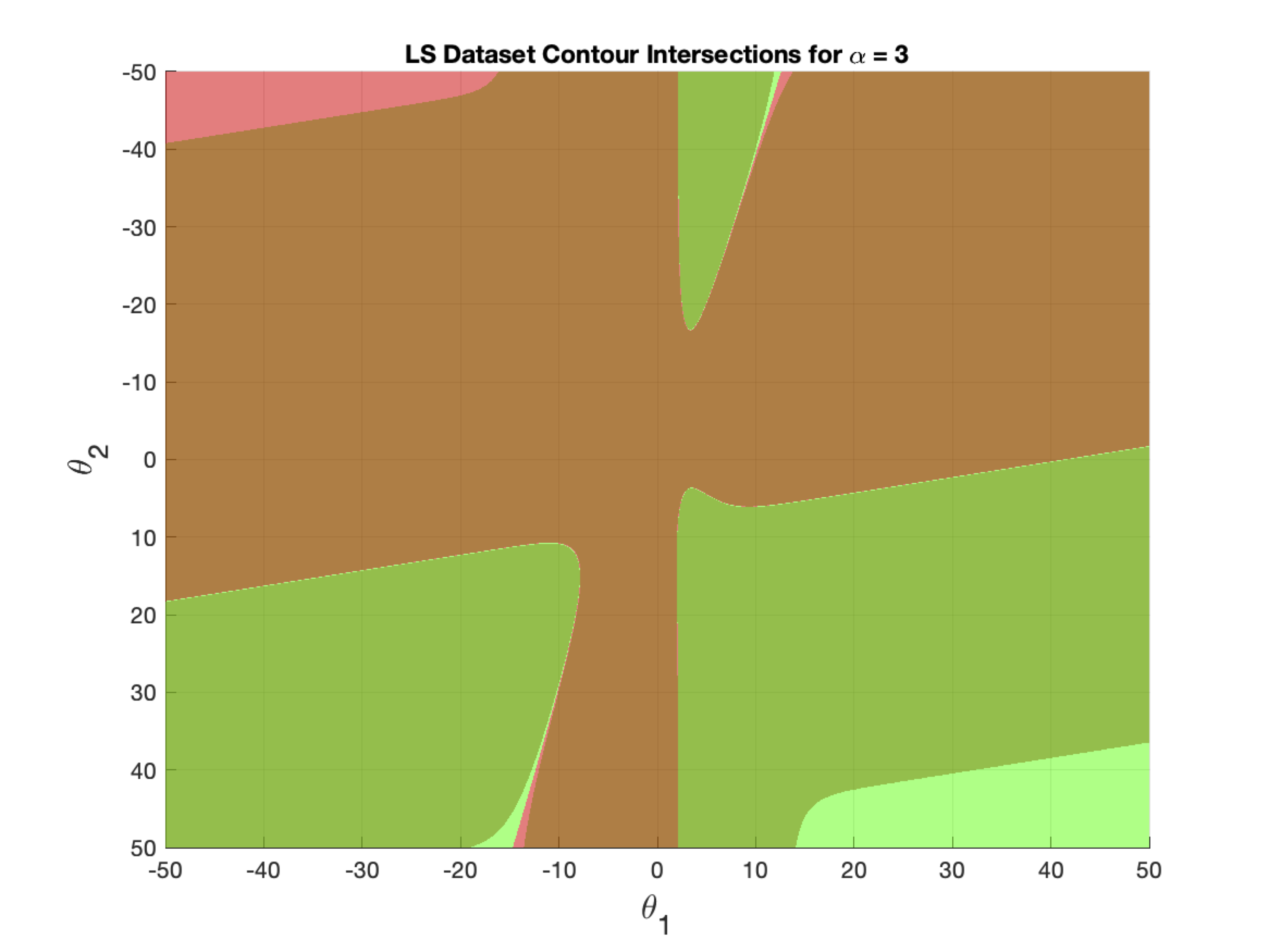}
         \caption{Rotated Figure~\ref{fig:LSsurface3}.}
         \label{fig:lscontour3}
     \end{subfigure}
          \begin{subfigure}[b]{0.24\textwidth}
         \centering
         \includegraphics[width=\textwidth]{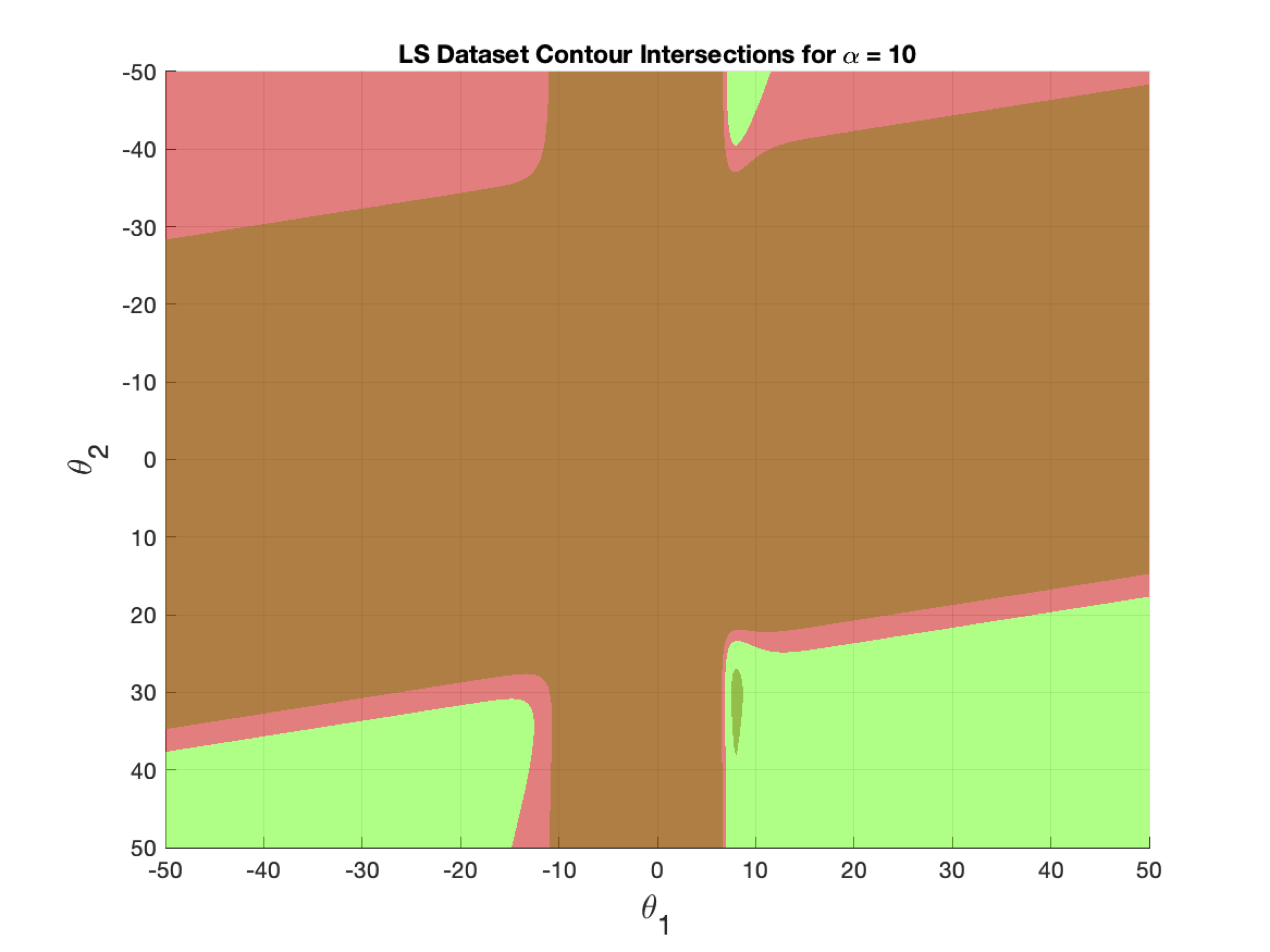}
         \caption{Rotated Figure~\ref{fig:LSsurface10}.}
         \label{fig:lscontour10}
     \end{subfigure}
        \caption{
        Companion figures of Figure~\ref{fig:LSsurfacesolutions} for $\alpha \in \{1,1.1,3,10\}$, and $N = 2$ and $\gamma = 1/20$. The contours indicate solutions of~\eqref{eq:LSimportantrestate}. 
        In Figure~\ref{fig:lscontour3}, one can see a contour of ``good'' LS solutions near where $\theta_{1} \approx 41.59$ and $\theta_{2}$ is very small.
        }
        \label{fig:LScontours}
\end{figure}

\begin{figure}[H]
    \centering
    \includegraphics[width=0.35\linewidth]{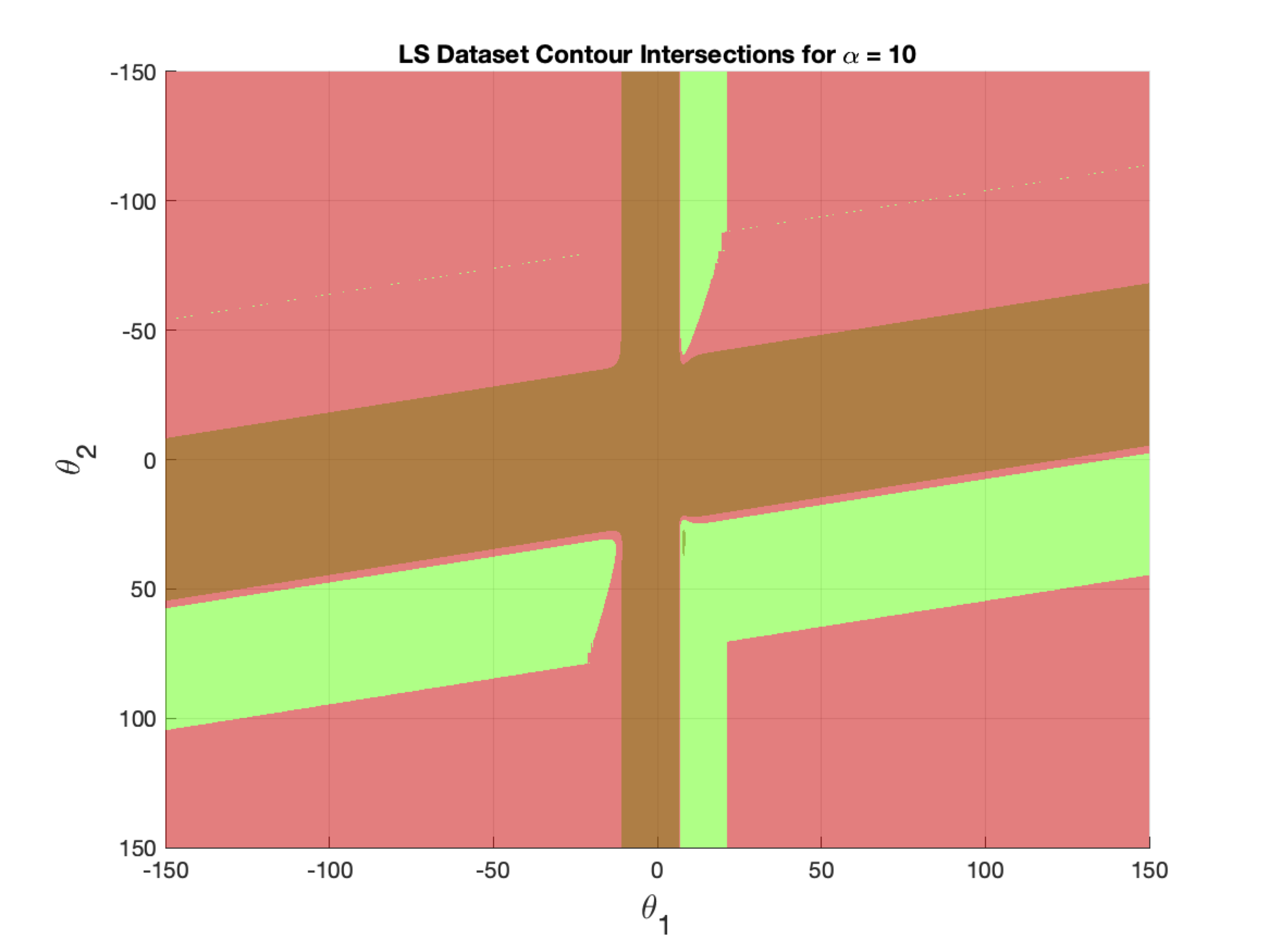}
    \caption{Companion figure of Figure~\ref{fig:lscontour10}, again for $\alpha = 10$, where the parameter space has been \textit{increased}. One can again see ``good'' LS solutions for large $\theta_{1}$ and small $\theta_{2}$.
    This is indicative of a trade off between the value of $\alpha$ and the range of the parameter space for the LS dataset.
    }
    \label{fig:LScontoursexpandedrangealpha=10}
\end{figure}

\subsection{Proof of Theorem~\ref{thm:taylorlagrangeupperbound}} \label{appendix:taylorlagrangeupperbound}

In this section, we provide the proof of Theorem~\ref{thm:taylorlagrangeupperbound}. 
First, however, we provide lemmas useful in the proof of Theorem~\ref{thm:taylorlagrangeupperbound}, which indicate useful bounds for $\alpha = 1$ and $\infty$, and their respective proofs.

\begin{lemma} \label{lemma:secondderivativemarginalphalossbound}
For all $z \in \mathbb{R}$, we have that 
\begin{align}
\left| \dfrac{d^{2}}{dz^{2}} \tilde{l}^{1}(z) \right| \geq \left| \dfrac{d^{2}}{dz^{2}} \tilde{l}^{\infty}(z) \right|,
\end{align}
\end{lemma}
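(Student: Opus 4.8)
The plan is to reduce both sides to explicit closed forms in terms of the sigmoid $\sigma$ and its derivative $\sigma'$, and then observe that the desired inequality collapses to the elementary bound $|2\sigma(z)-1|\le 1$. First I would specialize the second-derivative formula~\eqref{eq:2nddermaralphaloss} of Lemma~\ref{lem:derivativesmarginalphaloss} to $\alpha=1$ and $\alpha=\infty$. For $\alpha=1$, using $\tilde{l}^{1\prime}(z)=-\sigma(-z)$ (which follows from $\tilde{l}^{1\prime}(z)=-\sigma'(z)\sigma(z)^{-1}$ together with $\sigma'(z)=\sigma(z)\sigma(-z)$) and then differentiating once more with the evenness of $\sigma'$, I obtain $\tilde{l}^{1\prime\prime}(z)=\sigma'(z)$, which is strictly positive. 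For $\alpha=\infty$, since $\tilde{l}^{\infty}(z)=1-\sigma(z)$ I have $\tilde{l}^{\infty\prime\prime}(z)=-\sigma''(z)$; computing $\sigma''(z)=\sigma'(z)\bigl(\sigma(-z)-\sigma(z)\bigr)$ via the product rule and the evenness of $\sigma'$ yields $\tilde{l}^{\infty\prime\prime}(z)=\sigma'(z)\bigl(\sigma(z)-\sigma(-z)\bigr)$.

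Next I would take absolute values. Using $\sigma(z)+\sigma(-z)=1$, we have $\sigma(z)-\sigma(-z)=2\sigma(z)-1$, so $|\tilde{l}^{\infty\prime\prime}(z)|=\sigma'(z)\,|2\sigma(z)-1|$, while $|\tilde{l}^{1\prime\prime}(z)|=\sigma'(z)$ because $\sigma'>0$. The claimed inequality $|\tilde{l}^{1\prime\prime}(z)|\ge |\tilde{l}^{\infty\prime\prime}(z)|$ is therefore equivalent, after cancelling the common strictly positive factor $\sigma'(z)$, to $1\ge|2\sigma(z)-1|$. Since $\sigma(z)\in(0,1)$ for every $z\in\mathbb{R}$, we have $2\sigma(z)-1\in(-1,1)$, hence $|2\sigma(z)-1|<1$, and the inequality holds (strictly, with equality only in the limit $z\to\pm\infty$).

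There is no substantive obstacle here; the only care required is in the bookkeeping of the sigmoid identities, in particular consistently applying $\sigma(-z)=1-\sigma(z)$ and the evenness of $\sigma'$ so that the two second derivatives come out in the clean forms $\sigma'(z)$ and $\sigma'(z)(2\sigma(z)-1)$. As a sanity check one can instead push the limit $\alpha\to\infty$ directly through~\eqref{eq:2nddermaralphaloss}, where the factor $(e^{-z}+1)^{-1/\alpha}\to 1$ and $\tfrac{1}{\alpha}(\alpha e^z-\alpha+1)\to e^z-1$, giving $\tilde{l}^{\infty\prime\prime}(z)=\tfrac{e^z(e^z-1)}{(e^z+1)^3}$, which agrees with $\sigma'(z)(2\sigma(z)-1)$; similarly $\alpha=1$ gives $\tfrac{e^z}{(e^z+1)^2}=\sigma'(z)$. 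This confirms the reduction and completes the argument.
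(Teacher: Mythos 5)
Your proof is correct. Both you and the paper arrive at the same closed forms, $\tilde{l}^{1\prime\prime}(z)=\tfrac{e^z}{(e^z+1)^2}=\sigma'(z)$ and $\tilde{l}^{\infty\prime\prime}(z)=\tfrac{e^z(e^z-1)}{(e^z+1)^3}=\sigma'(z)\bigl(2\sigma(z)-1\bigr)$, but you finish differently. The paper argues indirectly: it shows the equation $\bigl|\tilde{l}^{1\prime\prime}(z)\bigr|=\bigl|\tilde{l}^{\infty\prime\prime}(z)\bigr|$ has no real solutions, checks the ordering in the large-$z$ regime, and then invokes the Intermediate Value Theorem to conclude the ordering persists everywhere. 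You instead cancel the common strictly positive factor $\sigma'(z)$ and reduce the claim to the elementary pointwise bound $|2\sigma(z)-1|<1$, which follows immediately from $\sigma(z)\in(0,1)$. Your route is more direct and arguably more rigorous: the paper's asymptotic comparison ($e^z\ge e^z-1$ justified via a limit) and its appeal to the IVT are somewhat informal, whereas your argument gives the inequality pointwise with no continuity argument needed, and as a bonus shows it is strict for every finite $z$ with equality only in the limits $z\to\pm\infty$. The only thing to watch is the sign bookkeeping in $\sigma''(z)=\sigma'(z)\bigl(\sigma(-z)-\sigma(z)\bigr)$, which you handle correctly.
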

\begin{proof}
Examining $\left| \dfrac{d^{2}}{dz^{2}} \tilde{l}^{1}(z) \right| = \left| \dfrac{d^{2}}{dz^{2}} \tilde{l}^{\infty}(z) \right|$, we have that
\begin{align}
\left| \dfrac{d^{2}}{dz^{2}} \tilde{l}^{1}(z) \right| &= \left| \dfrac{d^{2}}{dz^{2}} \tilde{l}^{\infty}(z) \right| \\
\left| \frac{e^{z}}{(e^{z} + 1)^{2}} \right| &= \left|\frac{e^{z}(e^{z} - 1)}{(e^{z} + 1)^{3}} \right| \\
e^{z} &= \left|\frac{e^{z}(e^{z} - 1)}{e^{z} + 1} \right|,
\end{align}
however, there are no \textit{real} solutions to this equation. Thus, $\left| \dfrac{d^{2}}{dz^{2}} \tilde{l}^{1}(z) \right|$ and $\left| \dfrac{d^{2}}{dz^{2}} \tilde{l}^{\infty}(z) \right|$ do not intersect. 

Considering the large $z > 0$ regime, we find that 
\begin{align}
e^{z} \geq e^{z} - 1,
\end{align}
for all $z \in \mathbb{R}$, where we used the fact that $\lim\limits_{z \rightarrow \infty} \frac{e^{z}(e^{z} - 1)}{e^{z} + 1} = e^{z} - 1$.
Thus, by the Intermediate Value Theorem, we have the desired conclusion. 
\end{proof}

\begin{lemma} \label{lemma:thirdderivativemarginalphalossbound}
For $|z| > \ln{(2)}$, we have that 
\begin{align}
\left|\dfrac{d^{3}}{dz^{3}} \tilde{l}^{\infty}(z) \right| \leq \left|\dfrac{d^{3}}{dz^{3}} \tilde{l}^{1}(z) \right|.
\end{align}
\end{lemma}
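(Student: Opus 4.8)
The plan is to reduce the claim to a one-variable polynomial inequality under the substitution $t := e^z > 0$ and then settle it by squaring both (nonnegative) sides.

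First I would evaluate the general third derivative~\eqref{eq:3ddermaralphaloss} from Lemma~\ref{lem:derivativesmarginalphaloss} at the two endpoints. Plugging $\alpha = 1$ into the numerator collapses $-e^{2z}+4e^z-1-(3e^z-2)-1 = -e^{2z}+e^z$, while the denominator simplifies via $(1+e^{-z})^{-1} = e^z/(e^z+1)$ to $(e^z+1)^3$; likewise, letting $\alpha\to\infty$ annihilates the $1/\alpha$ and $1/\alpha^2$ terms and sends $(1+e^{-z})^{-1/\alpha}\to 1$. This yields
\begin{align*}
\tilde{l}^{1'''}(z) = \frac{-e^z(e^z-1)}{(e^z+1)^3}, \qquad \tilde{l}^{\infty'''}(z) = \frac{e^z(-e^{2z}+4e^z-1)}{(e^z+1)^4}.
\end{align*}

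Next I would take absolute values and cancel the common positive factor $e^z/(e^z+1)^3$. Using $(e^z+1)\,|e^z-1| = |e^{2z}-1|$, the desired inequality $|\tilde{l}^{\infty'''}(z)| \le |\tilde{l}^{1'''}(z)|$ becomes $|e^{2z}-4e^z+1| \le |e^{2z}-1|$, i.e., with $t := e^z$,
\begin{align*}
|t^2 - 4t + 1| \le |t^2 - 1|.
\end{align*}
Since both sides are nonnegative, this is equivalent to $(t^2-4t+1)^2 \le (t^2-1)^2$. Factoring the difference of squares gives
\begin{align*}
(t^2-4t+1)^2 - (t^2-1)^2 = (-4t+2)(2t^2-4t) = -4t(2t-1)(t-2),
\end{align*}
so the inequality holds if and only if $t(2t-1)(t-2) \ge 0$. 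For $t>0$ this is precisely $t \le 1/2$ or $t \ge 2$, i.e., $|z| \ge \ln 2$, which contains the stated region $|z| > \ln 2$ (with equality exactly at $t \in \{1/2,2\}$, i.e., $|z| = \ln 2$).

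I expect no serious obstacle here: the argument is essentially bookkeeping. The only points requiring care are the correct simplification of~\eqref{eq:3ddermaralphaloss} at $\alpha = 1$ and the continuous extension to $\alpha = \infty$ (mirroring Definition~\ref{def:MarginLoss}), and recognizing that squaring---rather than a sign case-analysis on $t^2-4t+1$, whose roots $2\pm\sqrt{3}$ are exactly the thresholds behind $\ln(2+\sqrt{3})$ in Theorem~\ref{thm:taylorlagrangeupperbound}---produces the clean factorization whose roots $\{1/2,2\}$ pin down the boundary $|z|=\ln 2$.
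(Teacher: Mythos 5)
Your proof is correct and, in fact, more complete than the paper's own argument. Both start from the same explicit formulas $\tilde{l}^{1'''}(z) = \frac{e^z-e^{2z}}{(e^z+1)^3}$ and $\tilde{l}^{\infty'''}(z) = \frac{-e^{3z}+4e^{2z}-e^z}{(e^z+1)^4}$, and both identify $z=\pm\ln 2$ as the boundary. The difference is in how the comparison is settled: the paper solves the equality $|\tilde{l}^{1'''}(z)|=|\tilde{l}^{\infty'''}(z)|$ to locate the crossings at $\pm\ln 2$, verifies the inequality in the large-$z$ regime (with a simplification that is stated only loosely), and then invokes the Intermediate Value Theorem and symmetry to extend to the whole region. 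Your route instead cancels the common positive factor, substitutes $t=e^z$, and squares both nonnegative sides so that the difference factors exactly as $-4t(2t-1)(t-2)$; this gives the precise solution set $t\in(0,1/2]\cup[2,\infty)$, i.e.\ $|z|\ge\ln 2$, in one stroke, with no appeal to the IVT or to asymptotics. What your approach buys is an if-and-only-if characterization of where the lemma holds (including the equality cases at $|z|=\ln 2$), whereas the paper's argument only establishes the one-sided claim and leans on a continuity argument whose intermediate algebra, as printed, does not quite match the exact expansion. Your observation that the roots $2\pm\sqrt{3}$ of $t^2-4t+1$ are the thresholds behind $\ln(2+\sqrt{3})$ in Theorem~\ref{thm:taylorlagrangeupperbound} is also a nice piece of context the paper does not make explicit here.
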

\begin{proof}
Consider 
\begin{align}
\left|\dfrac{d^{3}}{dz^{3}} \tilde{l}^{1}(z) \right| = \left| \frac{e^{z} - e^{2z}}{(e^{z} + 1)^{3}} \right|
\end{align}
and 
\begin{align}
\left|\dfrac{d^{3}}{dz^{3}} \tilde{l}^{\infty}(z) \right| = \left| \frac{-e^{3z} + 4 e^{2z} - e^{z}}{(e^{z} + 1)^{4}}  \right|.
\end{align}
Setting 
\begin{align}
\left|\dfrac{d^{3}}{dz^{3}} \tilde{l}^{1}(z) \right| = \left|\dfrac{d^{3}}{dz^{3}} \tilde{l}^{\infty}(z) \right|, 
\end{align}
after some algebra, we find that $z^{*} = \pm \ln{(2)}$.
Furthermore, considering the large $z > 0$ regime, we find that 
\begin{align}
\left|\dfrac{d^{3}}{dz^{3}} \tilde{l}^{\infty}(z) \right| &\overset{?}{\le} \left|\dfrac{d^{3}}{dz^{3}} \tilde{l}^{1}(z) \right| \\
\left| \frac{-e^{3z} + 4 e^{2z} - e^{z}}{(e^{z} + 1)^{4}}  \right| &\overset{?}{\le} \left| \frac{e^{z} - e^{2z}}{(e^{z} + 1)^{3}} \right| \\
\frac{e^{3z} - 4 e^{2z} + e^{z}}{e^{z} + 1} &\overset{?}{\le} e^{2z} - e^{z} \\
e^{2z} - 4 e^{z} &\leq e^{2z} - e^{z},
\end{align}
thus by the IVT and symmetry, we have the desired result. 
\end{proof}

With Lemmas~\ref{lemma:secondderivativemarginalphalossbound} and~\ref{lemma:thirdderivativemarginalphalossbound} in hand, we now present the proof of Theorem~\ref{thm:taylorlagrangeupperbound}.

Recall from~\eqref{eq:gradientofmarginbasedalphaloss} that for $\alpha \in (0,\infty]$
\begin{align}
\nabla_{\theta} \tilde{l}^{\alpha}(\langle YX, \theta \rangle) &= -\sigma(\langle YX, \theta \rangle)^{1-\frac{1}{\alpha}} \sigma(-\langle YX, \theta \rangle) Y X = \tilde{l}^{\alpha'}(\langle YX, \theta \rangle)  YX,
\end{align}
since for each $i \in [d]$, $\dfrac{\partial}{\partial \theta_{i}} \tilde{l}^{\alpha}(\langle YX, \theta \rangle) = \tilde{l}^{\alpha'}(\langle YX, \theta \rangle)  YX_{i}$.

Hence, the gradient of the noisy $\alpha$-risk from~\eqref{eq:noisygradientalpharisklogisticmodel} is  
\begin{align}
\nabla_{\theta} R_{\alpha}^{p}(\theta) &= \mathbb{E}_{X,Y}\left[(1-p)\nabla_{\theta} \tilde{l}^{\alpha}(\langle YX, \theta \rangle) + p \nabla_{\theta} \tilde{l}^{\alpha}(\langle -YX, \theta \rangle) \right] \\
 &= \mathbb{E}_{X,Y}\left[\left( (1-p) \tilde{l}^{\alpha'}(\langle YX, \theta \rangle)  - p \tilde{l}^{\alpha'}(\langle -YX, \theta \rangle)  \right) YX \right], \label{eq:gradientnoisyalphariskusefulform}
\end{align}
where we expanded the expression for clarity. 
Notice that for $\alpha = 1$ (from Lemma~\ref{lem:derivativesmarginalphaloss}), 
\begin{align}
\tilde{l}^{1'}(-z) = - \tilde{l}^{1'}(z) - 1,
\end{align}
namely that $\tilde{l}^{1'}$ is \textit{almost} an \textbf{odd} function, and for $\alpha = \infty$, 
\begin{align}
\tilde{l}^{\infty'}(-z) = \tilde{l}^{\infty'}(z),
\end{align}
namely that $\tilde{l}^{\infty'}$ is an \textbf{even} function. 

Thus, we have by the definition of $\hat{\theta}^{1}$ and $\hat{\theta}^{\infty}$ that for $\alpha = 1$
\begin{align}
\mathbf{0} = \nabla_{\theta} R_{1}^{p}(\hat{\theta}^{1}) &= \mathbb{E}_{X,Y}\left[\left( (1-p) \tilde{l}^{1'}(\langle YX, \hat{\theta}^{1} \rangle)  - p \tilde{l}^{1'}(\langle -YX, \hat{\theta}^{1} \rangle)  \right) YX \right] \\
&= (1-p) \mathbb{E}_{X,Y}\left[\tilde{l}^{1'}(\langle YX, \hat{\theta}^{1} \rangle) YX \right] - p \mathbb{E}_{X,Y} \left[\tilde{l}^{1'}(\langle -YX, \hat{\theta}^{1} \rangle)  YX \right] \\
&= (1-p) \mathbb{E}_{X,Y}\left[\tilde{l}^{1'}(\langle YX, \hat{\theta}^{1} \rangle) YX \right] - p \mathbb{E}_{X,Y} \left[ \left(-\tilde{l}^{1'}(\langle YX, \hat{\theta}^{1} \rangle) - 1 \right)  YX \right] \\
&= \mathbb{E}_{X,Y}\left[\tilde{l}^{1'}(\langle YX, \hat{\theta}^{1} \rangle) YX \right] + p \mathbb{E}_{X,Y}[YX], \label{eq:noisyfirstorderequationforone}
\end{align}
and for $\alpha = \infty$
\begin{align}
\mathbf{0} = \nabla_{\theta} R_{\infty}^{p}(\hat{\theta}^{\infty}) &= \mathbb{E}_{X,Y}\left[\left( (1-p) \tilde{l}^{\infty'}(\langle YX, \hat{\theta}^{\infty} \rangle) - p \tilde{l}^{\infty'}(\langle -YX, \hat{\theta}^{\infty} \rangle) \right) YX \right] \\
&= (1-p) \mathbb{E}_{X,Y}\left[\tilde{l}^{\infty'}(\langle YX, \hat{\theta}^{\infty} \rangle) YX \right] - p \mathbb{E}_{X,Y} \left[\tilde{l}^{\infty'}(\langle -YX, \hat{\theta}^{\infty} \rangle) YX \right] \\
&= (1-2p) \mathbb{E}_{X,Y}\left[\tilde{l}^{\infty'}(\langle YX, \hat{\theta}^{\infty} \rangle) YX \right]. \label{eq:noisyfirstorderequationforinfinity}
\end{align}
And, thus we have that for each $i \in [d]$, 
\begin{align}
\mathbb{E}_{X,Y}\left[ \tilde{l}^{1'}(\langle YX, \hat{\theta}^{1} \rangle) YX_{i} \right] + p \mathbb{E}_{X,Y}[YX_{i}] = 0,
\end{align}
and 
\begin{align}
\mathbb{E}_{X,Y}\left[ \tilde{l}^{\infty'}(\langle YX, \hat{\theta}^{\infty} \rangle) YX_{i} \right] = 0.
\end{align}

In order to evaluate the efficacy of the gradient of the noisy $\alpha$-risk at recovering any data generating vector $\theta^{*} \in \mathbb{B}_{d}(r)$, 
we seek to upper bound $\|\nabla_{\theta} R_{1}^{p}(\theta^{*}) \|_{\infty}$ and $\|\nabla_{\theta} R_{\infty}^{p}(\theta^{*}) \|_{\infty}$.  
To this end, recall the Taylor-Lagrange equality~\citep{kline1998calculus} for a twice continuously differentiable $f: \mathbb{R} \to \mathbb{R}$,
\begin{align}
f(b) = f(a) + (b-a)f'(a) + \frac{(b-a)^{2}}{2} f''(c),
\end{align}
where $c \in [a,b]$. 

Let $i \in [d]$ be arbitrary, but fixed. 
From~\eqref{eq:gradientnoisyalphariskusefulform} (and the reductions from~\eqref{eq:noisyfirstorderequationforone} and~\eqref{eq:noisyfirstorderequationforinfinity}) we have that at $\theta^{*} \in \mathbb{B}_{d}(r)$
\begin{align} \label{eq:partialderivativeofonelossatthetastar}
\dfrac{\partial}{\partial \theta_{i}} R_{1}^{p}(\theta^{*}) = \mathbb{E}_{X,Y}\left[\tilde{l}^{1'}(\langle YX, \theta^{*} \rangle) YX_{i} \right] + p \mathbb{E}_{X,Y}[YX_{i}],
\end{align}
and 
\begin{align} \label{eq:partialderivativeofinfinitylossatthetastar}
\dfrac{\partial}{\partial \theta_{i}} R_{\infty}^{p}(\theta^{*}) = (1-2p) \mathbb{E}_{X,Y}\left[ \tilde{l}^{\infty'}(\langle YX, \theta^{*} \rangle) YX_{i} \right].
\end{align}

Using the Taylor-Lagrange equality, we let $f = \tilde{l}^{\alpha'}$ (where $\alpha = 1$ or $\infty$ for simplicity for the time being), and thus we have that for each $(X,Y) \in \mathcal{X} \times \{-1,+1\}$, 
\begin{align}
\tilde{l}^{\alpha'}(b_{(X,Y)}) = \tilde{l}^{\alpha'}(a_{(X,Y)}) + (b_{(X,Y)}-a_{(X,Y)})\tilde{l}^{\alpha''}(a_{(X,Y)}) + \frac{(b_{(X,Y)}-a_{(X,Y)})^{2}}{2}\tilde{l}^{\alpha'''}(c_{(X,Y)}^{\alpha}),
\end{align}
where $b_{(X,Y)} = \langle YX, \theta^{*} \rangle$ and $a_{(X,Y)} = \langle YX, \hat{\theta}^{\alpha} \rangle$, hence $c_{(X,Y)}^{\alpha} \in [\langle YX, \hat{\theta}^{\alpha} \rangle, \langle YX, \theta^{*} \rangle]$. 
Examining each of~\eqref{eq:partialderivativeofonelossatthetastar} (first term) and~\eqref{eq:partialderivativeofinfinitylossatthetastar} (without coefficient) with the Taylor-Lagrange equality, we have that 
\begin{align}
\nonumber &\mathbb{E}_{X,Y}\left[ \tilde{l}^{\alpha'}(\langle YX, \theta^{*} \rangle) YX_{i} \right] \\
&\quad\quad = \mathbb{E}_{X,Y} \left[ \left( \tilde{l}^{\alpha'}(a_{(X,Y)}) + (b_{(X,Y)}-a_{(X,Y)})\tilde{l}^{\alpha''}(a_{(X,Y)}) + \frac{(b_{(X,Y)}-a_{(X,Y)})^{2}}{2}\tilde{l}^{\alpha'''}(c_{(X,Y)}^{\alpha}) \right) Y X_{i} \right].
\end{align}

Thus, for $\alpha = 1$, we have that 
\begin{align}
&\dfrac{\partial}{\partial \theta_{i}} R_{1}^{p}(\theta^{*}) = \mathbb{E}_{X,Y}\left[\tilde{l}^{1'}(\langle YX, \theta^{*} \rangle) YX_{i} \right] + p \mathbb{E}_{X,Y}[YX_{i}] \\
\nonumber &= p \mathbb{E}_{X,Y}[YX_{i}] \\
& + \mathbb{E}_{X,Y}\left[ \left( \tilde{l}^{1'}(\langle YX, \hat{\theta}^{1} \rangle) + (\langle YX, \theta^{*} \rangle-\langle YX, \hat{\theta}^{1} \rangle)\tilde{l}^{1''}(\langle YX, \hat{\theta}^{1} \rangle) + \frac{(\langle YX, \theta^{*} \rangle - \langle YX, \hat{\theta}^{1} \rangle)^{2}}{2}\tilde{l}^{1'''}(c^{1}_{(X,Y)}) \right) Y X_{i} \right],
\end{align}
where $c^{1}_{(X,Y)} \in [\langle YX, \hat{\theta}^{1} \rangle, \langle YX, \theta^{*} \rangle]$. 
Noticing that $\mathbb{E}_{X,Y} \left[\tilde{l}^{1'}(\langle YX, \hat{\theta}^{1} \rangle) YX_{i} \right] + p \mathbb{E}_{X,Y}[YX_{i}] = 0$,  we thus obtain 
\begin{align} \label{eq:taylorlagrangedone}
\dfrac{\partial}{\partial \theta_{i}} R_{1}^{p}(\theta^{*}) = \mathbb{E}_{X,Y}\left[\left((\langle YX, \theta^{*} \rangle-\langle YX, \hat{\theta}^{1} \rangle)\tilde{l}^{1''}(\langle YX, \hat{\theta}^{1} \rangle) + \frac{(\langle YX, \theta^{*} \rangle - \langle YX, \hat{\theta}^{1} \rangle)^{2}}{2}\tilde{l}^{1'''}(c^{1}_{(X,Y)}) \right) Y X_{i} \right].
\end{align}
Using similar steps, we can also obtain 
\begin{align} \label{eq:taylorlagrangedinfinity}
\nonumber &\dfrac{\partial}{\partial \theta_{i}} R_{\infty}^{p}(\theta^{*}) \\
&= (1-2p) \mathbb{E}_{X,Y}\left[\left((\langle YX, \theta^{*} \rangle-\langle YX, \hat{\theta}^{\infty} \rangle)\tilde{l}^{\infty''}(\langle YX, \hat{\theta}^{\infty} \rangle) + \frac{(\langle YX, \theta^{*} \rangle - \langle YX, \hat{\theta}^{\infty} \rangle)^{2}}{2}\tilde{l}^{\infty'''}(c^{\infty}_{(X,Y)}) \right) Y X_{i} \right], 
\end{align}
where $c^{\infty}_{(X,Y)} \in [\langle YX, \hat{\theta}^{\infty} \rangle, \langle YX, \theta^{*} \rangle]$ and we note a difference between~\eqref{eq:taylorlagrangedone} and~\eqref{eq:taylorlagrangedinfinity}, i.e. the latter has the $1-2p$ coefficient.

Now, we consider $\left|\dfrac{\partial}{\partial \theta_{i}} R_{1}^{p}(\theta^{*})\right|$ and seek an upperbound. We have that from~\eqref{eq:taylorlagrangedone}
\begin{align}
\left| \dfrac{\partial}{\partial \theta_{i}} R_{1}^{p}(\theta^{*}) \right| &= \left| \mathbb{E}_{X,Y}\left[\left((\langle YX, \theta^{*} \rangle-\langle YX, \hat{\theta}^{1} \rangle)\tilde{l}^{1''}(\langle YX, \hat{\theta}^{1} \rangle) + \frac{(\langle YX, \theta^{*} \rangle - \langle YX, \hat{\theta}^{1} \rangle)^{2}}{2}\tilde{l}^{1'''}(c^{1}_{(X,Y)}) \right) Y X_{i} \right] \right| \\
&\leq \mathbb{E}_{X,Y}\left[ \left| \left((\langle YX, \theta^{*} \rangle-\langle YX, \hat{\theta}^{1} \rangle)\tilde{l}^{1''}(\langle YX, \hat{\theta}^{1} \rangle) + \frac{(\langle YX, \theta^{*} \rangle - \langle YX, \hat{\theta}^{1} \rangle)^{2}}{2}\tilde{l}^{1'''}(c^{1}_{(X,Y)}) \right) Y X_{i} \right| \right] \\
&= \mathbb{E}_{X,Y}\left[ \left|X_{i} \right| \left| (\langle YX, \theta^{*} \rangle-\langle YX, \hat{\theta}^{1} \rangle)\tilde{l}^{1''}(\langle YX, \hat{\theta}^{1} \rangle) + \frac{(\langle YX, \theta^{*} \rangle - \langle YX, \hat{\theta}^{1} \rangle)^{2}}{2}\tilde{l}^{1'''}(c^{1}_{(X,Y)}) \right| \right] \\
&\leq \mathbb{E}_{X,Y}\left[ \left|X_{i} \right| \left(  \left| (\langle YX, \theta^{*} \rangle-\langle YX, \hat{\theta}^{1} \rangle)\tilde{l}^{1''}(\langle YX, \hat{\theta}^{1} \rangle) \right| + \left| \frac{(\langle YX, \theta^{*} \rangle - \langle YX, \hat{\theta}^{1} \rangle)^{2}}{2}\tilde{l}^{1'''}(c^{1}_{(X,Y)}) \right| \right) \right],
\end{align}
where we used Jensen's inequality via the absolute value, the triangle inequality, and the fact that $|ab| = |a|\cdot|b|$.
Continuing,
\begin{align}
&\mathbb{E}_{X,Y}\left[ \left|X_{i} \right| \left(  \left| (\langle YX, \theta^{*} \rangle-\langle YX, \hat{\theta}^{1} \rangle)\tilde{l}^{1''}(\langle YX, \hat{\theta}^{1} \rangle) \right| + \left| \frac{(\langle YX, \theta^{*} \rangle - \langle YX, \hat{\theta}^{1} \rangle)^{2}}{2}\tilde{l}^{1'''}(c^{1}_{(X,Y)}) \right| \right) \right] \\
&= \mathbb{E}_{X,Y}\left[ \left|X_{i} \right| \left(  \left| \langle YX, \theta^{*} - \hat{\theta}^{1} \rangle  \right| \left|\tilde{l}^{1''}(\langle YX, \hat{\theta}^{1} \rangle) \right| + \frac{\langle YX, \theta^{*} - \hat{\theta}^{1} \rangle^{2}}{2} \left| \tilde{l}^{1'''}(c^{1}_{(X,Y)}) \right| \right) \right] \\
&\leq \mathbb{E}_{X,Y}\left[ \left|X_{i} \right| \left(  \left\| YX \right\| \left\| \theta^{*} - \hat{\theta}^{1} \right\| \left|\tilde{l}^{1''}(\langle YX, \hat{\theta}^{1} \rangle) \right| + \frac{\left\|YX \right\|^{2} \left\|\theta^{*} - \hat{\theta}^{1}\right\|^{2}}{2} \left| \tilde{l}^{1'''}(c^{1}_{(X,Y)}) \right| \right) \right],
\end{align}
where we used the Cauchy-Schwarz inequality on both inner products. 
Next, we use the observation that $X \in [0,1]^{d}$, and thus $\|X\| \leq \sqrt{d}$, and that $\theta^{*} - \theta \in \mathbb{B}_{d}(2r)$, for all $\theta \in \mathbb{B}_{d}(r)$.
Thus, we have that 
\begin{align}
&\mathbb{E}_{X,Y}\left[ \left|X_{i} \right| \left(  \left\| YX \right\| \left\| \theta^{*} - \hat{\theta}^{1} \right\| \left|\tilde{l}^{1''}(\langle YX, \hat{\theta}^{1} \rangle) \right| + \frac{\left\|YX \right\|^{2} \left\|\theta^{*} - \hat{\theta}^{1}\right\|^{2}}{2} \left| \tilde{l}^{1'''}(c^{1}_{(X,Y)}) \right| \right) \right] \\
&\leq  \mathbb{E}_{X,Y}\left[ \sqrt{d} 2r \left|\tilde{l}^{1''}(\langle YX, \hat{\theta}^{1} \rangle) \right| + \frac{4 d r^{2}}{2} \left| \tilde{l}^{1'''}(c^{1}_{(X,Y)}) \right| \right] \\
&= 2  d^{1/2} r \mathbb{E}_{X,Y}\left[ \left|\tilde{l}^{1''}(\langle YX, \hat{\theta}^{1} \rangle) \right| \right] + 2d r^{2} \mathbb{E}_{X,Y} \left[\left| \tilde{l}^{1'''}(c^{1}_{(X,Y)}) \right| \right].
\end{align}
Thus, we obtain that 
\begin{align} \label{eq:upperboundradiusone}
\left| \dfrac{\partial}{\partial \theta_{i}} R_{1}^{p}(\theta^{*}) \right| \leq 2d^{1/2} r \mathbb{E}_{X,Y}\left[ \left|\tilde{l}^{1''}(\langle YX, \hat{\theta}^{1} \rangle) \right| \right] + 2d r^{2} \mathbb{E}_{X,Y} \left[\left| \tilde{l}^{1'''}(c^{1}_{(X,Y)}) \right| \right].
\end{align}
For $\alpha =\infty$, the exact same steps go through, so we also have that 
\begin{align} \label{eq:upperboundradiusinfinity}
\left| \dfrac{\partial}{\partial \theta_{i}} R_{\infty}^{p}(\theta^{*}) \right| \leq (1-2p) \left(2d^{1/2} r \mathbb{E}_{X,Y}\left[ \left|\tilde{l}^{\infty''}(\langle YX, \hat{\theta}^{\infty} \rangle) \right| \right] + 2d r^{2} \mathbb{E}_{X,Y} \left[\left| \tilde{l}^{\infty'''}(c^{\infty}_{(X,Y)}) \right| \right]\right).
\end{align}
Considering $\mathbb{E}_{X,Y}\left[ \left|\tilde{l}^{1''}(\langle YX, \hat{\theta}^{1} \rangle) \right| \right]$ in~\eqref{eq:upperboundradiusone}, 
we let 
\begin{align}
z_{1}^{*} = \argmax\limits_{z \in \{\langle yx, \hat{\theta}^{1} \rangle | (x,y) \in \mathcal{X} \times \{-1,+1\} \}} \left|\tilde{l}^{1''}(z) \right|,
\end{align}
and we thus obtain $\mathbb{E}_{X,Y}\left[ \left|\tilde{l}^{1''}(\langle YX, \hat{\theta}^{1} \rangle) \right| \right] \leq \left|\tilde{l}^{1''}(z_{1}^{*}) \right|$,
where we note that $z_{1}^{*} > \ln{(2+\sqrt{3})}$ by assumption.
Similarly, considering $\mathbb{E}_{X,Y}\left[ \left|\tilde{l}^{\infty''}(\langle YX, \hat{\theta}^{\infty} \rangle) \right| \right]$
in~\eqref{eq:upperboundradiusinfinity}, we let 
\begin{align}
z_{\infty}^{*} = \argmax\limits_{z \in \{\langle yx, \hat{\theta}^{\infty} \rangle | (x,y) \in \mathcal{X} \times \{-1,+1\} \}} \left|\tilde{l}^{\infty''}(z) \right|,
\end{align}
and we thus obtain $\mathbb{E}_{X,Y}\left[ \left|\tilde{l}^{\infty''}(\langle YX, \hat{\theta}^{\infty} \rangle) \right| \right] \leq \left|\tilde{l}^{\infty''}(z_{\infty}^{*}) \right|$,
where $z_{\infty}^{*} \geq z_{1}^{*} > \ln{(2+\sqrt{3})}$ again by assumption.

Indeed, since $\left| \tilde{l}^{1'''}(z) \right|$ and $\left| \tilde{l}^{\infty'''}(z) \right|$ are monotonically decreasing for $z > \ln{(2+\sqrt{3})}$ we also have that 
\begin{align}
\mathbb{E}_{X,Y} \left[\left| \tilde{l}^{1'''}(c^{1}_{(X,Y)}) \right| \right] \leq \left| \tilde{l}^{1'''}(z_{1}^{*}) \right|,
\end{align} 
and 
\begin{align}
\mathbb{E}_{X,Y} \left[\left| \tilde{l}^{\infty'''}(c^{\infty}_{(X,Y)}) \right| \right] \leq \left| \tilde{l}^{\infty'''}(z_{\infty}^{*}) \right|.
\end{align} 

Next, we invoke Lemma~\ref{lemma:secondderivativemarginalphalossbound}, i.e., that for all $z \in \mathbb{R}$, 
\begin{align}
\left| \dfrac{d^{2}}{dz^{2}} \tilde{l}^{1}(z) \right| \geq \left| \dfrac{d^{2}}{dz^{2}} \tilde{l}^{\infty}(z) \right|,
\end{align}
and Lemma~\ref{lemma:thirdderivativemarginalphalossbound}, i.e., that for $z > \ln{2}$, 
\begin{align}
\left|\dfrac{d^{3}}{dz^{3}} \tilde{l}^{\infty}(z) \right| \leq \left|\dfrac{d^{3}}{dz^{3}} \tilde{l}^{1}(z) \right|.
\end{align}
Thus, we have that (also by monotonically decreasing) 
\begin{align}
\left| \tilde{l}^{\infty''}(z_{\infty}^{*}) \right| \leq \left| \tilde{l}^{1''}(z_{1}^{*}) \right|,
\end{align}
and 
\begin{align}
\left| \tilde{l}^{\infty'''}(z_{\infty}^{*}) \right| \leq \left| \tilde{l}^{1'''}(z_{1}^{*}) \right|.
\end{align}

Hence, since the bounds on~\eqref{eq:upperboundradiusone} and~\eqref{eq:upperboundradiusinfinity} hold for all $i \in [d]$, we obtain the desired result.

\subsection{Proof of Theorem~\ref{thm:lowerbound}} \label{appendix:uniformlowerbound}
The strategy of the proof is to upperbound and lowerbound $\|\nabla_{\theta} R_{\alpha}^{p}(\theta) - \mathbb{E}[X^{[1]}]\|$. For the lowerbound, we use the reverse triangle inequality. Combining the upper and lowerbounds, we then rewrite the bounded expressions to induce a lowerbound on $\|\nabla_{\theta} R_{\alpha}^{p}(\theta)\|$ itself.
For notational convenience, we used $\gamma = C_{p,r\sqrt{d},\alpha}$ in the main body.

Now, for each $y\in\{-1,1\}$, let $X^{[y]}$ denote the random variable having the distribution of $X$ conditioned on $Y=y$. 
We further assume that $X^{[1]} \stackrel{\textnormal{d}}{=} -X^{[-1]}$, $\mathbb{E}[X^{[1]}]\neq0$, namely, a skew-symmetric distribution. 
Examining the gradient of the noisy $\alpha$-risk (under the skew-symmetric distribution), we have that ($P_{1} = \mathbb{P}[Y=1]$)
\begin{align}
\nabla_{\theta} R_{\alpha}^{p}(\theta) &= \mathbb{E}_{X,Y}\bigg[\bigg( p Y g_{\theta}(-YX)^{1-1/\alpha} g_{\theta}(YX)  - (1-p)Y g_{\theta}(YX)^{1-1/\alpha} g_{\theta}(-YX) \bigg) X \bigg] \\
\nonumber &= P_{1} \mathbb{E}_{X^{[1]}} \left[\left(p g_{\theta}(-X^{[1]})^{1-1/\alpha} g_{\theta}(X^{[1]}) - (1-p) g_{\theta}(X^{[1]})^{1-1/\alpha} g_{\theta}(-X^{[1]})  \right) X^{[1]} \right] \\
& \quad\quad + P_{-1} \mathbb{E}_{X^{[-1]}} \left[\left(-p g_{\theta}(X^{[-1]})^{1-\frac{1}{\alpha}} g_{\theta}(-X^{[-1]}) + (1-p) g_{\theta}(-X^{[-1]})^{1-\frac{1}{\alpha}} g_{\theta}(X^{[-1]}) \right) X^{[-1]} \right] \\
&= \mathbb{E}_{X^{[1]}} \left[\left(p g_{\theta}(-X^{[1]})^{1-1/\alpha} g_{\theta}(X^{[1]}) - (1-p) g_{\theta}(X^{[1]})^{1-1/\alpha} g_{\theta}(-X^{[1]})  \right) X^{[1]} \right].
\end{align}
First considering the upperbound on $\|\nabla_{\theta} R_{\alpha}^{p}(\theta) - \mathbb{E}[X^{[1]}]\|$, we have that 
\begin{align}
&\|\mathbb{E}_{X^{[1]}} \left[\left(p g_{\theta}(-X^{[1]})^{1-1/\alpha} g_{\theta}(X^{[1]}) - (1-p) g_{\theta}(X^{[1]})^{1-1/\alpha} g_{\theta}(-X^{[1]})  \right) X^{[1]} \right] - \mathbb{E}[X^{[1]}] \| \\
&= \|\mathbb{E}_{X^{[1]}} \left[\left(p g_{\theta}(-X^{[1]})^{1-1/\alpha} g_{\theta}(X^{[1]}) - (1-p) g_{\theta}(X^{[1]})^{1-1/\alpha} g_{\theta}(-X^{[1]}) - 1  \right) X^{[1]} \right] \| \\
&= \|\mathbb{E}_{X^{[1]}} \left[\left(p g_{\theta}(-X^{[1]})^{1-1/\alpha} g_{\theta}(X^{[1]}) - p - (1-p) g_{\theta}(X^{[1]})^{1-1/\alpha} g_{\theta}(-X^{[1]}) - (1-p)  \right) X^{[1]} \right] \| \\ 
&= \|\mathbb{E}_{X^{[1]}} \left[\left(p \left( g_{\theta}(-X^{[1]})^{1-1/\alpha} g_{\theta}(X^{[1]}) - 1 \right) - (1-p) \left( g_{\theta}(X^{[1]})^{1-1/\alpha} g_{\theta}(-X^{[1]}) - 1 \right)  \right) X^{[1]} \right] \| \\ 
\label{eq:lowerboundupperbound} &\leq \mathbb{E}_{X^{[1]}} \left[ \left| p \left( g_{\theta}(-X^{[1]})^{1-1/\alpha} g_{\theta}(X^{[1]}) - 1 \right) - (1-p) \left( g_{\theta}(X^{[1]})^{1-1/\alpha} g_{\theta}(-X^{[1]}) - 1 \right)  \right| \| X^{[1]} \| \right],
\end{align}
where we used Jensen's inequality due to the convexity of the norm.


We now consider the term in absolute value above, which we rewrite for simplicity as 
\begin{align}
f_{\alpha,p}(x) := p \left(\sigma(-x)^{1-\frac{1}{\alpha}} \sigma(x) - 1 \right) - (1-p)\left(\sigma(x)^{1-\frac{1}{\alpha}} \sigma(-x) - 1 \right).
\end{align}
We examine
\begin{align}
\dfrac{\partial}{\partial \alpha} f_{\alpha,p}(x) = (1-p) \frac{\sigma(x)^{1-\frac{1}{\alpha}} \log{\left(e^{-x} + 1 \right)}}{(e^{x} + 1)\alpha^{2}} - p \frac{\sigma(-x)^{1-\frac{1}{\alpha}} \log{(e^{x} + 1)}}{(e^{-x} + 1)\alpha^{2}},
\end{align}
which follows from the fact that 
\begin{align}
\dfrac{\partial}{\partial \alpha} \sigma(x)^{1-\frac{1}{\alpha}} \sigma(-x) = \frac{\sigma(x)^{1-\frac{1}{\alpha}} \log{(\sigma(x))}}{(e^{x} + 1)\alpha^{2}}.
\end{align}
Considering $x > 0$ and $0 < p < 1/2$, one can show that
\begin{align}
\dfrac{\partial}{\partial \alpha} f_{\alpha,p}(x) > 0
\end{align}
is equivalent to 
\begin{align} \label{eq:lowerboundmonotonictyassumption}
\left(\frac{1}{p} - 1 \right) > e^{\frac{x}{\alpha}} \frac{\log{(e^{x} + 1)}}{\log{(e^{-x} + 1)}},
\end{align}
and it can be shown that the term on the right-hand-side is monotonically increasing in $x > 0$ for $\alpha \in [1,\infty]$.
Hence choosing $x > 0$ (i.e., $r > 0$) small enough ensures that $f_{\alpha,p}(x)$ is monotonically increasing in $\alpha \in [1,\infty]$.
Furthermore, since $\dfrac{\partial}{\partial x} f_{\alpha,p}(x) > 0$ for $x > 0$, $p < 1/2$, and $\alpha \in [1,\infty]$, and
$X \in [0,1]^{d}$, $\theta \in \mathbb{B}_{d}(r)$, we have by the Cauchy-Schwarz inequality (i.e., $\langle \theta, X \rangle \leq r \sqrt{d}$) that 
\begin{align}
&p \left( g_{\theta}(-X^{[1]})^{1-1/\alpha} g_{\theta}(X^{[1]}) - 1 \right) - (1-p) \left( g_{\theta}(X^{[1]})^{1-1/\alpha} g_{\theta}(-X^{[1]}) - 1 \right) \\
\label{eq:lowerboundcera} & \quad\quad\quad\quad \leq p \left(\sigma(-r \sqrt{d})^{1-\frac{1}{\alpha}} \sigma(r \sqrt{d}) - 1 \right) - (1-p)\left(\sigma(r \sqrt{d})^{1-\frac{1}{\alpha}} \sigma(-r\sqrt{d}) - 1 \right) =: C_{p, r \sqrt{d}, \alpha}.
\end{align}
Note that $C_{p, r\sqrt{d}, 1} := \sigma(r\sqrt{d}) - p > 0$ (since $r\sqrt{d} > 0$ and $p < 1/2$), and $C_{p, r\sqrt{d}, \infty} := (1-2p)(1-\sigma'(r\sqrt{d}))$, 
and by the restriction on $r>0$~\eqref{eq:lowerboundmonotonictyassumption}, we have that for $\alpha \in (1,\infty)$
\begin{align}
0 < C_{p, r\sqrt{d}, 1} \leq C_{p, r\sqrt{d}, \alpha} \leq C_{p, r\sqrt{d}, \infty}.
\end{align}
Thus, considering the upperbound on $\|\nabla_{\theta} R_{\alpha}^{p}(\theta) - \mathbb{E}[X^{[1]}]\|$ in~\eqref{eq:lowerboundupperbound}, 
we have that 
\begin{align} \label{eq:lowerboundupperfinal}
\|\nabla_{\theta} R_{\alpha}^{p}(\theta) - \mathbb{E}[X^{[1]}]\| \leq C_{p,r\sqrt{d},\alpha} \mathbb{E}_{X^{[1]}}[ \| X^{[1]} \|],
\end{align}
where $C_{p,r\sqrt{d},\alpha}$ is given in~\eqref{eq:lowerboundcera}.

Now, considering a lowerbound on $\|\nabla_{\theta} R_{\alpha}^{p}(\theta) - \mathbb{E}[X^{[1]}]\|$, via the reverse triangle inequality we have that 
\begin{align}
\|\nabla_{\theta} R_{\alpha}^{p}(\theta) - \mathbb{E}[X^{[1]}]\| \geq \|\mathbb{E}[X^{[1]}]\| - \|\nabla_{\theta} R_{\alpha}^{p}(\theta)\|.
\end{align}
Combining this with our derived upperbound~\eqref{eq:lowerboundupperfinal}, we have that 
\begin{align}
C_{p,r\sqrt{d},\alpha} \mathbb{E}[\|X^{[1]}\|] \geq \|\mathbb{E}[X^{[1]}]\| - \|\nabla_{\theta} R_{\alpha}^{p}(\theta)\|.
\end{align}
Rewriting, we have that 
\begin{align}
\|\nabla_{\theta} R_{\alpha}^{p}(\theta)\| \geq \|\mathbb{E}[X^{[1]}]\| - C_{p,r\sqrt{d},\alpha} \mathbb{E}[\|X^{[1]}\|].
\end{align}
Using our observation earlier regarding the monotonically increasing property of $C_{p,r\sqrt{d},\alpha}$ in $\alpha \in [1,\infty]$,
we can write that 
\begin{align}
\nonumber \|\nabla_{\theta} R_{\alpha}^{p}(\theta)\| &\geq \|\mathbb{E}[X^{[1]}]\| - C_{p,r\sqrt{d},\alpha} \mathbb{E}[\|X^{[1]}\|] \\
&\geq \|\mathbb{E}[X^{[1]}]\| - (1-2p) \left(1 - \sigma'(r\sqrt{d}) \right) \mathbb{E}[\|X^{[1]}\|] >0,
\end{align}
which is nonnegative by distributional assumption on the skew-symmetric distribution itself, namely we assume that 
\begin{align}
(1-2p)(1-\sigma'(r\sqrt{d})) < \frac{\|\mathbb{E}(X^{[1]})\|}{\mathbb{E}(\|X^{[1]}\|)}.
\end{align}


\section{Further Experimental Results and Details} \label{appendix:experiments}



\subsection{Boosting Experiments} \label{appendix:boostingexperiments}
\subsubsection{Long-Servedio}
\paragraph{Dataset} The Long-Servedio dataset is a synthetic dataset which was first suggested in~\citep{long2010random} and also considered in~\citep{cheamanunkul2014non}. 
The dataset has input $x \in \mathbb{R}^{21}$ (which \textit{differs} from the two-dimensional theoretical version in Section~\ref{sec:lsdataset}) with binary features $x_{i} \in \{-1,+1\}$ and label $y \in \{-1,+1\}$. 
Each instance is generated as follows. First, the label $y$ is chosen to be $-1$ or $+1$ with
equal probability. Given $y$, the features $x_{i}$ are chosen according
to the following mixture distribution:
\begin{itemize}
    \item Large margin: with probability $1/4$, we choose $x_{i} = y$ for all $1 \leq i \leq 21$.
    \item Pullers: with probability $1/4$, we choose $x_{i} = y$ for $1 \leq i \leq 11$ and $x_{i} = -y$ for $12 \leq i \leq 21$.
    \item Penalizers: with probability $1/2$, we choose $5$ random coordinates from the first $11$ and $6$ from the last $10$ to be equal to the label $y$. The remaining $10$ coordinates are equal to $-y$.
\end{itemize}

\begin{figure}
    \centering
    \includegraphics[width=.7\linewidth]{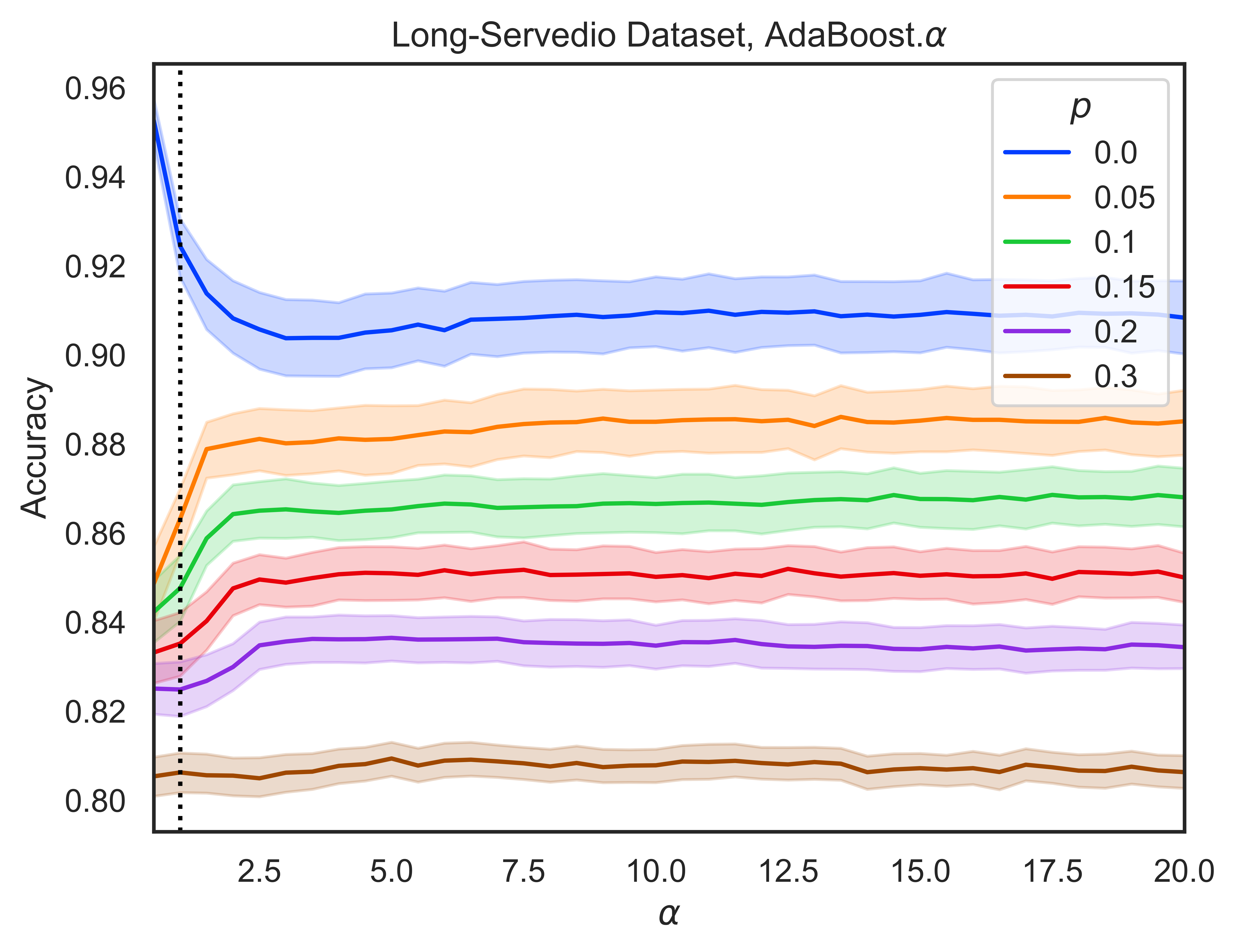}
    \caption{Accuracy of AdaBoost.$\alpha$ on the Long-Servedio dataset. We see that accuracy levels off as $\alpha$ increases, implying that tuning $\alpha$ can be as simple as choosing $\alpha \approx 5$.
    The thresholding behavior is supported by Figure~\ref{fig:LSoptimizationlandscape}
        }
    \label{fig:ls_accuracy_alpha}
\end{figure}
\begin{figure}
    \centering
    \includegraphics[width=0.7\linewidth]{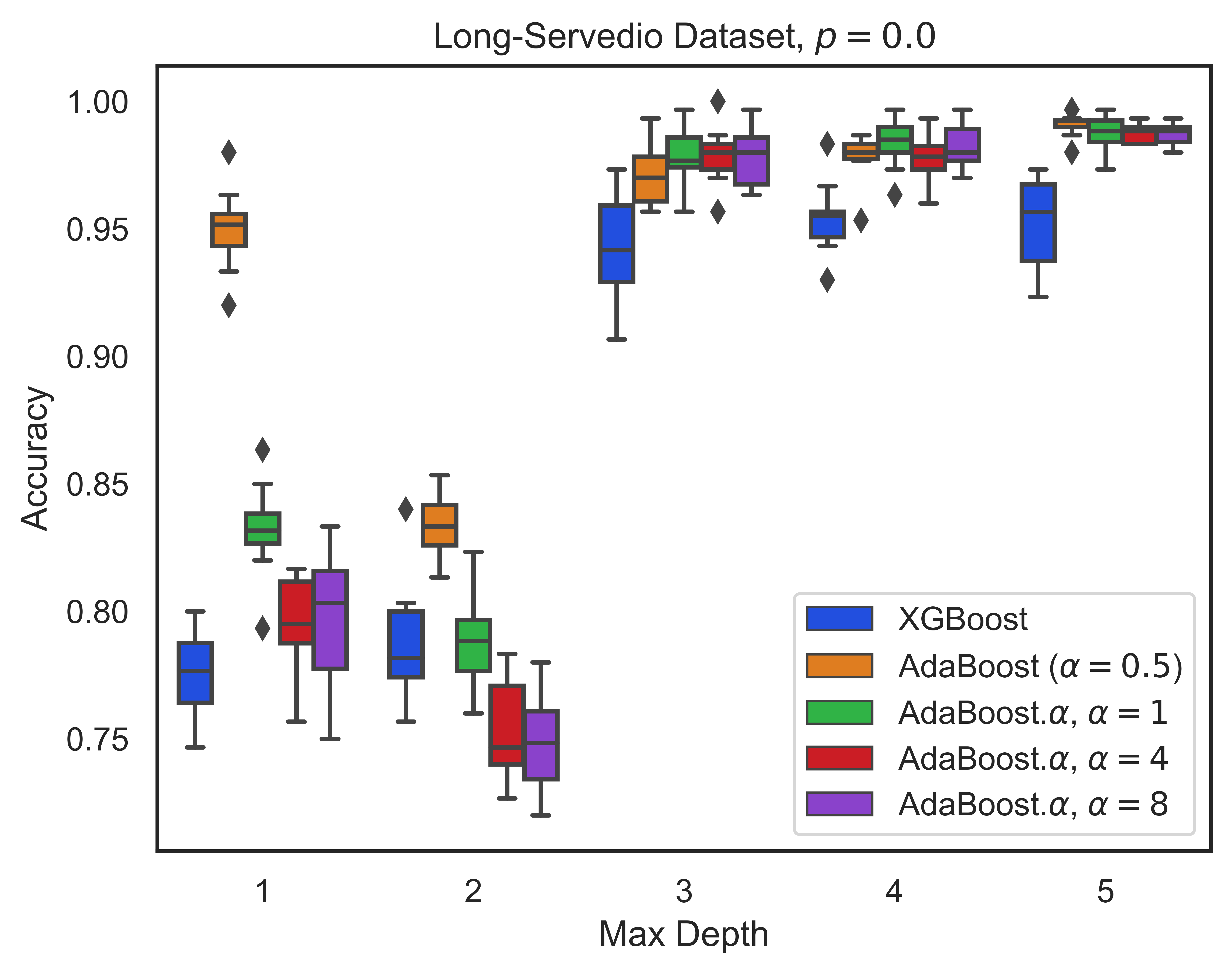}
    \caption{Clean test accuracy of various models on the Long-Servedio dataset with no added label noise. Models trained for 100 iterations. Vanilla AdaBoost performs well here, but note that Figure~\ref{fig:ls_iterations_0} implies that with a larger number of iterations, $\alpha=1,2$ would have similar performance.  }
    \label{fig:ls_boxplot_0_0}
\end{figure}
\begin{figure}
    \centering
    \includegraphics[width=0.7\linewidth]{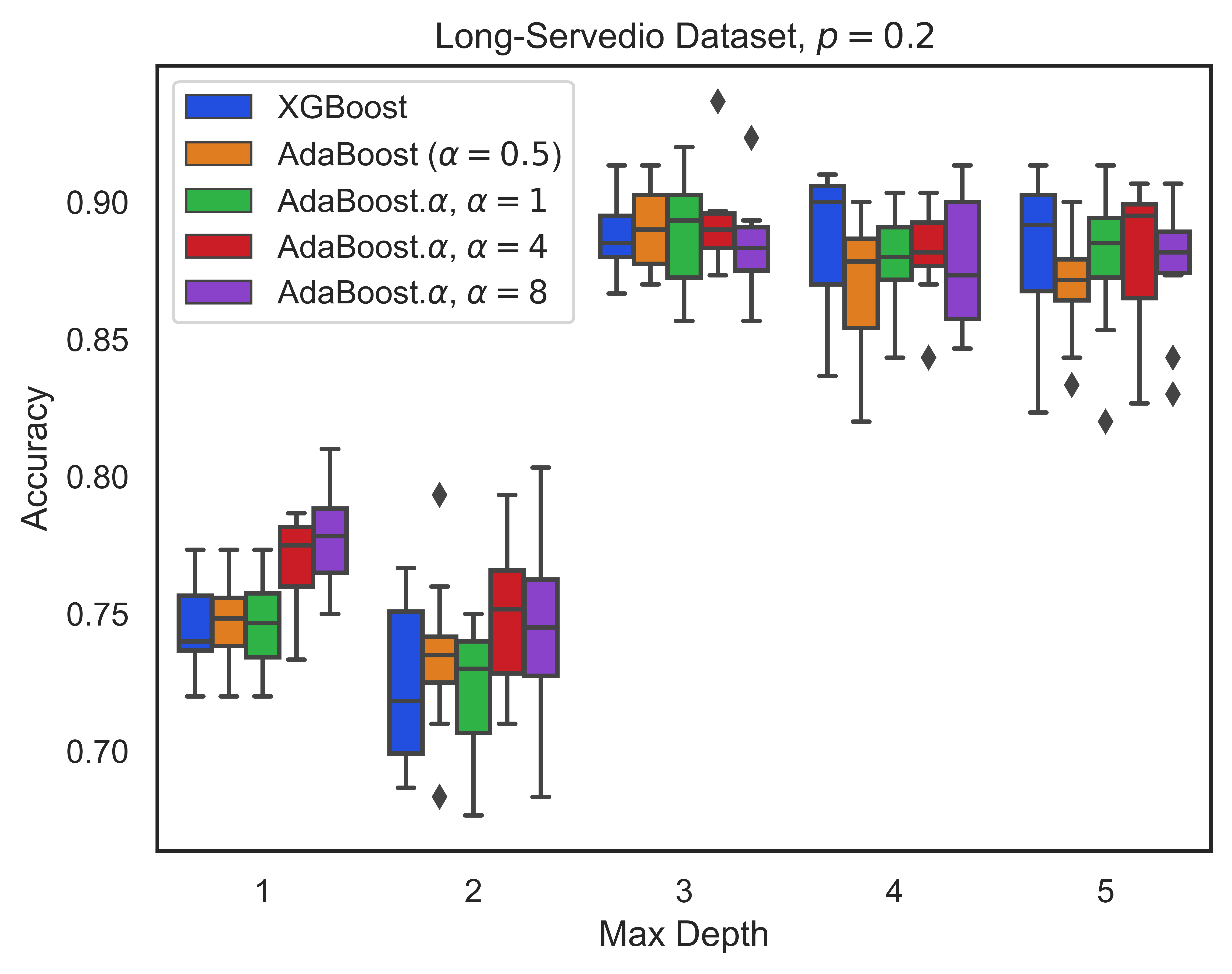}
    \caption{Clean test accuracy vs the depth of weak learners on the Long-Servedio dataset with SLN. 100 iterations of boosting. We see that that for low depth weak learners, $\alpha>1$ outperforms convex $\alpha$ in terms of clean classification accuracy. This benefit diminishes with growing depth.}
    \label{fig:ls_boxplot_0_2}
\end{figure}
\begin{figure}
    \centering
    \includegraphics[width=0.7\linewidth]{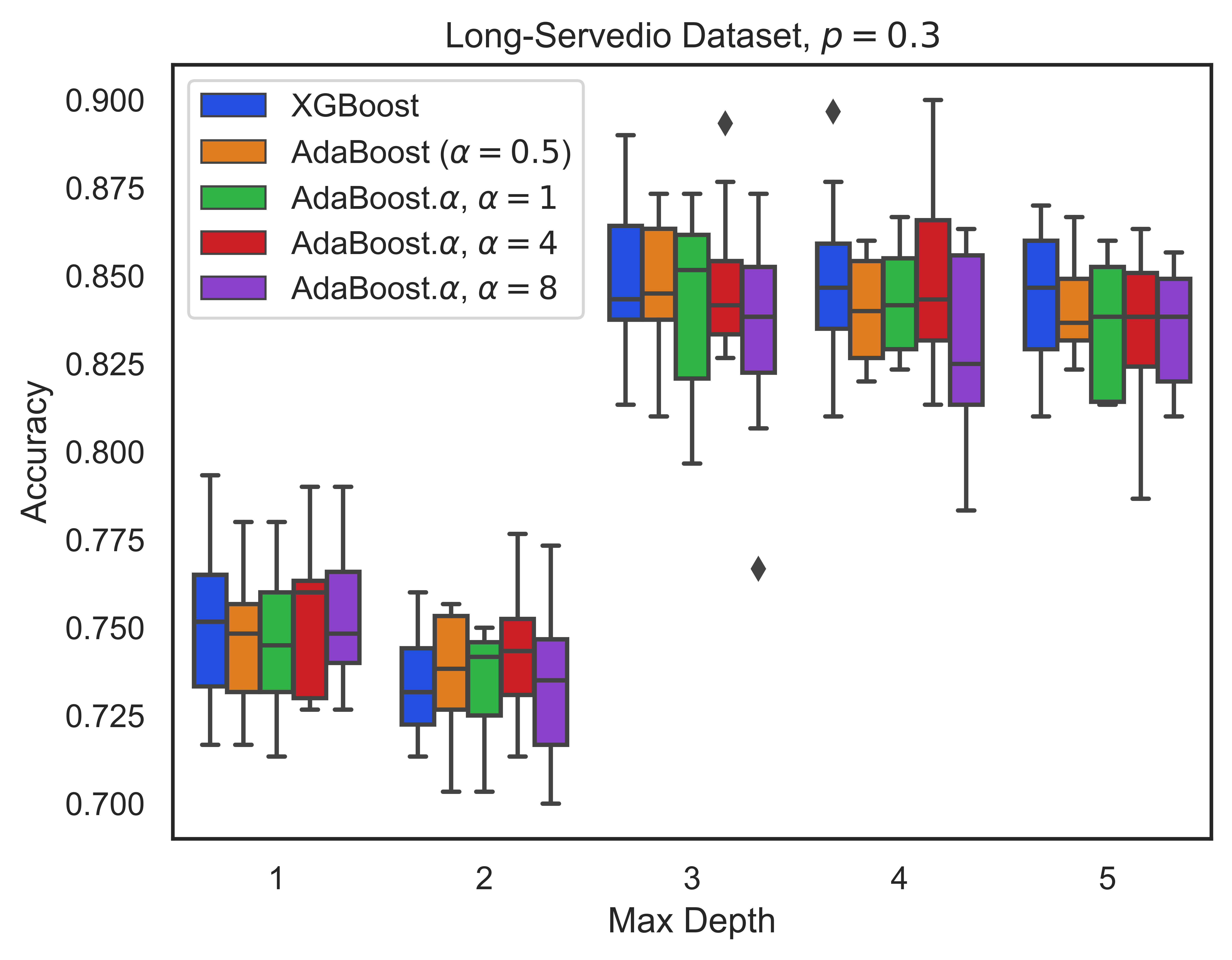}
    \caption{Clean test accuracy vs the depth of weak learners on the Long-Servedio dataset with SLN. 100 iterations of boosting. In this higher noise setting, $\alpha$ has little effect on the clean test accuracy.}
    \label{fig:ls_boxplot_0_3}
\end{figure}
\begin{figure}
    \centering
    \includegraphics[width=0.7\linewidth]{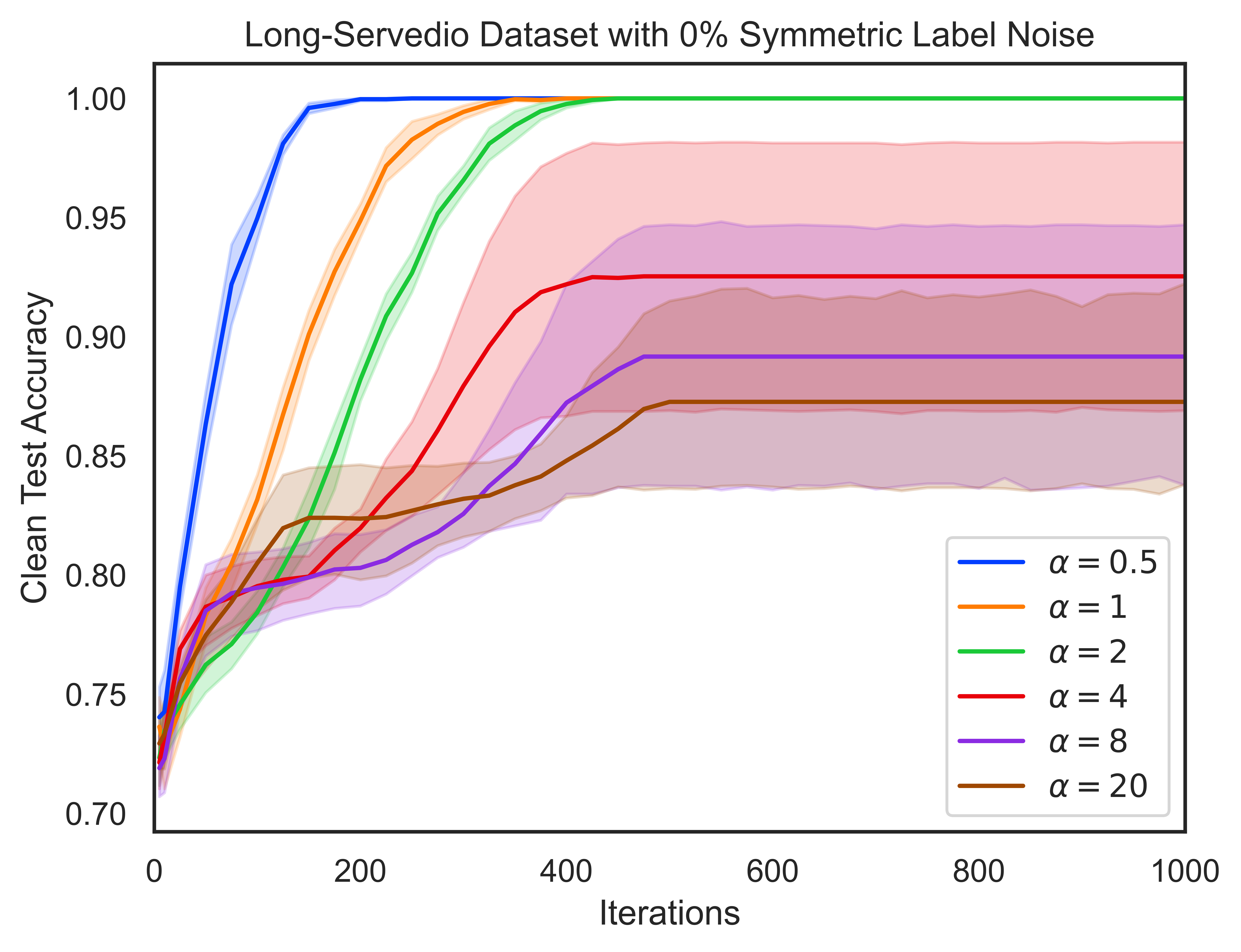}
    \caption{Clean test accuracy of AdaBoost.$\alpha$ on the Long-Servedio dataset with no added label noise. In this zero noise setting, convex $\alpha$ values perform well. Performance gains slow with increasing $\alpha$ which corresponds to increasing non-convexity in the optimization.}
    \label{fig:ls_iterations_0}
\end{figure}
\begin{figure}
    \centering
    \includegraphics[width=0.7\linewidth]{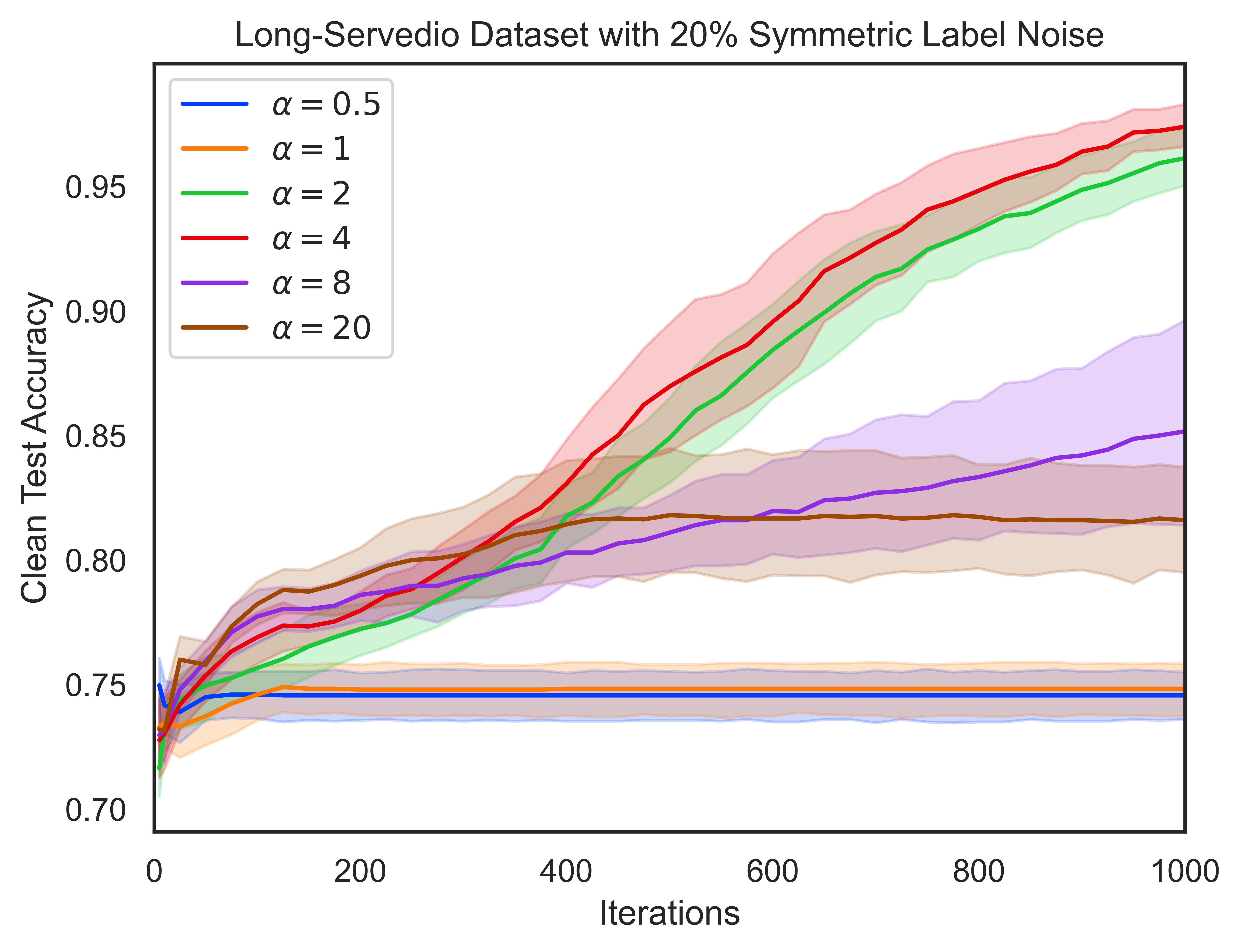}
    \caption{Accuracy of AdaBoost.$\alpha$ on the Long-Servedio dataset. We see that convex $\alpha<1$, is unable to learn by increasing the number of weak learners, likely because it is getting stuck trying to learn on large-margin example. $\alpha>1$ continues to learn with increasing iterations, though growth is slower than in smaller noise levels.}
    \label{fig:ls_iterations_2}
\end{figure}
\begin{figure}
    \centering
    \includegraphics[width=0.7\linewidth]{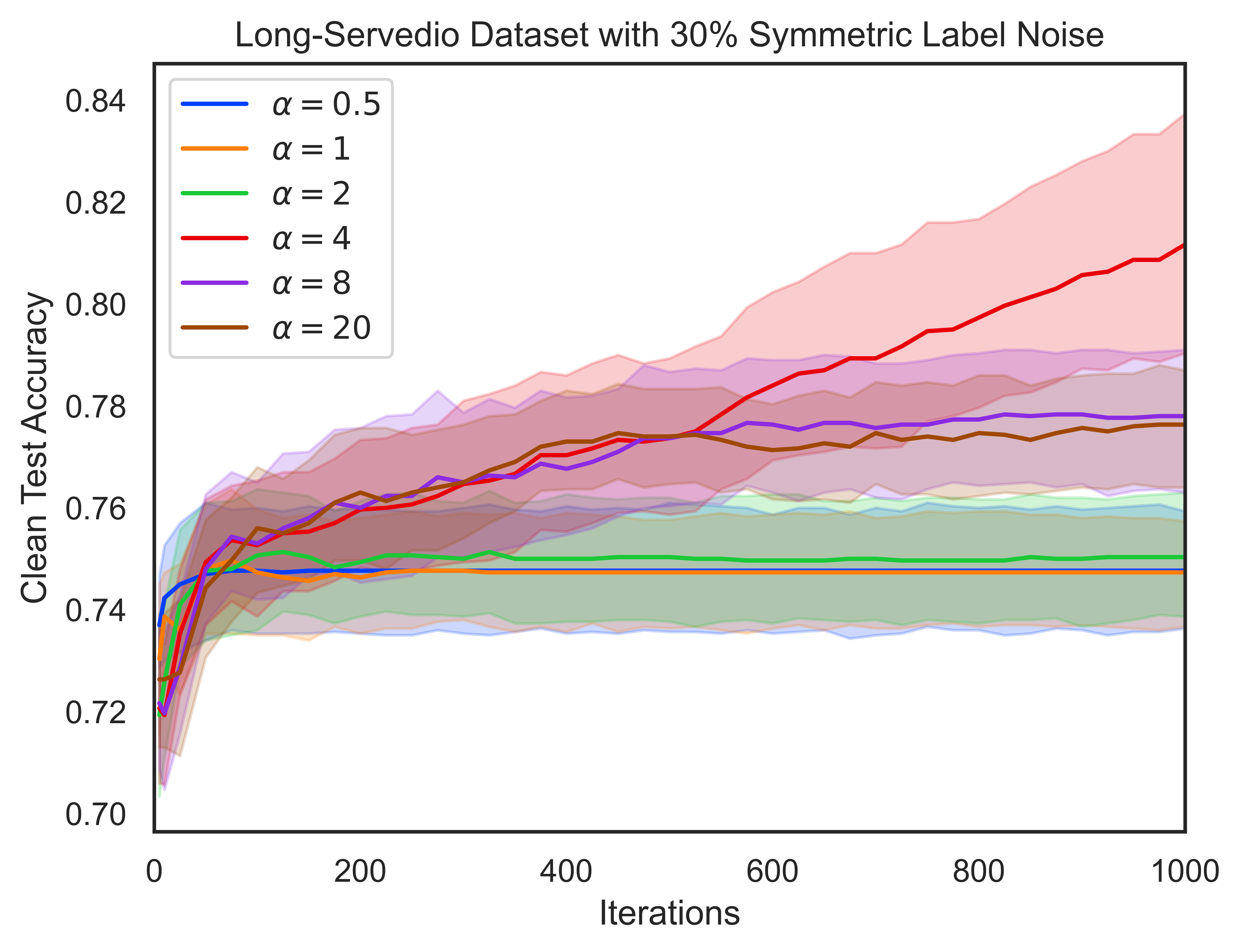}
    \caption{Accuracy of AdaBoost.$\alpha$ on the Long-Servedio dataset. We see that convex $\alpha<1$, is unable to learn by increasing the number of weak learners, likely because it is getting stuck trying to learn on large-margin example. $\alpha>1$ continues to learn with increasing iterations, though growth is slower than in smaller noise levels.}
    \label{fig:ls_iterations_3}
\end{figure}

\subsubsection{Breast Cancer}
\paragraph{Dataset} The Wisconsin Breast Cancer dataset
\citep{breastcancerdataset}
is a widely used medical dataset in the boosting community. 
\begin{figure}
    \centering
    \includegraphics[width=0.7\linewidth]{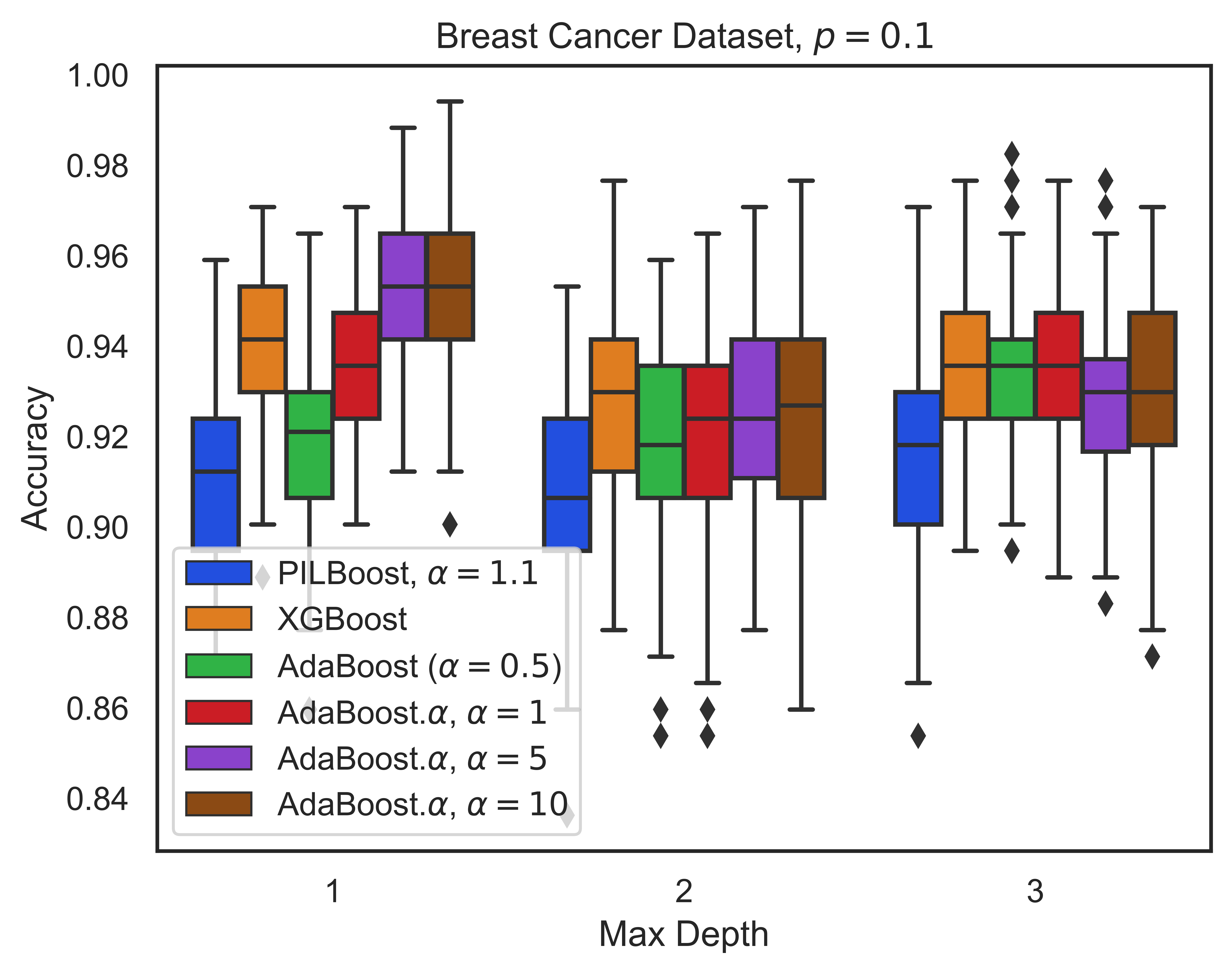}
    \caption{Accuracy of various models on the breast cancer dataset. We see that with low depth (and thus low complexity) weak learners, the use of a non-convex loss, namely $\alpha>1$, permits some gains in accuracy. These diminish for more complex weak learners.}
    \label{fig:bc_depth}
\end{figure}
\begin{figure}
    \centering
    \includegraphics[width=0.7\linewidth]{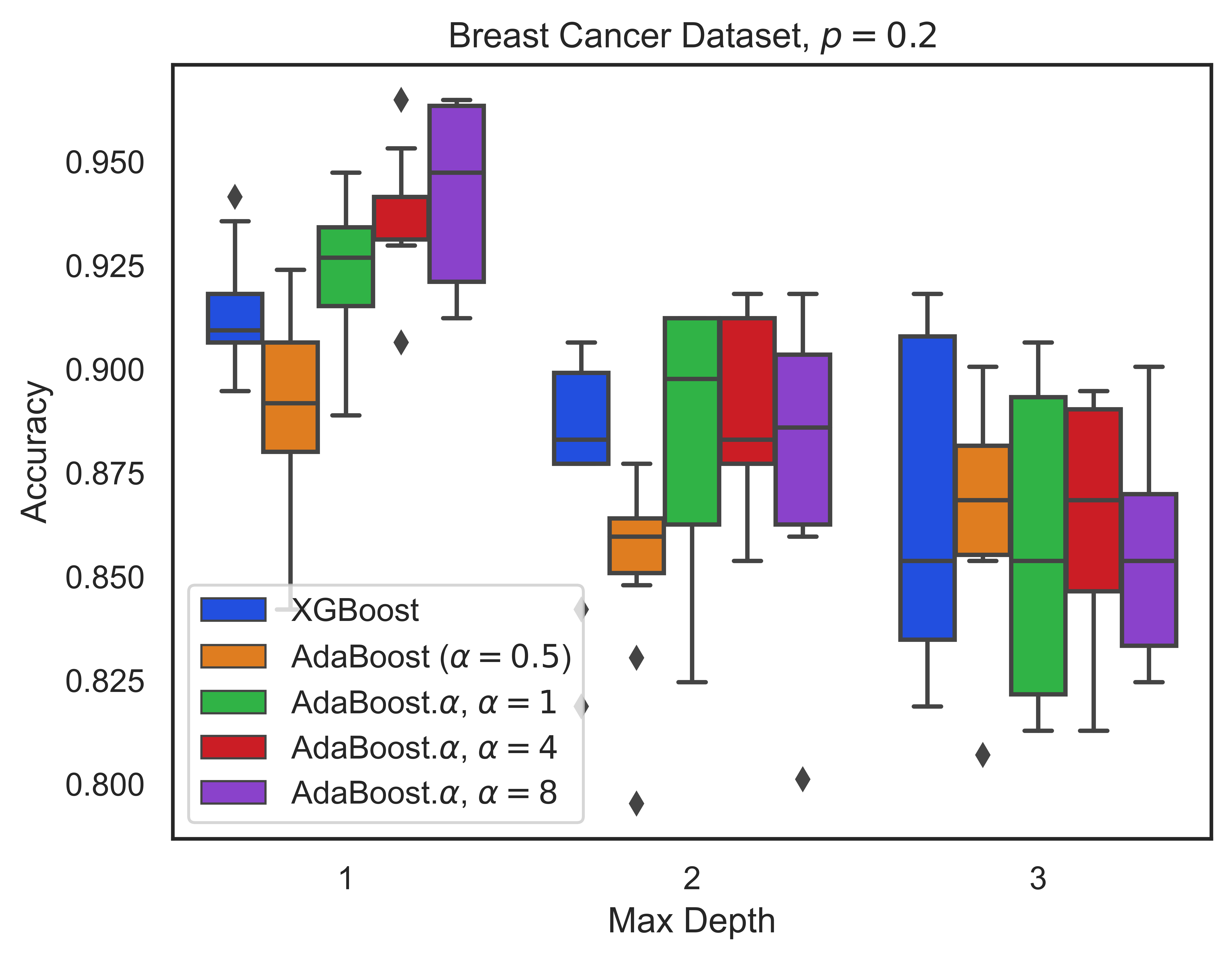}
    \caption{Accuracy of various models on the breast cancer dataset. We see that with low depth (and thus low complexity) weak learners, the use of a non-convex loss, namely $\alpha>1$, permits some gains in accuracy. These diminish for more complex weak learners.}
    \label{fig:bc_depth_0_2}
\end{figure}
\begin{figure}
    \centering
    \includegraphics[width=0.7\linewidth]{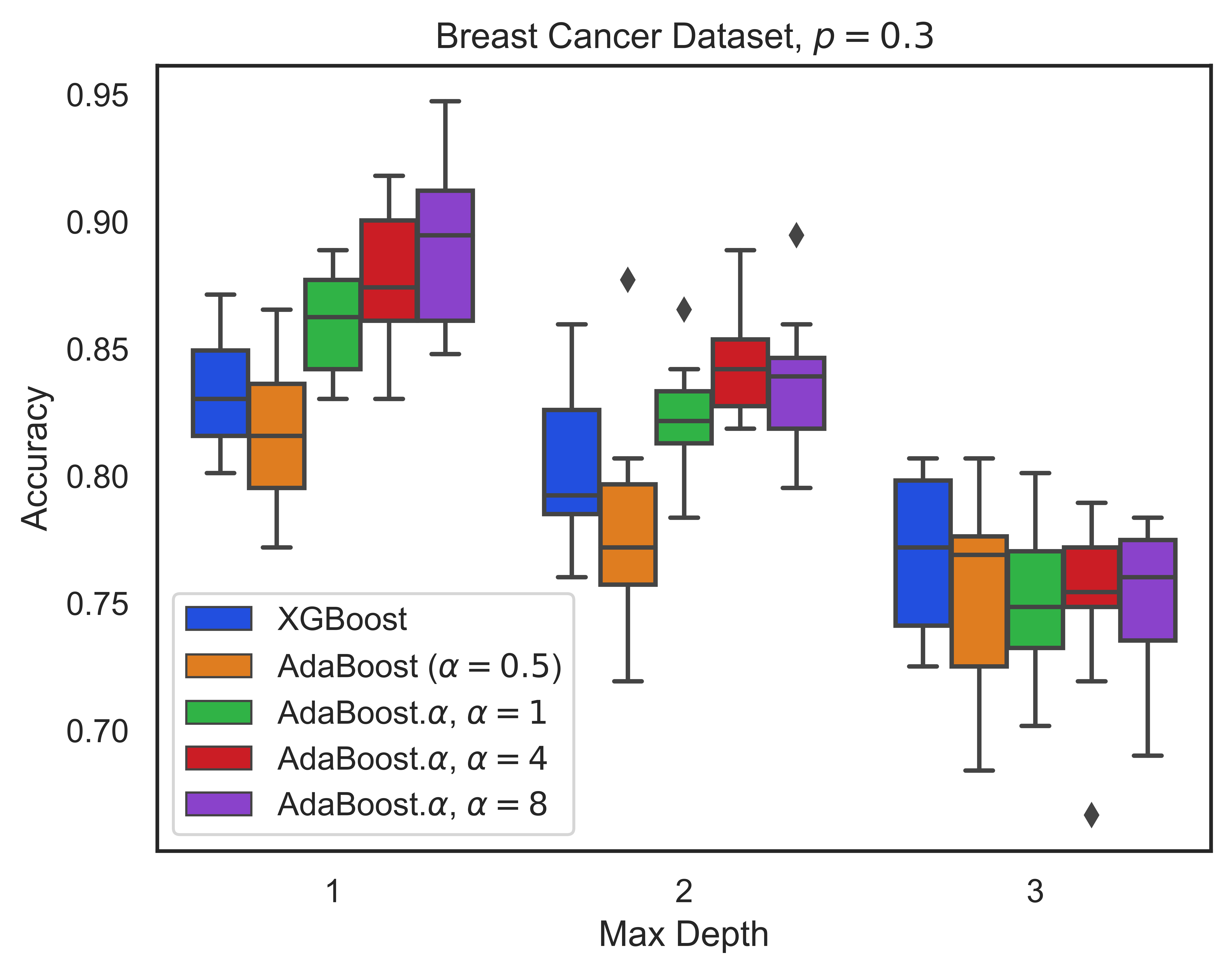}
    \caption{Accuracy of various models on the breast cancer dataset. We see that with low depth (and thus low complexity) weak learners, the use of a non-convex loss, namely $\alpha>1$, permits some gains in accuracy. These diminish for more complex weak learners.}
    \label{fig:bc_depth_0_3}
\end{figure}
\begin{figure}
    \centering
    \includegraphics[width=0.7\linewidth]{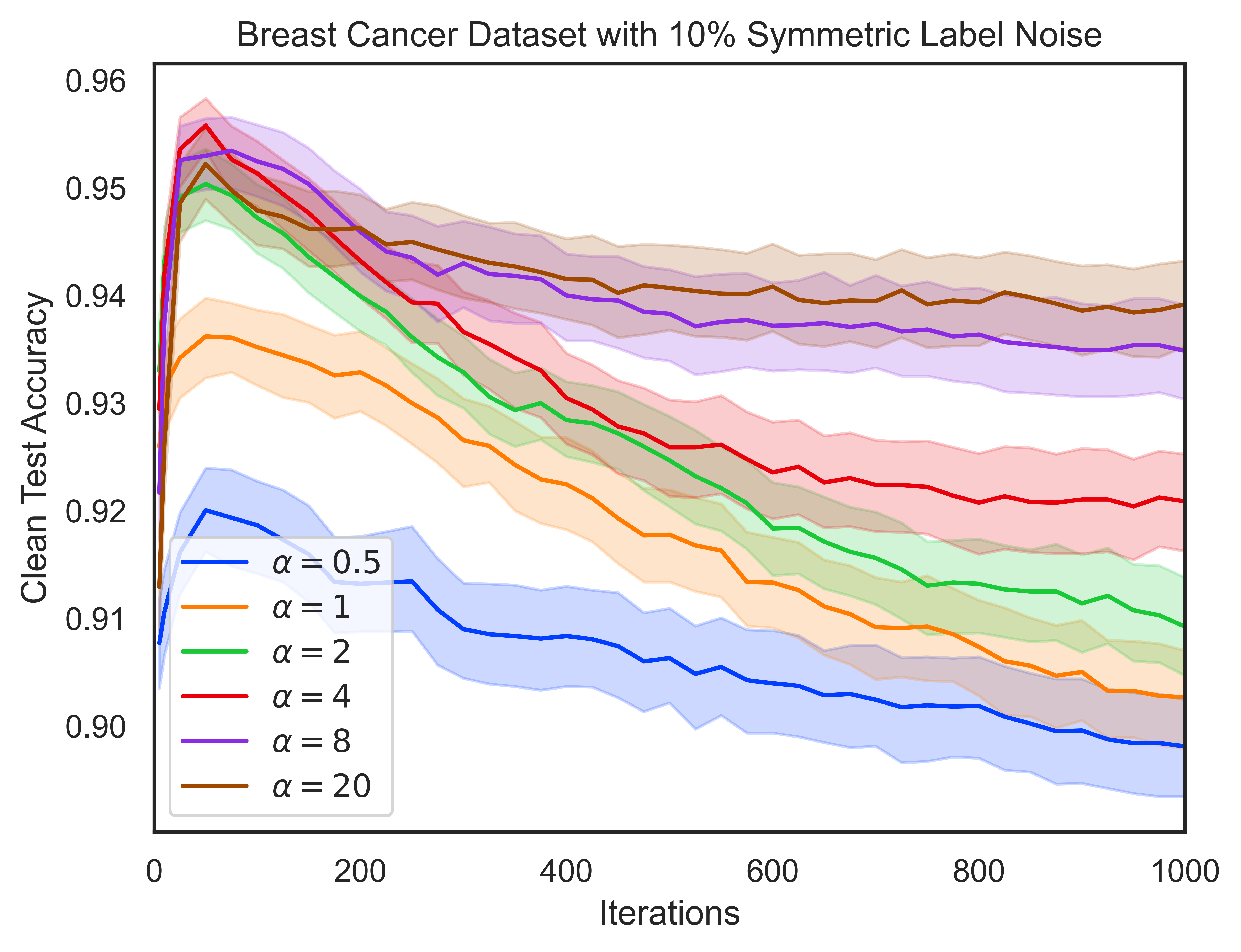}
    \caption{Accuracy of AdaBoost.$\alpha$ on the Wisconsin Breast Cancer dataset. Non-convex $\alpha$ values perform significantly better than convex $\alpha$ values. Unlike the Long-Servedio dataset, convex $\alpha$ values are still able to learn as the iterations increase, though there appears to be some overfitting.}
    \label{fig:bc_iterations}
\end{figure}
\begin{figure}
    \centering
    \includegraphics[width=0.7\linewidth]{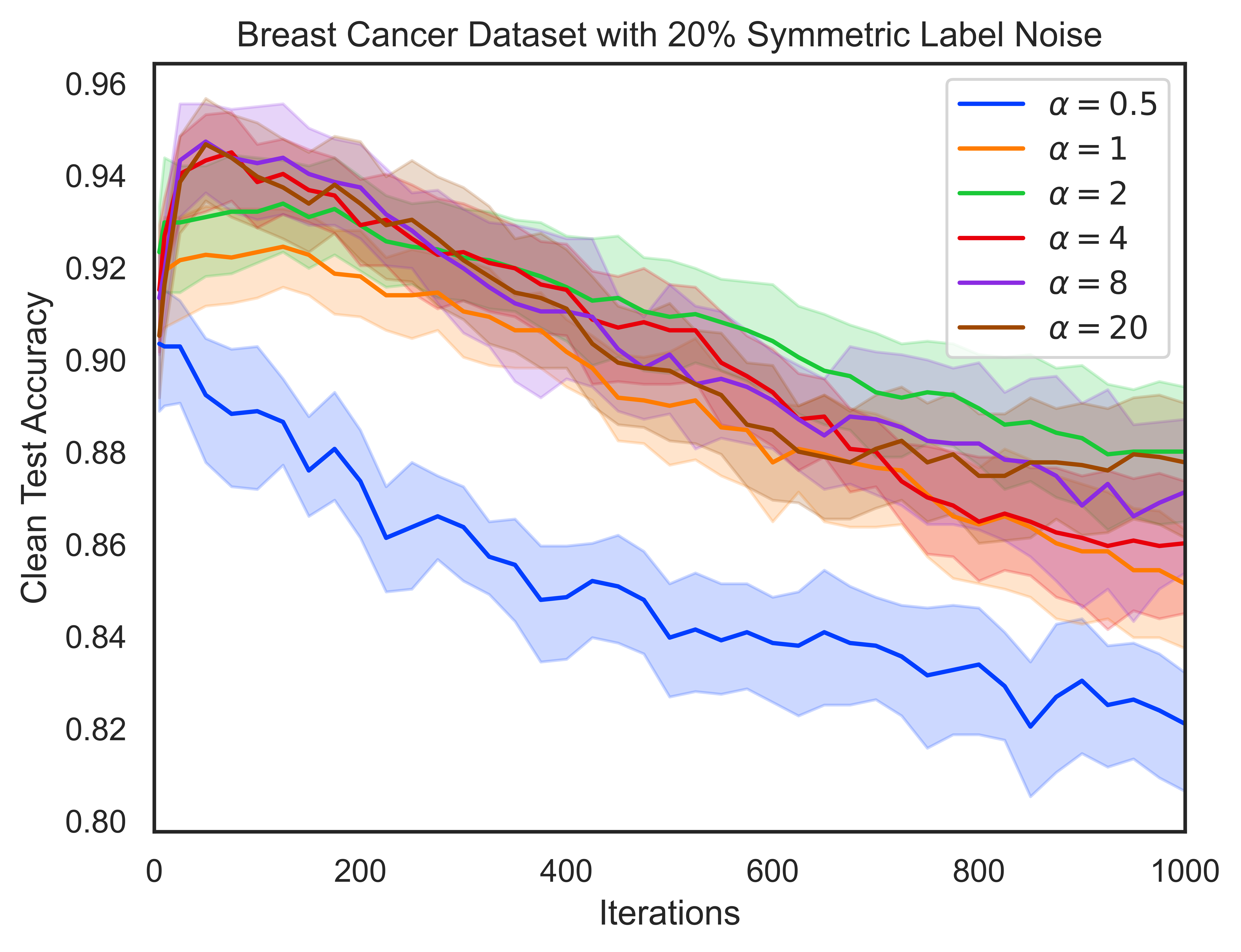}
    \caption{Accuracy of AdaBoost.$\alpha$ on the Wisconsin Breast Cancer dataset. Non-convex $\alpha$ values perform significantly better than convex $\alpha$ values. Unlike the Long-Servedio dataset, convex $\alpha$ values are still able to learn as the iterations increase, though there appears to be some overfitting.}
    \label{fig:bc_iterations_0_2}
\end{figure}
\begin{figure}
    \centering
    \includegraphics[width=0.7\linewidth]{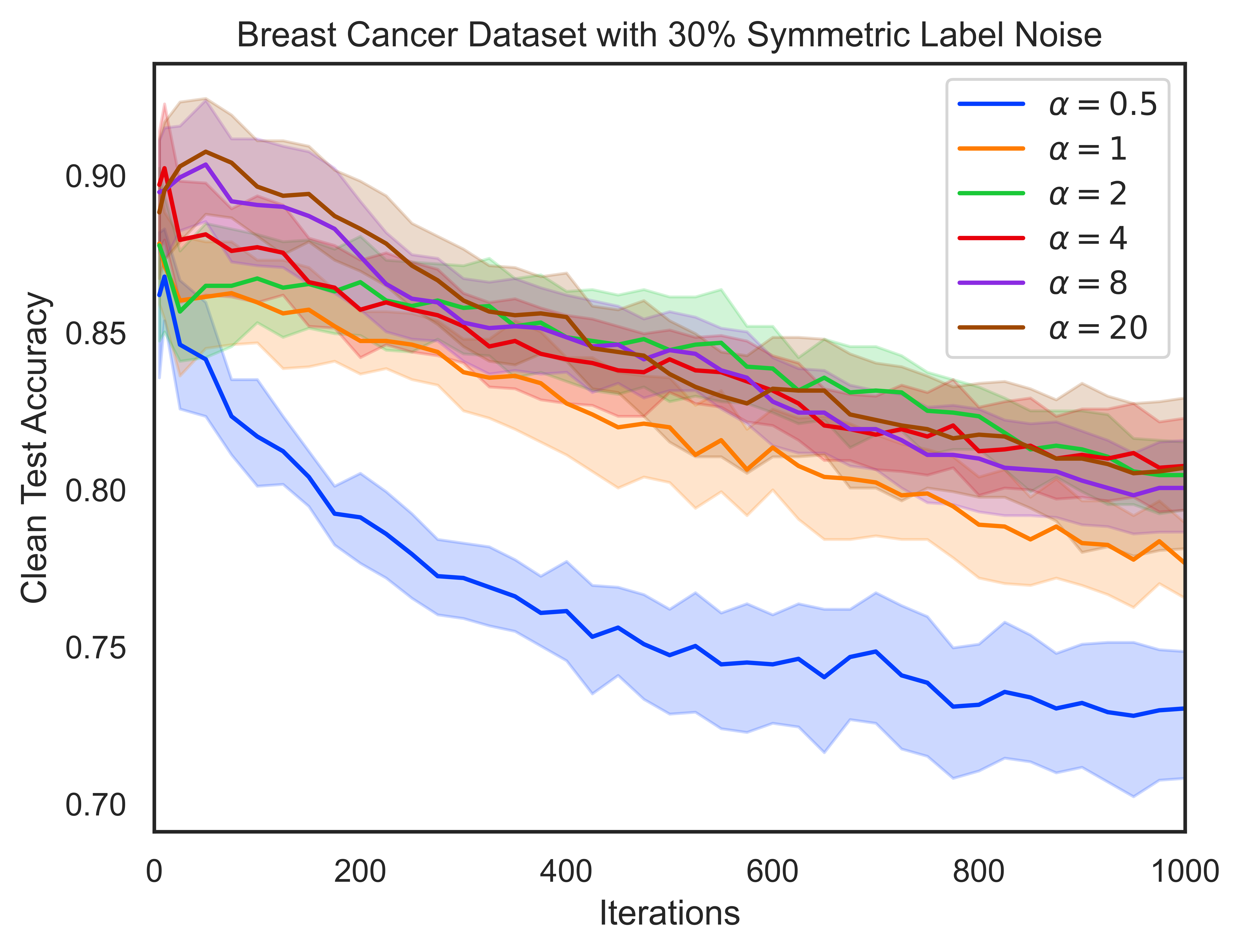}
    \caption{Accuracy of AdaBoost.$\alpha$ on the Wisconsin Breast Cancer dataset. Non-convex $\alpha$ values perform significantly better than convex $\alpha$ values. Unlike the Long-Servedio dataset, convex $\alpha$ values are still able to learn as the iterations increase, though there appears to be some overfitting.}
    \label{fig:bc_iterations_0_3}
\end{figure}

\subsection{Logistic Model Experiments} \label{appendix:logisticexperiments}
\subsubsection{GMM Setup}
\label{sec:synth_setup}
\paragraph{Dataset}
In order to evaluate the effect of generalizing log-loss with 
$\alpha$-loss in the logistic model, we first analyze its performance learning on a two-dimensional dataset with 
Gaussian class-conditional distributions. The data was distributed as follows: \begin{align*}
    Y=1: & X\sim\mathcal{N}[\mu_1, \sigma^2\mathbb{I}], \\ 
    Y=-1: & X\sim\mathcal{N}[\mu_2, \sigma^2\mathbb{I}],
\end{align*}
where $\mu_i \in \mathbb{R}^2$, $\sigma\in\mathbb{R}$, and $\mathbb{I}$ is the $2\times 2$ identity matrix. 

We evaluate this simple 
two-dimensional equivariant case for reasons of interpretability and visualization.
Additionally, we tune the prior of $Y$ in order to control the 
level of class imbalance in the dataset to demonstrate that $\alpha$-loss works well even under class imbalance conditions.
\begin{figure}[h]
    \centering
    \includegraphics[width=0.7\linewidth]{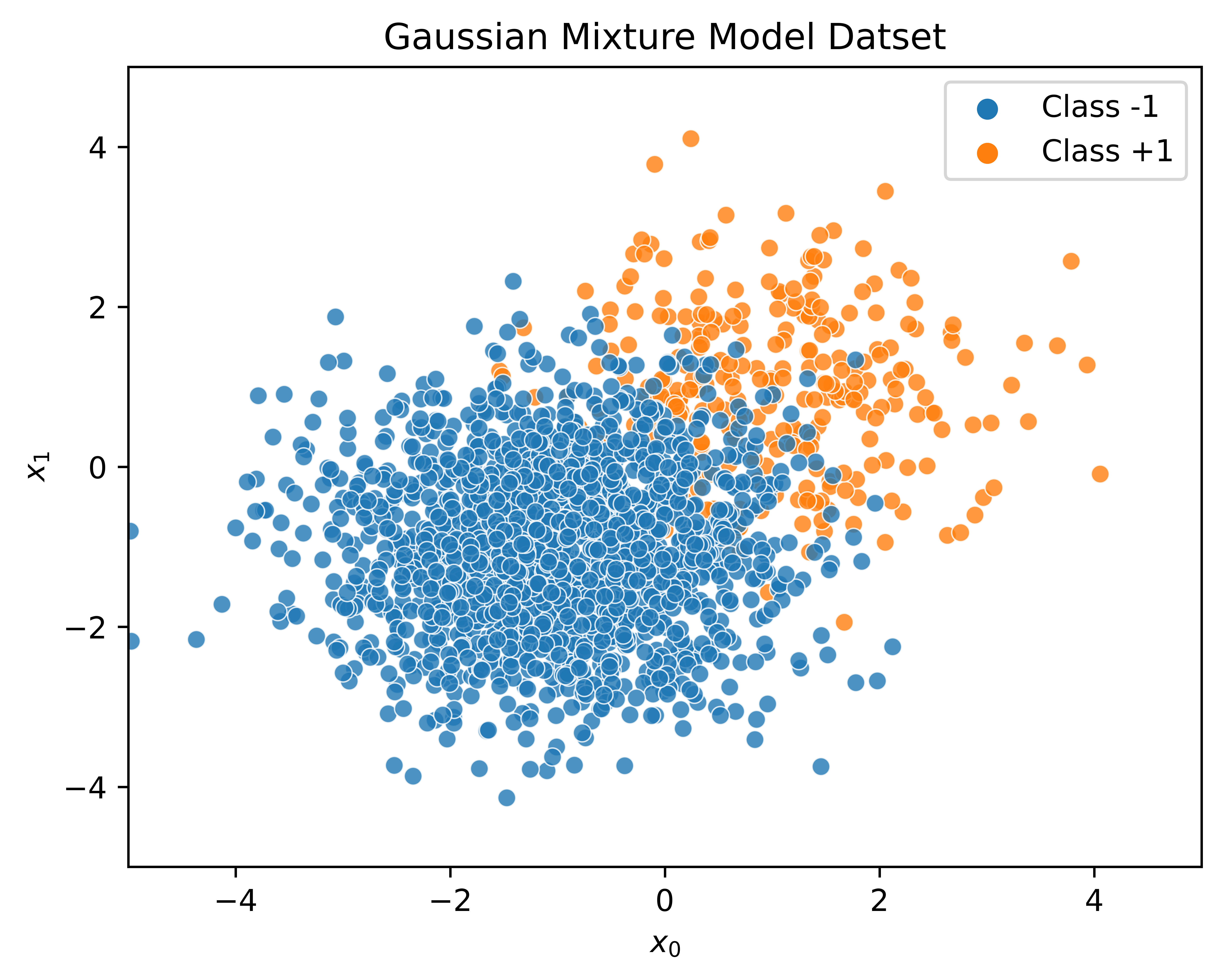}
    \caption{Sample dataset generated with Gaussian class-conditional distributions with $P(Y=1)=0.14$ and $\mu_1 = [1,1]^T, \mu_2=[-1,-1]^T$; we use a spherical covariance with $\sigma=1$ for both classes.
    }
    \label{fig:GMM}
\end{figure}
Symmetric label noise is then added to this clean data. 

Under this scenario, the Bayes-optimal classifier is
linear because the variances of the two modes are equal and the features are uncorrelated. We can see this directly through the likelihood ratio test.
Thus, we can compare the
separating line given by training with $\alpha$-loss on the logistic model and the optimal classifier.
\paragraph{Model}
A logistic model was trained on noisy data, then tested on clean
data from the same data generating distribution. Models were
trained over a grid of possible noise values, $p\in[0,0.4]$, and $\alpha\in[0.5,10]$.
Learning rate was selected as $1\mathrm{e}{-2}$ and models were 
trained until convergence.
For each pair, 30 models were trained with different
noise seeds, and metrics were then averaged across models. 
\subsubsection{COVID-19 Logistic Setup}
\paragraph{Model}
For better accuracy and a simpler, interpretable logistic 
model, we restrict the model to predict using a smaller set
of 8 features; we choose these as the features with the 
largest odds ratio on the validation set and they are enumerated
in Table~\ref{table:features}.
The learning rate was selected as $1\mathrm{e}{-3}$ and models were
trained until convergence.
Models were trained over a grid of possible noise values, $p$, and $\alpha$ values, 
$(p, \alpha) \in [0,0.15] \times [0.6, 3]$.
For each pair $(p,\alpha)$, 5 models were trained with 
a different random noise seed and results were averaged across 
these samples for every metric.
\paragraph{Baseline}
Because the underlying true statistics are not available as a ground truth, a ``clean'' model is selected for a baseline comparison. We select this model to be one with no added noise ($p=0$) and log-loss ($\alpha=1$).
Because log-loss ($\alpha=1$) is calibrated, the ``clean'' posterior distribution will be the distribution with the smallest KL divergence to the data-generating distribution.
\begin{figure}
    \centering
    \includegraphics[width=0.7\linewidth]{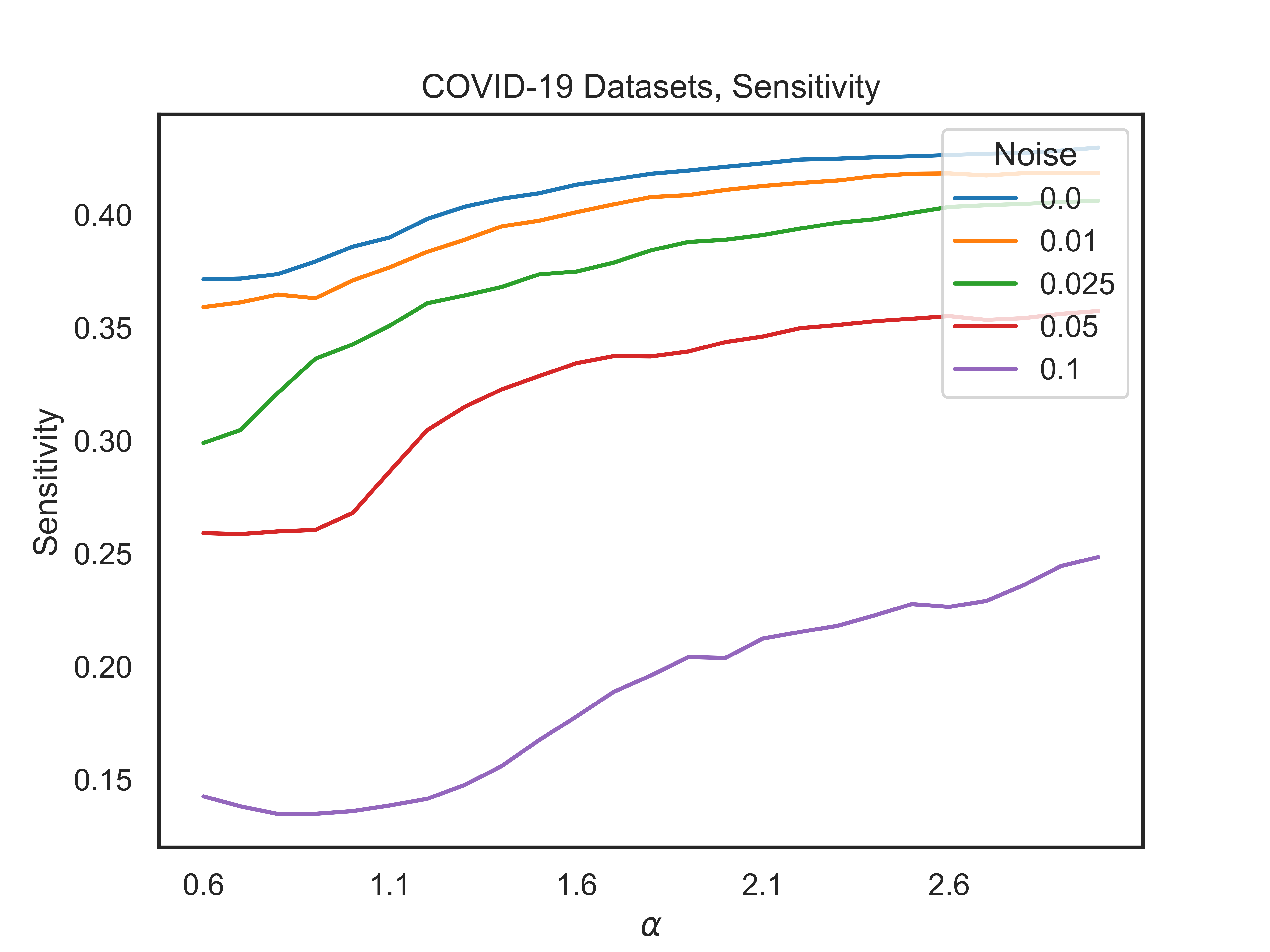}
    \caption{Sensitivity of the classifiers trained on noisy COVID-19 data. We see that $\alpha>1$ yields gains in sensitivity. This is important to note as the MSE results do not come at the cost of sensitivity. Recall that $\text{sensitivity} = \frac{\text{TP}}{\text{TP} + \text{FN}}$.}
    \label{fig:covid_sensitivity}
\end{figure}


\end{appendices}

\end{document}